\documentclass{article} % For LaTeX2e
\usepackage{iclr2025_conference,times}
\usepackage{float}
\usepackage{marvosym}
% Optional math commands from https://github.com/goodfeli/dlbook_notation.
%%%%% NEW MATH DEFINITIONS %%%%%

\usepackage{amsmath,amsfonts,bm}

% Mark sections of captions for referring to divisions of figures

% Highlight a newly defined term

% Figure reference, lower-case.

% Figure reference, capital. For start of sentence

% Section reference, lower-case.

% Section reference, capital.

% Reference to two sections.

% Reference to three sections.

% Reference to an equation, lower-case.
\def\eqref#1{equation~\ref{#1}}
% Reference to an equation, upper case

% A raw reference to an equation---avoid using if possible

% Reference to a chapter, lower-case.

% Reference to an equation, upper case.

% Reference to a range of chapters

% Reference to an algorithm, lower-case.

% Reference to an algorithm, upper case.

% Reference to a part, lower case

% Reference to a part, upper case

\def\1{\bm{1}}

% Random variables

% rm is already a command, just don't name any random variables m

% Random vectors

% Elements of random vectors

% Random matrices

% Elements of random matrices

% Vectors

% Elements of vectors

% Matrix

% Tensor
\DeclareMathAlphabet{\mathsfit}{\encodingdefault}{\sfdefault}{m}{sl}
\SetMathAlphabet{\mathsfit}{bold}{\encodingdefault}{\sfdefault}{bx}{n}

% Graph

% Sets

% Don't use a set called E, because this would be the same as our symbol
% for expectation.

% Entries of a matrix

% entries of a tensor
% Same font as tensor, without \bm wrapper

% The true underlying data generating distribution

% The empirical distribution defined by the training set

% The model distribution

% Stochastic autoencoder distributions

 % Laplace distribution

% Wolfram Mathworld says $L^2$ is for function spaces and $\ell^2$ is for vectors
% But then they seem to use $L^2$ for vectors throughout the site, and so does
% wikipedia.

 % See usage in notation.tex. Chosen to match Daphne's book.

\usepackage{hyperref}
\usepackage{xcolor}
\usepackage{xpatch}
\hypersetup{
    colorlinks,
    linkcolor={red},
    citecolor={blue},
    urlcolor={red!50!black}
}
\usepackage{url}
 \usepackage{amsmath}
 \usepackage{amsthm}
 \usepackage{amssymb}
 \usepackage{algorithm}
\usepackage{algorithmicx}
\usepackage{algpseudocode}
\usepackage{graphicx}
\usepackage{wrapfig}
\usepackage{multirow}
\usepackage{booktabs}
\usepackage{rotating}
\usepackage{color}
\usepackage{colortbl}
\usepackage{wrapfig}
% \indent 为了段前空两格
\newtheorem{theorem}{\indent Theorem}
\newtheorem{lemma}{\indent Lemma}
\newtheorem{proposition}{\indent Proposition}

\newtheorem{definition}{\indent Definition}

\usepackage{ulem}

\usepackage{titletoc}

\definecolor{revisioncolor}{HTML}{3370BD} %预定义高亮颜色revisioncolor，直接引用，统一修改。
  %定义新命令revision，方便插入和修改。

\title{Deep Incomplete Multi-view Learning \\ via Cyclic Permutation of VAEs}

% Authors must not appear in the submitted version. They should be hidden
% as long as the \iclrfinalcopy macro remains commented out below.
% Non-anonymous submissions will be rejected without review.

\author{Xin Gao \\
Fudan University \\
\texttt{gaoxin23@m.fudan.edu.cn} \\
\And 
Jian Pu\textsuperscript{\Letter} \\
Fudan University \\
\texttt{jianpu@fudan.edu.cn} 
}

% The \author macro works with any number of authors. There are two commands
% used to separate the names and addresses of multiple authors: \And and \AND.
%
% Using \And between authors leaves it to \LaTeX{} to determine where to break
% the lines. Using \AND forces a linebreak at that point. So, if \LaTeX{}
% puts 3 of 4 authors names on the first line, and the last on the second
% line, try using \AND instead of \And before the third author name.

 % Use Input in the format of Algorithm
 % Use Output in the format of Algorithm
\iclrfinalcopy % Uncomment for camera-ready version, but NOT for submission.
\begin{document}

\maketitle

%%%%%%%%%%%%%%%%%%%%%%%摘要%%%%%%%%%%%%%%%%%%%%%%%%%%%%%
\begin{abstract}
Multi-View Representation Learning (MVRL) aims to derive a unified representation from multi-view data by leveraging shared and complementary information across views. However, when views are irregularly missing, the incomplete data can lead to representations that lack sufficiency and consistency. 
To address this, we propose \textbf{M}ulti-\textbf{V}iew \textbf{P}ermutation of Variational Auto-Encoders (MVP), which excavates invariant relationships between views in incomplete data. MVP establishes inter-view correspondences in the latent space of Variational Auto-Encoders, enabling the inference of missing views and the aggregation of more sufficient information. To derive a valid Evidence Lower Bound (ELBO) for learning, we apply permutations to randomly reorder variables for cross-view generation and then partition them by views to maintain invariant meanings under permutations. Additionally, we enhance consistency by introducing an informational prior with cyclic permutations of posteriors, which turns the regularization term into a similarity measure across distributions. We demonstrate the effectiveness of our approach on seven diverse datasets with varying missing ratios, achieving superior performance in multi-view clustering and generation tasks.
\end{abstract}

\section{Introduction}

Multi-view data are prevalent in real-world applications\footnote{Multi-view: In this paper, we follow \citet{MVTCAE}, using the term ``view'' broadly to refer to what other works may call view \citep{completer2, MVTCAE}, modality \citep{suttergeneralized, huh2024platonic}, or any perspective describing different aspects of a common entity.}, capturing various aspects of a shared subject. Examples include observing a 3D object from multiple angles, applying diverse image feature descriptors, or reporting the same news in different languages. This offers valuable self-supervised signals that enable the extraction of meaningful patterns. Multi-View Representation Learning (MVRL) aims to maps this data into a latent space, integrating information from multiple views into a unified representation for downstream tasks like clustering, classification, and generation \citep{li2018survey}. However, in practice, not all views are available for every sample, presenting the challenge of Incomplete Multi-View Representation Learning (IMVRL) \citep{tang2024incomplete}. The incomplete data complicates the integration of information from different views, making it difficult to derive high-quality representations under varying missing ratios.

Among MVRL methods, Multimodal Variational Auto-Encoders (MVAEs) stand out for their robustness in modeling the latent distributions of multi-view data \citep{aguila2024tutorial}. Their flexibility in handling incomplete data stems from mean-based fusion strategies, such as Mixture-of-Experts (MoE) and Product-of-Experts (PoE) \citep{wu2018multimodal, shi2019variational, suttergeneralized}, which can accommodate varying numbers of views and have been validated by \citet{MVTCAE}. Furthermore, some MVAE variants enhance inter-view consistency by incorporating data-dependent priors, which facilitate better information integration \citep{mmJSD, sutter2024unity} and reduce reliance on missing views \citep{MVTCAE}. 
% For example, the priors proposed in  redefine the Kullback-Leibler (KL) divergence term as a “distance” between all unimodal posteriors, which improves information integration and consistency across views. Additionally, 
% This consistency can also be bolstered by using unimodal posteriors as priors, as suggested by \citet{MVTCAE}, enabling the retention of only the information inferable from available views. 
Despite these advancements, the sufficiency and consistency of learned representations are still not assured in the presence of incomplete data. 
Samples with fewer views inherently provide less information, which hampers the integration and undermines inter-view consistency. As a result, these methods experience significant performance degradation and semantic incoherence across views as the rate of missing data increases.

To address the challenge in IMVRL, an increasing number of studies have proposed inferring missing views through cross-generation from available ones. These methods leverage the invariant relationship between views, meaning that converting one view into another preserves the sample-specific information while altering only the style \citep{zhang2020deep, completer2, cai2023realize}. This intrinsic property of multi-view data allows for a deeper extraction of information from it. Furthermore, \citet{huh2024platonic} observe a phenomenon of representational convergence, indicating that inter-view transformation can be easily achieved using simple mappings in an well-aligned representative space. This occurs naturally in MVAEs, where multiple encoders map different views into a latent space and align their representations by promoting inter-view consistency. Building on these insights, we aim to explicitly establish inter-view correspondences \citep{huang2020partially} in MVAEs, effectively learning and enriching the latent space with invariant relationships between views, thus enabling the inference of representations for missing views.
% However, to effectively learn these mappings, a valid Evidence Lower Bound (ELBO) must be optimized.
% However, deriving a valid Evidence Lower Bound (ELBO) for such MVAEs is neither straightforward nor intuitive, as it requires effectively learning the encoders, decoders, and the correspondences that capture the relationship between views in latent space. 

In this paper, we propose the \textbf{M}ulti-\textbf{V}iew \textbf{P}ermutation of VAEs (MVP), designed to learn more sufficient and consistent representations from incomplete multi-view data. 
MVP captures inter-view relationships by modeling correspondences between views, which enables latent variables to be transformed from one view to another. To facilitate these transformations, we apply permutations that randomly shuffle variables within each view---either through self-view encoders or cross-view correspondences. Next, we partition variables by view to preserve their invariant meanings under permutation, which helps factorize the joint posterior and derive a valid Evidence Lower Bound (ELBO) for optimization. Additionally, we propose an informational prior based on cyclic permutations of posteriors, which converts the Kullback-Leibler (KL) divergence term into a similarity measure among distributions. 
We validate the effectiveness of our method through experiments in multi-view clustering and generation tasks.
The key contributions of this work are:
\begin{itemize}
\item We enhance MVAEs by modeling inter-view correspondences in the latent space to infer missing views. Our novel approach of applying permutations and partitions to the latent variable set leads to the derivation of a valid ELBO for optimization.
\item We introduce an informational prior using cyclic permutations of posteriors. This results in the regularization term into a similarity measure to enhance consistency between views.
\item Quantitative and qualitative results on seven diverse datasets, across different missing ratios, show that our approach learns more sufficient and consistent representations compared to other IMVRL methods and MVAEs.
\end{itemize}
%%%%%%%%%%%%%%%%%%%%%%%%%%%%%%%%%%%%%%%%%%%%%%%%%%%%%%%

\section{Related works}\label{sec:2}

\noindent \textbf{Incomplete Multi-View Representation Learning (IMVRL)}~ Early IMVRL approaches addressed incomplete data by grouping available views and applying classical methods like CCA \citep{hotelling1992relations}. DCCA \citep{DCCA} introduced nonlinear representations via correlation objectives, while DCCAE \citep{DCCAE} enhanced reconstruction with autoencoders. As missing rates increased, the need for handling incomplete information grew. Methods like DIMVC \citep{DIMVC} projected representations into high-dimensional spaces to improve complementarity, and DSIMVC \citep{DSIMVC} used bi-level optimization to impute missing views. Completer \citep{completer1, completer2} maximized mutual information and minimized conditional entropy to recover missing views, while CPSPAN \citep{CPSPAN} aligned prototypes across views to preserve structural consistency. ICMVC \citep{ICMVC} proposed high-confidence guidance to enhance consistency, and DVIMC \citep{DVIMVC} introduced coherence constraints to handle unbalanced information.

\noindent \textbf{Multimodal Variational Auto-Encoders (MVAEs)}~ MVAEs are generative models that maximize the log-likelihood of observed data through latent variables. MVAE \citep{wu2018multimodal} models the joint posterior using PoE \citep{hinton2002training}, though this may hinder unimodal posterior optimization. MMVAE \citep{shi2019variational} and mmJSD \citep{mmJSD} use MoE for the joint posterior, with MMVAE applying pairwise optimization for reconstructing all views, but struggling to aggregate information efficiently. mmJSD addresses this with a dynamic prior, replacing regularization with Jensen-Shannon Divergence. \cite{suttergeneralized} propose Mixture-of-Product-of-Experts (MoPoE) to decompose KL divergence into $2^V$ terms, while MVTCAE \citep{MVTCAE} introduces an information-theoretic objective using forward KL divergences. MMVAE+ \citep{palumbo2023mmvae+} extends MMVAE by separating shared and view-peculiar information in latent subspaces and incorporating cross-view reconstructions.

\noindent \textbf{Informational Priors in VAE Formulations}~ \citet{tomczak2018vae} first introduced a data-dependent prior into VAE, which was later extended to multimodal VAEs for better inter-view consistency. \citet{mmJSD} employed a dynamic prior combined with the joint posterior to define Jensen-Shannon divergence regularization. \citet{MVTCAE} used view-specific posteriors as priors, regularizing the joint posterior to ensure representations could be inferred from all views. \citet{sutter2024unity} develop an MoE prior for soft-sharing of information across view-specific representations rather than simply aggregation. They relies on fusion of posteriors and enforce strict alignment, while we encourage soft consistency between views after transformations.

%%%%%%%%%%%%%%%%%%%%%%%%%%%%%%%%%%%%%%%%%%%%%%%%%%%%%%%

%%%%%%%%%%%%%%%%%%%%%%%%%%%%%%%%%%%%%%%%%%%%%%%%%%%%%%%

\section{Method}

In this section, we present the core components of our method, which aims to capture relationships between views in incomplete multi-view data. Our approach models inter-view correspondences in the latent space of MVAEs, enabling the inference of missing views and the generation of more sufficient and consistent representations. To facilitate transformations between views, we apply permutations to reorder the latent variables and introduce two partitions based on their encoding information (Section \ref{sec:3.1}). We then use these partitions to factorize the posterior and derive the ELBO for optimization (Section \ref{sec:3.2}). Finally, we define priors using the permuted variables in the regularization term. By applying cyclic permutations to the posteriors, we transform the regularization into a similarity measure, enforcing consistency across views (Section \ref{sec:3.3}).

\subsection{Inter-View Correspondence and Latent Variable Partition}\label{sec:3.1}

Given an incomplete multi-view dataset $\{ \mathbb{X}_i \}_{i=1}^n$, where each $\mathbb{X}_i = \{ x^{(v)} \}_{v \in \mathcal{I}_i}$ consists of multiple views, we denote by $\mathcal{I}_i$ a subset of the complete view indices $\left[L\right] = \{1, 2, \ldots, L\}$ (\textit{i.e.}, $\mathcal{I}_i \subseteq \left[L\right]$). Each observed view $x^{(v)}$ is a vector in $\mathbb{R}^{d_v}$. For simplicity in derivation, we drop the subscript $i$ from $\mathcal{I}_i$ and use $\mathcal{I}$. We then encode the data from each view into a latent variable set, where each view contributes complementary information. These latent variables, $z \in \mathbb{R}^d$, are derived using encoders parameterized by $\{\phi_v\}_{v=1}^L$, following standard MVAEs. % This step captures the essential information from each view, laying the foundation for modeling \textbf{correspondences} between them.

We adopt the term ``correspondences'' from \citet{huang2020partially}, but extend it to explicitly construct a mapping channel, referred to as \textbf{inter-view correspondences}. Specifically, we introduce multiple nonlinear mappings from the $v$-th view to the $l$-th view, represented by functions $\{f_{lv}\}$, parameterized by $\{\alpha_{lv}\}$. For each pair where $v \neq l$, a unique mapping \( f_{lv} \) establishes a direct relationship between the source view $v$ and the target view $l$. This allows for cross-view transformations, where information from one view informs the representation of another. The latent variables are then organized into a set $\boldsymbol{\mathcal{Z}} = \{z_v^{(l)}\}_{(v,l) \in \mathcal{I} \times [L]}$, \textbf{where ${z_v^{(l)}}$ denotes the representation of the $l$-th view}, with subscript $v$ indicating its source view.
Specifically: \textbf{(1)} If $v = l$, $z_v^{(v)}$ is directly encoded from the observed view $x^{(v)}$, following a $d$-dimensional Gaussian distribution $\mathcal{N}({z}_v^{(v)}; \mu({x}^{(v)}), \Sigma({x}^{(v)}))$, denoted as $q({z}_v^{(v)}\mid {x}^{(v)}; \phi_v)$.  
\textbf{(2)} If $v \neq l$, $z_v^{(l)}$ is transformed from ${z}_v^{(v)}$ using $f_{lv}$, following a Gaussian distribution $\mathcal{N}({z}_v^{(l)}; f_{lv} \circ \mu({x}^{(v)}), f_{lv} \circ \Sigma({x}^{(v)}))$, denoted as $q({z}_v^{(l)} \mid {x}^{(v)}; \phi_v, \alpha_{lv})$.

To organize the encoded latent variables, we construct a matrix $Z_0$, as shown in Figure \ref{fig:framework}. The diagonal elements of $Z_0$ correspond to the self-view encoding (case (1)), while the off-diagonal elements correspond to cross-view transformations (case (2)). The core idea is that variables with the same superscript $l$ encode similar information about the $l$-th view, regardless of whether they are directly encoded or transformed.
Thus, even if the columns of $Z_0$ are reordered, as illustrated in the transition from $Z_0$ to $Z_1$ in Figure \ref{fig:framework}, each column continues to represent the same view, and the information encoded by elements at corresponding positions remains invariant.
% Even if we reorder the variables within each column of $Z_0$, as demonstrated in the transition from $Z_0$ to $Z_1$ in Figure \ref{fig:framework}, the underlying semantic information remains invariant. 
This observation motivates us to group the latent variables by views, leading to a partition of the set $\boldsymbol{\mathcal{Z}}$. Mathematically, a \textbf{partition} $\mathcal{P}$ of a set $X$ is a collection of non-empty, mutually disjoint subsets of $X$, also known as a ``set of sets''. Accordingly, we define the single-view partition for $\boldsymbol{\mathcal{Z}}$:

\begin{definition}[\textbf{Single-view Partition}]\label{def:1}
A single-view partition of $\boldsymbol{\mathcal{Z}}$, denoted as $\mathcal{P}_s(\boldsymbol{\mathcal{Z}})$, is defined as $\{\boldsymbol{\mathcal{S}}_l\}_{l=1}^L$ such that $\bigcup_{l=1}^L \boldsymbol{\mathcal{S}}_l = \boldsymbol{\mathcal{Z}}$. The $l$-th single-view cell $\boldsymbol{\mathcal{S}}_l = \{{z}_v^{(l)}\}_{v \in \mathcal{I}}$ and $|\boldsymbol{\mathcal{S}}_l| = |\mathcal{I}|$.
\end{definition}

Accordingly, each sample has a unique $\mathcal{P}_s(\boldsymbol{\mathcal{Z}})$, where each single-view cell $\mathcal{S}_l$ consists of all variables with the same superscript, representing the $l$-th view. More specifically, these variables are all drawn from the $l$-th column of matrix $Z_0$ or $Z_1$. The set $\boldsymbol{\mathcal{S}}_l$ is expected to be homogeneous, meaning the distributions of the variables within it should be as close as possible.

To combine complete information across $L$ views for downstream tasks, we randomly select $L$ variables from different views in $\boldsymbol{\mathcal{Z}}$. In practice, this is achieved by selecting a row from matrix $Z_0$ or $Z_1$, each representing a subset of $\boldsymbol{\mathcal{Z}}$ that contains complete information from all $L$ views. This approach leads to another partition based on combinations of complete views:

\begin{definition}[\textbf{Complete-view Partition}]\label{def:2}
A complete-view partition of $\boldsymbol{\mathcal{Z}}$, denoted as $\mathcal{P}_c(\boldsymbol{\mathcal{Z}})$, is defined as $\{\boldsymbol{\mathcal{C}}_n\}_{n \in \mathcal{I}}$ such that $\bigcup_{n \in \mathcal{I}} \boldsymbol{\mathcal{C}}_n = \boldsymbol{\mathcal{Z}}$. Each complete-view cell $\mathcal{C}_n$ is given by $\{{z}_v^{(l)}\}_{l=1, v \in \mathcal{J}_n}^{L}$, where the index set $\mathcal{J}_n \subseteq \mathcal{I}$ and $\bigcup_{n \in \mathcal{I}} \mathcal{J}_n = \mathcal{I}$. The size of each cell is $|\boldsymbol{\mathcal{C}}_n| = L$.
\end{definition}

For each sample, multiple complete-view partitions satisfy Definition \ref{def:2}. This is evident in the matrix, where we can divide $Z_0$ by rows; each row corresponds to a complete-view cell $\boldsymbol{\mathcal{C}}_n$ which consists of variables with superscripts ranging from $1$ to $L$ and subscripts indicating their sources. After randomly reordering variables in each column, we obtain a new $\mathcal{P}_c(\boldsymbol{\mathcal{Z}})$ by similarly dividing $Z_1$ by rows.
This reordering action is rigorously described as a \textbf{permutation} of a set, which is a bijection from a set to itself. Applying permutations to each column in matrix $Z_0$ generates different $\mathcal{P}_c(\boldsymbol{\mathcal{Z}})$, facilitating the selection of any complete-view combination from each row. 

%%%%%%%%%%%%%%%%%%%%%%framework图%%%%%%%%%%%%%%%%%%%%%%%
\begin{figure}[t]
\centering
\includegraphics[width=0.99\columnwidth]{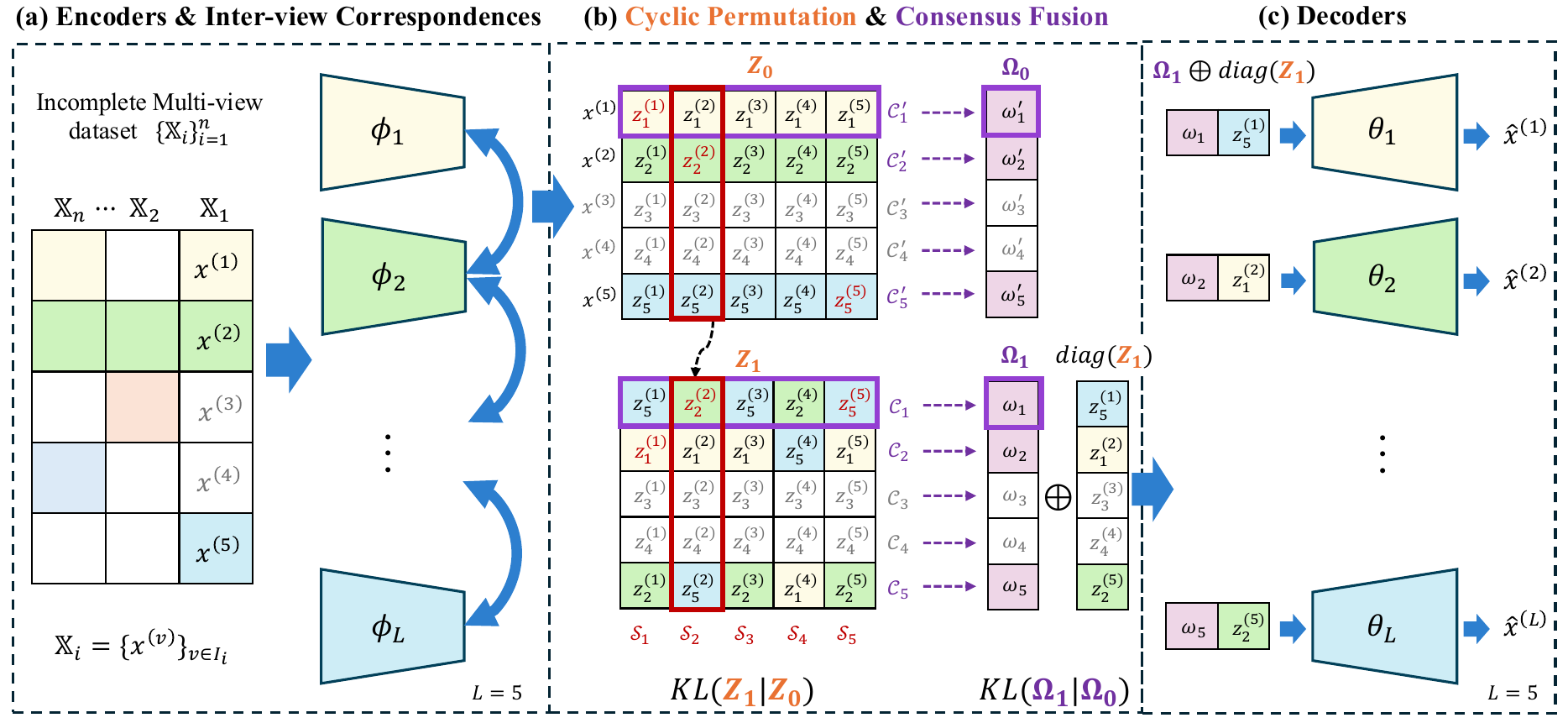} 
\caption{\textbf{Overview of our method}: (a) Incomplete multi-view data $\mathbb{X}_1$ is fed into encoders to generate the diagonal elements of matrix $Z_0$, while off-diagonal elements are derived through inter-view correspondences. (b) Latent variables are partitioned by columns for \textcolor[rgb]{ .690,  .141,  .094}{\textbf{single-view partition} $\{\boldsymbol{\mathcal{S}_i}\}$} and by rows for \textcolor[rgb]{ .541,  .267,  .796}{\textbf{complete-view partition} $\{\boldsymbol{\mathcal{C}_i}\}$}, with each row aggregated into a consensus variable ${\omega}$, capturing shared information across views. A cyclic permutation within each column transforms $Z_0$ into $Z_1$, generating new partitions (See Figure \ref{fig:a1} for transformation details.). Regularization is applied by comparing distributions at the same positions before ($Z_0$) and after ($Z_1$) permutation. (c) Each view $x^{(v)}$ is reconstructed from its latent representation ${z}^{(v)}$ and a consensus variable ${\omega}$.}
\label{fig:framework}
\end{figure}
%%%%%%%%%%%%%%%%%%%%%%%%%%%%%%%%%%%%%%%%%%%%%%%%%%%%%%%%

%% Define the consensus latent variable, which is obtained by combining a cell in the complete-view partition.
To aggregate the shared information from $L$ views, we introduce a \textbf{consensus variable} $\boldsymbol{\omega}$, which is obtained from a complete-view cell $\boldsymbol{\mathcal{C}}_n$. We assume that the first \(k\) dimensions of \(\boldsymbol{z}\) capture information common to all views, such as categorical features, and compute the geometric mean of the \(k\)-dimensional marginal distributions within the set $\boldsymbol{\mathcal{C}}_n$. This approach combines the \(L\) complete views into a single, sharper \(k\)-dimensional Gaussian distribution with explicitly computed mean and covariance \citep{cochran1954combination}, represented as
$
q(\omega_n \mid \boldsymbol{\mathcal{C}}_n, \{ {x}^{(v)} \}_{\mathcal{J}_n}) = \mathcal{N}_k(\omega_n; \alpha_n, \Lambda_n), ~~n\in \mathcal{I}.
$
Thus, a complete-view partition $\mathcal{P}_c(\boldsymbol{\mathcal{Z}})$ with $|\mathcal{I}|$ cells produces a set $\boldsymbol{\Omega} = \{{\omega}_n| {\omega}_n \sim q(\omega_n \mid \boldsymbol{\mathcal{C}}_n, \{ {x}^{(v)} \}_{\mathcal{J}_n}), \boldsymbol{\mathcal{C}}_n \in \mathcal{P}_c(\boldsymbol{\mathcal{Z}}), n\in \mathcal{I}\}$. The set $\boldsymbol{\Omega}$ should also be homogeneous, meaning that any combination of $L$ views fused in this way should encode consistent information.

By defining these two partitions, we enable the factorization of the posterior in the following sections, forming the foundation for our learning objective.

\subsection{Posterior Factorization and the Derivation of ELBO}\label{sec:3.2}

Given the latent variables $\boldsymbol{\mathcal{Z}}$ and $\boldsymbol{\Omega}$, our goal is to infer their posterior distribution $p(\boldsymbol{\mathcal{Z}}, \boldsymbol{\Omega} \mid \{ {x}^{(v)} \}_{\mathcal{I}})$. However, this exact posterior involves an intractable integral, so we approximate it with a variational posterior $q(\boldsymbol{\mathcal{Z}}, \boldsymbol{\Omega} \mid \{ {x}^{(v)} \}_{\mathcal{I}})$. For a given complete-view partition $\mathcal{P}_c(\boldsymbol{\mathcal{Z}}) = \{\boldsymbol{\mathcal{C}}_n\}_{n \in \mathcal{I}}$, we assume that the density factorizes as follows:
$$\begin{aligned}
q(\boldsymbol{\mathcal{Z}},\boldsymbol{\Omega} \mid\{x^{(v)}\}_\mathcal{I}) &\triangleq \prod\nolimits_{n \in \mathcal{I}}q(\boldsymbol{\omega}_n\mid \boldsymbol{\mathcal{C}}_n,\{x^{(v)}\}_{\mathcal{J}_n})q(\boldsymbol{\mathcal{C}}_n\mid\{x^{(v)}\}_{\mathcal{J}_n}) \\
&=\prod\nolimits_{n \in \mathcal{I}}\bigg[q(\boldsymbol{\omega}_n\mid \boldsymbol{\mathcal{C}}_n, \{x^{(v)}\}_{\mathcal{J}_n})\prod\nolimits_{l=1, v \in\mathcal{J}_n}^L q(\boldsymbol{z}_v^{(l)}\mid x^{(v)})\bigg].
\end{aligned}$$

To optimize this factorized posterior, we minimize the KL divergence between the true posterior and the variational posterior, expressed as:
$$\begin{aligned}&KL\left[q(\boldsymbol{\mathcal{Z}}, \boldsymbol{\Omega} \mid \{ {x}^{(v)} \}_{\mathcal{I}})\parallel p(\boldsymbol{\mathcal{Z}}, \boldsymbol{\Omega} \mid \{{x}^{(v)} \}_{\mathcal{I}})\right]=\log p(\{{x}^{(v)} \}_{\mathcal{I}})-\mathcal{L}_{\mathrm{ELBO}}(\{{x}^{(v)} \}_{\mathcal{I}}),\end{aligned}$$
where $\mathcal{L}_{\mathrm{ELBO}}(\{x^{(v)} \}_{\mathcal{I}})$ is the Evidence Lower Bound (ELBO) of the log-likelihood of incomplete multi-view data $\{x^{(v)} \}_{\mathcal{I}}$.
Maximizing the ELBO effectively minimizes the KL divergence, thereby improving the approximation of the true posterior. 
Next, we use decoders parameterized by $\{\theta_v\}_{v=1}^{L}$ to reconstruct the observations. Each view $x^{(n)}$ is reconstructed using the latent variable $z_{*}^{(n)}$ (represent the $n$-th view) and a consensus variable $\omega_n$. The likelihood can be written as:
$$
p(x^{(n)}|\boldsymbol{\mathcal{C}}_n, \omega_n) = p\left(x^{(n)}|\boldsymbol{\mathcal{C}}_n \cap \boldsymbol{\mathcal{S}}_n,\omega_n; \theta_n\right),
$$
where $\boldsymbol{\mathcal{C}}_n \cap \boldsymbol{\mathcal{S}}_n = z_{*}^{(n)}$ is the diagonal element of $Z_1$, with its source ${*}$ determined by the permutation.
Then the ELBO is expressed as (with detailed derivations in Appendix \ref{app: A1.3}):
\begin{equation}\label{eq1}
\begin{aligned}
&\mathcal{L}_{\mathrm{ELBO}} (\{x^{(v)} \}_{\mathcal{I}}) 
= \sum_{n \in \mathcal{I}} \mathbb{E}_{q(\boldsymbol{\mathcal{C}}_n , \omega_n \mid \{x^{(v)}\}_{\mathcal{J}_{n}})}\left[\log p(x^{(n)}\mid \boldsymbol{\mathcal{C}}_n \cap \boldsymbol{\mathcal{S}}_n, \omega_n)\right]  \\
&  -  \sum_{l=1}^{L}\sum_{v\in \mathcal{I}}  KL \left[q(z_{v}^{(l)} \mid x^{(v)}) \parallel p(z_{v}^{(l)})\right] - \sum_{n \in \mathcal{I}} 
 KL\left[q(\omega_n \mid \boldsymbol{\mathcal{C}}_n, \{x^{(v)}\}_{\mathcal{J}_{n}}) \parallel p(\omega_n) \right].\\
\end{aligned}
\end{equation}

The ELBO in Eq. (\ref{eq1}) consists of three terms. The first term represents the reconstruction loss for all observed views, ensuring that the variable $z_{*}^{(n)} = \boldsymbol{\mathcal{C}}_n \cap \boldsymbol{\mathcal{S}}_n$ encodes information of $n$-th view, while $\omega$ captures shared aspects. By applying permutations to obtain a new $\mathcal{P}_c(\boldsymbol{\mathcal{Z}})$, we can derive another $\mathcal{L}_{\mathrm{ELBO}}$. This randomness in permutation results in $z_{*}^{(n)}$ with different source views ${*}$, facilitating both \textbf{self-view reconstruction} and \textbf{cross-view generation}. We can even use a convex combination of different $\mathcal{L}_{\mathrm{ELBO}}$, discussed in Appendix \ref{app: A1.3}. Different partitions produce varying $\omega$ that serve the same role, ensuring that the first $k$ dimensions capture shared features across views. The remaining two terms are regularization terms for $z$ and $\omega$, further discussed in Section \ref{sec:3.3}.

\subsection{Prior Setting using Cyclic Permutations of Posteriors}\label{sec:3.3}

The regularization terms in the ELBO for $z$ and $\omega$, when their priors are properly set, support our learning objective of establishing inter-view correspondences in MVAEs.
In the first term for $z$, the outer summation spans all single-view cells $\boldsymbol{\mathcal{S}}_l$, while the inner summation includes all $l$-th view latent variables $z$ in $\boldsymbol{\mathcal{S}}_l$, or equivalently, all variables in the $l$-th column of matrices $Z_0$ or $Z_1$. With well-established inter-view correspondences, the $l$-th view variables should encode similar information and have distributions that are as close as possible, regardless of their sources. 

In Section \ref{sec:3.1}, we introduced the idea of permuting the columns of matrix $Z_0$ to generate different complete-view partitions. When these permutations follow a cyclic structure, the permuted posteriors can be used as informational priors. For example, if we have three distributions $P_1, P_2, P_3$, we can set the prior for $P_1$ as $P_3$, for $P_2$ as $P_1$, and for $P_3$ as $P_2$. As these pairs converge, the distributions become more similar due to the cyclic nature of the permutation ($P_1$$\Rightarrow$$P_3$$\Rightarrow$$P_2$$\Rightarrow$$P_1$). 
This \textbf{cyclic permutation} of indices ensures that the latent variables in each single-view cell become more homogeneous. Cyclic permutations can be efficiently generated before training using Sattolo’s Algorithm with time complexity $O(N)$ \citep{wilson2005overview}, defined as follows:

\begin{definition}[\textbf{Cyclic Permutation} \citep{gross2016combinatorial}]\label{def:3}
A cyclic permutation of a set X is a bijection $\sigma: X \to X$. For any $x \in X$, $\sigma^{|X|}(x) = x$ and $\sigma^{k}(x) \neq x$ for all $k < |X|, k\in \mathbb{N}_+$.
\end{definition}
In our model, for each single-view cell $\boldsymbol{\mathcal{S}}_l$, we define the prior for $z_{v}^{(l)}$ as a cyclic permutation of its corresponding posterior. Specifically, we compute the KL divergence between the distributions at the same positions in $Z_0$ and $Z_1$ (before and after permutation). This leads to a new measure of similarity within each single-view cell, which we define as \textit{Permutation Divergence}, aimed at minimizing the heterogeneity among distributions.

\begin{definition}[\textbf{Permutation Divergence}]\label{def:4}
Let $N \geq 2$ be a fixed natural number. Given a cyclic permutation $\sigma$ on the index set $\left[N\right]$ and a set $\mathcal{P}$ of probability distributions on the same measure space, the Permutation Divergence of order $N$ is a mapping $d$ from $\mathcal{P}^N$ to the extended real line $\mathbb{R} \cup \{+\infty\}$, defined as follows for any $P_1, P_2, \ldots, P_N \in \mathcal{P}$:
$$
d(P_1, P_2, \ldots, P_N; \sigma) = \sum \nolimits_{i=1}^N KL[P_i \parallel P_{\sigma(i)}].
$$
\end{definition}
The proof that Permutation Divergence is a valid similarity measure is provided in Appendix \ref{app: A1.2}. The key property is that $d(P_1, P_2, \ldots, P_N) = 0$ if and only if $P_1=P_2=\dots=P_N$. 
Minimizing the Permutation Divergence ensures that the distributions of the latent variables in each single-view cell become as similar as possible, whether they are self-encoded or cross-transformed. This reflects how well the model captures inter-view correspondences, a property we refer to as \textbf{Inter-View Translatability}. This approach also enforces a form of soft consistency in the latent space, which encourages easier transformation between views rather than strict alignment. % Additionally, these informational priors can directly reuse the permuted elements from the partition construction process if cyclic permutations are applied throughout.
%This approach avoids the extra computation and aggregation of incomplete information typically required for posterior fusion.

The second regularization term in Eq.~(\ref{eq1}) targets the consensus variable $\omega$, deterministically derived from $z$ via the geometric mean of marginal distributions in $\boldsymbol{\mathcal{C}}_n$. Serving as a central anchor across $L$ views, $\omega$ actually acts as an additional regularizer for $z$. Each $\omega_n$ captures shared information across views and should be similar, with its prior set as the posterior fused from another complete-view cell $\mathcal{C}_n'$. Minimizing it ensures the similarity of all $\omega_n$, a property we refer to as \textbf{Consensus Concentration}. For  visualizations of their impacts, please refer to Appendix \ref{app: A1.4}.

%%%%%%%%%%%%%%%%%%%%%% table:dataset %%%%%%%%%%%%%%%%%%%
\begin{table}[t]
  \centering
    \caption{\textbf{Summary of the seven multi-view datasets}. These benchmarks exhibit diverse characteristics, with the last two image datasets enabling better observation through visualization.}
    \resizebox{0.96\linewidth}{!}{
    \begin{tabular}{cccccc}
    \toprule
    \textbf{Dataset} & \textbf{\#View} & \textbf{Type}  & \textbf{Dimension} & \textbf{\#Sample} & \textbf{\#Class} \\
    \midrule
    Handwritten & 6     & Fou, Fac, Kar, Zer, Pix, Mor & 76, 216, 64, 47, 240, 6 & 2000  & 10 \\
    CUB   & 2     & Image, Text & 1024, 300 & 600   & 10 \\
    Scene15 & 3     & GIST, PHOG, LBP & 20, 59, 40 & 4485  & 15 \\
    Reuters & 5     & Text: Eng., Fr., Ger., It., Sp. & 10, 10, 10, 10, 10 & 18758 & 6 \\
    SensIT Vehicle & 2     & Acoustic, Seismic & 50, 50 & 98528 & 3 \\
    PolyMNIST & 5     & RGB Image & 3×28×28 & 70000 & 10 \\
    MVShapeNet & 5     & RGB Image & 3×64×64 & 25155 & 5 \\
    \bottomrule
    \end{tabular}%
    }
  \label{tab:dataset}%
\end{table}%
%%%%%%%%%%%%%%%%%%%%%%%%%%%%%%%%%%%%%%%%%%%%%%%%%%%%%%%%

%%%%%%%%%%%%%%%%%%%%%%%%%%%%%%%%
\section{Experiments}

We extensively evaluated the proposed method across seven diverse multi-view datasets, summarized in Table \ref{tab:dataset}. These datasets encompass a variety of view types with different dimensions, originating from diverse sensors or descriptors, as well as real-world perspectives captured from different angles. PolyMNIST \citep{suttergeneralized} consists of five images per data point, all sharing the same digit label but varying in handwriting style and background. ShapeNet is a large-scale repository of 3D CAD models of objects \citep{chang2015shapenet}. For each object, we rendered five images from viewpoints spaced $45$ degrees apart around the front of the object. Then we selected five representative categories to create a multi-view dataset called MVShapeNet. The missing patterns are predetermined and saved as masks before training. Specifically, for each missing rate $\eta = \{0.1, 0.3, 0.5, 0.7\}$, we randomly select $\eta \times \text{len(dataset)}$ samples to be incomplete, ensuring that each incomplete sample retains at least one view while missing at least one. For more details on the generation of missing patterns and masks, please refer to Appendix \ref{app:C1}. %\footnote{Our code is available at \url{https://github.com/gaoxin492/MVP.git}}

% purpose
In Section \ref{sec:4.1}, we perform clustering in the latent space and compare our method to eight state-of-the-art IMVRL approaches. The results, consistent across various missing ratios, underscore the structural robustness and informativeness of our learned representations. To further evaluate the effectiveness of our posterior and prior settings within the VAE framework, Section \ref{sec:4.2} compares our method with six MVAEs using two image datasets, PolyMNIST and MVShapeNet. Through multi-view clustering and generation tasks on different missing combinations, we demonstrate that our method learns representations with greater sufficiency and consistency. Additionally, an ablation study on the use of permutation, permutation types, and prior settings is provided in Appendix \ref{app:b2}, which confirms that our cyclic approach delivers the best results.

%%%%%%%%%%%%%%%%%% Clustering results %%%%%%%%%%%%%%%%%%
\begin{table}[t]
  \centering
\caption{\textbf{Clustering results of nine methods on five multi-view datasets with missing rates} $\eta = \{ 0.1, 0.3, 0.5, 0.7\}$. The first and second best results are indicated in \textcolor[rgb]{ .753,  0,  0}{\textbf{bold red}} and \textcolor[rgb]{ 0,  .439,  .753}{\textbf{blue}}, respectively. Each experiment was run five times, with the means reported here due to space limit. The complete table, including standard deviations, is provided in Table \ref{tab:clustering complete} from Appendix \ref{app: A2}.}
  \resizebox{0.96\linewidth}{!}{
    \begin{tabular}{c|c|ccc|ccc|ccc|ccc}
    \toprule
          & \textbf{Missing Rate} & \multicolumn{3}{c|}{\textbf{0.1}} & \multicolumn{3}{c|}{\textbf{0.3}} & \multicolumn{3}{c|}{\textbf{0.5}} & \multicolumn{3}{c}{\textbf{0.7}} \\
    \midrule
          & \textbf{Metrics} & \textbf{ACC}$\uparrow$ & \textbf{NMI}$\uparrow$ & \textbf{ARI}$\uparrow$ & \textbf{ACC}$\uparrow$ & \textbf{NMI}$\uparrow$ & \textbf{ARI}$\uparrow$ & \textbf{ACC}$\uparrow$ & \textbf{NMI}$\uparrow$ & \textbf{ARI}$\uparrow$ & \textbf{ACC}$\uparrow$ & \textbf{NMI}$\uparrow$ & \textbf{ARI}$\uparrow$ \\
    \midrule
    \multirow{9}[2]{*}{\begin{sideways}\textbf{Handwritten}\end{sideways}} & DCCA  & 73.83  & 71.10  & 55.13  & 68.02  & 64.93  & 44.76  & 63.25  & 60.27  & 38.93  & 59.07  & 56.44  & 33.58  \\
          & DCCAE & 73.61  & 72.22  & 54.61  & 68.12  & 65.76  & 43.42  & 63.36  & 60.72  & 37.32  & 59.31  & 56.89  & 32.55  \\
          & DIMVC & \textcolor[rgb]{ 0,  .439,  .753}{\textbf{89.13 }} & 80.06  & 77.96  & 85.24  & 74.67  & 70.83  & 82.76  & 71.17  & 66.81  & 79.66  & 68.94  & 63.12  \\
          & DSIMVC & 81.27  & 79.47  & 71.59  & 81.82  & 80.27  & 73.36  & 81.39  & 79.23  & 71.88  & 77.38  & 74.80  & 66.84  \\
          & Completer & 82.18  & 77.73  & 68.92  & 78.70  & 69.08  & 58.14  & 74.73  & 67.49  & 50.21  & 68.86  & 62.41  & 41.06  \\
          & CPSPAN & 80.30  & 78.43  & 71.84  & 79.80  & 79.08  & 72.28  & 84.64  & 80.23  & 75.34  & \textcolor[rgb]{ 0,  .439,  .753}{\textbf{83.90 }} & 80.60  & \textcolor[rgb]{ 0,  .439,  .753}{\textbf{75.25 }} \\
          & ICMVC & 88.86  & 82.19  & 79.51  & 80.95  & 74.53  & 68.15  & 73.83  & 67.95  & 58.72  & 67.48  & 61.99  & 48.66  \\
          & DVIMC & 87.89  & \textcolor[rgb]{ 0,  .439,  .753}{\textbf{83.51 }} & \textcolor[rgb]{ 0,  .439,  .753}{\textbf{79.66 }} & \textcolor[rgb]{ 0,  .439,  .753}{\textbf{85.36 }} & \textcolor[rgb]{ 0,  .439,  .753}{\textbf{82.82 }} & \textcolor[rgb]{ 0,  .439,  .753}{\textbf{78.50 }} & \textcolor[rgb]{ 0,  .439,  .753}{\textbf{85.01 }} & \textcolor[rgb]{ 0,  .439,  .753}{\textbf{82.96 }} & \textcolor[rgb]{ 0,  .439,  .753}{\textbf{78.50 }} & 81.66  & \textcolor[rgb]{ 0,  .439,  .753}{\textbf{80.78 }} & 74.85  \\
          & \textbf{MVP(Ours)} & \textcolor[rgb]{ .753,  0,  0}{\textbf{90.55 }} & \textcolor[rgb]{ .753,  0,  0}{\textbf{87.08 }} & \textcolor[rgb]{ .753,  0,  0}{\textbf{84.46 }} & \textcolor[rgb]{ .753,  0,  0}{\textbf{88.69 }} & \textcolor[rgb]{ .753,  0,  0}{\textbf{84.86 }} & \textcolor[rgb]{ .753,  0,  0}{\textbf{81.59 }} & \textcolor[rgb]{ .753,  0,  0}{\textbf{90.76 }} & \textcolor[rgb]{ .753,  0,  0}{\textbf{85.44 }} & \textcolor[rgb]{ .753,  0,  0}{\textbf{83.53 }} & \textcolor[rgb]{ .753,  0,  0}{\textbf{86.74 }} & \textcolor[rgb]{ .753,  0,  0}{\textbf{82.56 }} & \textcolor[rgb]{ .753,  0,  0}{\textbf{78.75 }} \\
    \midrule
    \multirow{9}[1]{*}{\begin{sideways}\textbf{CUB}\end{sideways}} & DCCA  & 58.93  & 59.59  & 40.87  & 55.65  & 54.23  & 35.63  & 48.60  & 46.71  & 27.88  & 42.35  & 39.42  & 21.77  \\
          & DCCAE & 57.27  & 61.04  & 43.09  & 53.43  & 54.59  & 36.48  & 47.21  & 47.05  & 28.37  & 41.52  & 40.50  & 22.66  \\
          & DIMVC & 66.03  & 61.70  & 48.96  & 57.20  & 56.00  & 41.25  & 60.65  & 55.75  & 42.81  & 56.08  & 51.07  & 36.65  \\
          & DSIMVC & \textcolor[rgb]{ 0,  .439,  .753}{\textbf{72.93 }} & \textcolor[rgb]{ 0,  .439,  .753}{\textbf{67.82 }} & \textcolor[rgb]{ 0,  .439,  .753}{\textbf{55.89 }} & \textcolor[rgb]{ 0,  .439,  .753}{\textbf{66.83 }} & 61.78  & 47.71  & \textcolor[rgb]{ 0,  .439,  .753}{\textbf{68.37 }} & 61.55  & \textcolor[rgb]{ 0,  .439,  .753}{\textbf{48.21 }} & \textcolor[rgb]{ .753,  0,  0}{\textbf{67.33 }} & 59.89  & \textcolor[rgb]{ 0,  .439,  .753}{\textbf{46.31 }} \\
          & Completer & 52.97  & 65.47  & 45.98  & 60.73  & \textcolor[rgb]{ 0,  .439,  .753}{\textbf{68.88 }} & \textcolor[rgb]{ 0,  .439,  .753}{\textbf{52.78 }} & 51.90  & 61.84  & 45.11  & 19.43  & 17.37  & 0.73  \\
          & CPSPAN & 58.77  & 62.27  & 45.35  & 61.30  & 64.21  & 48.93  & 60.07  & \textcolor[rgb]{ 0,  .439,  .753}{\textbf{64.18 }} & 46.42  & 58.60  & \textcolor[rgb]{ 0,  .439,  .753}{\textbf{62.16 }} & 45.23  \\
          & ICMVC & 29.23  & 38.31  & 21.55  & 24.33  & 25.74  & 14.17  & 22.90  & 19.87  & 10.52  & 19.43  & 16.10  & 7.79  \\
          & DVIMVC & 44.53  & 41.83  & 23.78  & 43.37  & 45.18  & 28.29  & 39.57  & 34.39  & 20.52  & 39.47  & 36.71  & 22.41  \\
          & \textbf{MVP(Ours)} & \textcolor[rgb]{ .753,  0,  0}{\textbf{78.67 }} & \textcolor[rgb]{ .753,  0,  0}{\textbf{77.67 }} & \textcolor[rgb]{ .753,  0,  0}{\textbf{66.73 }} & \textcolor[rgb]{ .753,  0,  0}{\textbf{74.97 }} & \textcolor[rgb]{ .753,  0,  0}{\textbf{73.09 }} & \textcolor[rgb]{ .753,  0,  0}{\textbf{61.35 }} & \textcolor[rgb]{ .753,  0,  0}{\textbf{74.20 }} & \textcolor[rgb]{ .753,  0,  0}{\textbf{71.40 }} & \textcolor[rgb]{ .753,  0,  0}{\textbf{59.11 }} & \textcolor[rgb]{ 0,  .439,  .753}{\textbf{66.53 }} & \textcolor[rgb]{ .753,  0,  0}{\textbf{63.11 }} & \textcolor[rgb]{ .753,  0,  0}{\textbf{50.59 }} \\
    \midrule
    \multirow{9}[1]{*}{\begin{sideways}\textbf{Scene15}\end{sideways}} & DCCA  & 38.22  & 41.20  & 19.89  & 36.16  & 39.46  & 17.12  & 34.05  & 37.26  & 14.48  & 30.84  & 33.93  & 12.60  \\
          & DCCAE & 39.46  & 42.08  & 20.36  & 36.73  & 39.80  & 17.06  & 34.49  & 37.66  & 14.57  & 31.16  & 34.21  & 12.64  \\
          & DIMVC & 42.51  & 41.53  & 24.45  & 40.37  & 38.57  & 20.84  & 40.17  & 35.95  & 20.59  & 36.01  & 32.57  & 16.29  \\
          & DSIMVC & 29.43  & 30.38  & 14.86  & 31.38  & 32.54  & 16.29  & 27.24  & 28.68  & 13.38  & 28.42  & 29.09  & 13.85  \\
          & Completer & 37.00  & 41.89  & 23.60  & 40.04  & 42.41  & 24.22  & 36.64  & 38.99  & 19.70  & 35.37  & 37.05  & 17.58  \\
          & CPSPAN & 42.69  & 38.79  & 24.56  & 43.21  & 39.42  & 24.94  & 43.44  & 39.19  & 24.96  & \textcolor[rgb]{ 0,  .439,  .753}{\textbf{42.53 }} & \textcolor[rgb]{ 0,  .439,  .753}{\textbf{38.41 }} & \textcolor[rgb]{ 0,  .439,  .753}{\textbf{24.38 }} \\
          & ICMVC & 43.88  & 40.03  & 25.80  & 43.14  & 38.06  & 24.74  & 37.96  & 33.45  & 20.34  & 36.70  & 35.80  & 18.35  \\
          & DVIMC & \textcolor[rgb]{ 0,  .439,  .753}{\textbf{45.16 }} & \textcolor[rgb]{ .753,  0,  0}{\textbf{45.06 }} & \textcolor[rgb]{ .753,  0,  0}{\textbf{28.64 }} & \textcolor[rgb]{ 0,  .439,  .753}{\textbf{43.68 }} & \textcolor[rgb]{ 0,  .439,  .753}{\textbf{42.32 }} & \textcolor[rgb]{ 0,  .439,  .753}{\textbf{26.68 }} & \textcolor[rgb]{ 0,  .439,  .753}{\textbf{41.13 }} & \textcolor[rgb]{ 0,  .439,  .753}{\textbf{39.58 }} & \textcolor[rgb]{ 0,  .439,  .753}{\textbf{25.03 }} & 39.59  & 36.66  & 21.56  \\
          & \textbf{MVP(Ours)} & \textcolor[rgb]{ .753,  0,  0}{\textbf{45.70 }} & \textcolor[rgb]{ 0,  .439,  .753}{\textbf{43.77 }} & \textcolor[rgb]{ 0,  .439,  .753}{\textbf{27.90 }} & \textcolor[rgb]{ .753,  0,  0}{\textbf{45.81 }} & \textcolor[rgb]{ .753,  0,  0}{\textbf{42.54 }} & \textcolor[rgb]{ .753,  0,  0}{\textbf{27.53 }} & \textcolor[rgb]{ .753,  0,  0}{\textbf{45.28 }} & \textcolor[rgb]{ .753,  0,  0}{\textbf{41.84 }} & \textcolor[rgb]{ .753,  0,  0}{\textbf{26.80 }} & \textcolor[rgb]{ .753,  0,  0}{\textbf{43.14 }} & \textcolor[rgb]{ .753,  0,  0}{\textbf{39.53 }} & \textcolor[rgb]{ .753,  0,  0}{\textbf{24.68 }} \\
    \midrule
    \multirow{9}[2]{*}{\begin{sideways}\textbf{Reuters}\end{sideways}} & DCCA  & 47.66  & 23.93  & 15.46  & 46.28  & 20.62  & 12.71  & 44.10  & 22.63  & 11.04  & 43.36  & 22.90  & 10.03  \\
          & DCCAE & 42.70  & 23.84  & 7.59  & 43.71  & 26.07  & 8.15  & 42.32  & 24.30  & 6.80  & 41.32  & 23.11  & 5.90  \\
          & DIMVC & 48.83  & 28.94  & 25.78  & 50.54  & 29.86  & \textcolor[rgb]{ 0,  .439,  .753}{\textbf{26.89 }} & 48.51  & 27.29  & 24.74  & 46.94  & 25.79  & 23.24  \\
          & DSIMVC & 51.26  & 35.56  & 28.21  & \textcolor[rgb]{ 0,  .439,  .753}{\textbf{51.33 }} & \textcolor[rgb]{ 0,  .439,  .753}{\textbf{34.88 }} & 26.61  & \textcolor[rgb]{ 0,  .439,  .753}{\textbf{50.78 }} & \textcolor[rgb]{ .753,  0,  0}{\textbf{36.85 }} & \textcolor[rgb]{ 0,  .439,  .753}{\textbf{28.27 }} & \textcolor[rgb]{ 0,  .439,  .753}{\textbf{47.12 }} & \textcolor[rgb]{ 0,  .439,  .753}{\textbf{33.57 }} & \textcolor[rgb]{ 0,  .439,  .753}{\textbf{25.51 }} \\
          & Completer & 41.08  & 21.38  & 7.92  & 40.56  & 22.48  & 10.32  & 41.77  & 20.41  & 9.80  & 42.27  & 22.47  & 11.51  \\
          & CPSPAN & 38.35  & 14.35  & 10.94  & 38.51  & 13.11  & 10.47  & 38.21  & 11.80  & 11.30  & 37.86  & 12.03  & 10.16  \\
          & ICMVC & \textcolor[rgb]{ 0,  .439,  .753}{\textbf{54.01 }} & \textcolor[rgb]{ 0,  .439,  .753}{\textbf{36.52 }} & \textcolor[rgb]{ 0,  .439,  .753}{\textbf{29.44 }} & 51.09  & 30.71  & 25.66  & 47.59  & 28.43  & 23.56  & 47.67  & 26.83  & 22.14  \\
          & DVIMC & 44.06  & 16.08  & 15.21  & 43.06  & 10.84  & 11.77  & 35.37  & 5.14  & 4.98  & 32.18  & 3.02  & 3.15  \\
          & \textbf{MVP(Ours)} & \textcolor[rgb]{ .753,  0,  0}{\textbf{57.83 }} & \textcolor[rgb]{ .753,  0,  0}{\textbf{37.25 }} & \textcolor[rgb]{ .753,  0,  0}{\textbf{32.20 }} & \textcolor[rgb]{ .753,  0,  0}{\textbf{55.70 }} & \textcolor[rgb]{ .753,  0,  0}{\textbf{37.02 }} & \textcolor[rgb]{ .753,  0,  0}{\textbf{31.35 }} & \textcolor[rgb]{ .753,  0,  0}{\textbf{53.67 }} & \textcolor[rgb]{ 0,  .439,  .753}{\textbf{35.43 }} & \textcolor[rgb]{ .753,  0,  0}{\textbf{30.24 }} & \textcolor[rgb]{ .753,  0,  0}{\textbf{55.16 }} & \textcolor[rgb]{ .753,  0,  0}{\textbf{36.00 }} & \textcolor[rgb]{ .753,  0,  0}{\textbf{30.66 }} \\
    \midrule
    \multirow{9}[2]{*}{\begin{sideways}\textbf{SensIT Vehicle}\end{sideways}} & DCCA  & 57.11  & 11.60  & 14.26  & 57.76  & 14.46  & 16.62  & 53.89  & 11.01  & 12.79  & 50.69  & 8.47  & 9.75  \\
          & DCCAE & 57.93  & 12.84  & 15.28  & 60.32  & 19.42  & 22.46  & 54.08  & 13.32  & 15.40  & 51.33  & 10.31  & 11.81  \\
          & DIMVC & 59.72  & 17.31  & 21.82  & 62.38  & 23.18  & 27.93  & 61.09  & 22.08  & 26.21  & 60.57  & 21.36  & 25.44  \\
          & DSIMVC & 69.82  & 33.40  & 34.88  & 69.24  & \textcolor[rgb]{ 0,  .439,  .753}{\textbf{32.95 }} & 33.50  & \textcolor[rgb]{ 0,  .439,  .753}{\textbf{68.05 }} & \textcolor[rgb]{ 0,  .439,  .753}{\textbf{31.49 }} & 31.56  & \textcolor[rgb]{ 0,  .439,  .753}{\textbf{66.54 }} & \textcolor[rgb]{ 0,  .439,  .753}{\textbf{30.08 }} & 29.73  \\
          & Completer & 52.63  & 5.33  & 3.72  & 55.59  & 12.09  & 11.29  & 55.09  & 13.96  & 12.52  & 56.37  & 14.77  & 14.66  \\
          & CPSPAN & 63.48  & 28.43  & 32.32  & 64.03  & 28.10  & 32.33  & 65.47  & 28.62  & 32.25  & 64.16  & 28.60  & \textcolor[rgb]{ 0,  .439,  .753}{\textbf{31.28 }} \\
          & ICMVC & \textcolor[rgb]{ 0,  .439,  .753}{\textbf{71.50 }} & \textcolor[rgb]{ 0,  .439,  .753}{\textbf{34.53 }} & \textcolor[rgb]{ 0,  .439,  .753}{\textbf{36.41 }} & \textcolor[rgb]{ 0,  .439,  .753}{\textbf{70.79 }} & 32.95  & 33.63  & 67.80  & 29.11  & 29.36  & 54.11  & 19.39  & 18.93  \\
          & DVIMC & 69.48  & 30.41  & 34.98  & 69.58  & 30.31  & \textcolor[rgb]{ 0,  .439,  .753}{\textbf{35.26 }} & 67.89  & 29.27  & \textcolor[rgb]{ 0,  .439,  .753}{\textbf{34.00 }} & 61.91  & 25.84  & 28.59  \\
          & \textbf{MVP(Ours)} & \textcolor[rgb]{ .753,  0,  0}{\textbf{72.08 }} & \textcolor[rgb]{ .753,  0,  0}{\textbf{34.81 }} & \textcolor[rgb]{ .753,  0,  0}{\textbf{41.10 }} & \textcolor[rgb]{ .753,  0,  0}{\textbf{71.28 }} & \textcolor[rgb]{ .753,  0,  0}{\textbf{33.57 }} & \textcolor[rgb]{ .753,  0,  0}{\textbf{39.76 }} & \textcolor[rgb]{ .753,  0,  0}{\textbf{70.21 }} & \textcolor[rgb]{ .753,  0,  0}{\textbf{32.05 }} & \textcolor[rgb]{ .753,  0,  0}{\textbf{38.06 }} & \textcolor[rgb]{ .753,  0,  0}{\textbf{68.87 }} & \textcolor[rgb]{ .753,  0,  0}{\textbf{30.08 }} & \textcolor[rgb]{ .753,  0,  0}{\textbf{36.23 }} \\
    \bottomrule
    \end{tabular}%
    }
  \label{tab:clustering}%
\end{table}%
%%%%%%%%%%%%%%%%%%%%%%%%%%%%%%%%%%%%%%%%%%%%%%%%%%%%%%%%

\subsection{Enhanced Performance for Incomplete Multi-view Clustering}\label{sec:4.1}

To evaluate the ability of our method to handle incomplete multi-view data, we first assess clustering performance under various missing ratios, following \citet{zhang2020deep, completer2, cai2023realize}. 
We apply K-means clustering to the consensus representation $\omega$ and compare our method with eight IMVRL approaches: \textbf{DCCA}, \textbf{DCCAE}, \textbf{DIMVC}, \textbf{DSIMVC}, \textbf{Completer}, \textbf{CPSPAN}, \textbf{ICMVC}, and \textbf{DVIMC} (see Section \ref{sec:2} and Table \ref{tab:a2} in Appendix \ref{app:D1} for their modeling details). 
Clustering performance is measured by Accuracy (ACC), Normalized Mutual Information (NMI), and Adjusted Rand Index (ARI) in previous works.

The experimental results in Tables \ref{tab:clustering} and \ref{tab:clustering complete} show that our method consistently achieved the best (\textcolor[rgb]{.753, 0, 0}{\textbf{bold red}}) or second-best (\textcolor[rgb]{0, .439, .753}{\textbf{bold blue}}) performance across various missing ratios and datasets. This highlights the robustness of our method on both large and small datasets with varying class numbers. In contrast, other methods fluctuated significantly across datasets. 
Notably, on smaller datasets like \textcolor[rgb]{0, 0.5, 0}{Handwritten}, with six views provided rich self-supervised information, our method maintained strong performance even as missing rates $\eta$ increased. On \textcolor[rgb]{0, 0.5, 0}{CUB}, where image-text modality disparity is larger, our approach excelled at moderate missing rates ($\eta=0.1$, ACC: 78.67\% vs. 72.93\%; $\eta=0.3$, ACC: 74.97\% vs. 66.83\%) by better integrating complementary information from different views. However, as $\eta$ rose, the limited views and small sample size led to a faster decline in performance than on other datasets. Still, our method stayed competitive with DSIMVC, which uses bi-level optimization to impute missing views, far outperforming the VAE-based DVIMC. On \textcolor[rgb]{0, 0.5, 0}{SensIT Vehicle}, with also two views but a larger sample size, our method experienced a smaller performance drop and maintained the best results even at higher missing ratios. The \textcolor[rgb]{0, 0.5, 0}{Reuters}, with its large size but more views, more clearly highlighted our method’s advantage, achieving an 8.04\% lead in ACC over the second-best method at $\eta=0.7$. For datasets with more classes, like \textcolor[rgb]{0, 0.5, 0}{Scene15} and \textcolor[rgb]{0, 0.5, 0}{Handwritten}, our method performed comparably to DVIMC, which uses a Gaussian Mixture for explicit class modeling in the VAE. However, as $\eta$ increased, DVIMC struggled with incomplete information due to missing views, while our method mitigated this by inferring complete-view information, maintaining robustness.

\subsection{Comparing with other MVAEs Using Two Image Datasets}\label{sec:4.2}

In this section, we compare our method with six other MVAEs that utilize different posterior and prior modeling techniques: \textbf{mVAE} \citep{wu2018multimodal}, \textbf{mmVAE} \citep{shi2019variational}, \textbf{MoPoE} \citep{suttergeneralized}, \textbf{mmJSD} \citep{mmJSD}, \textbf{MVTCAE} \citep{MVTCAE}, and \textbf{MMVAE+} \citep{palumbo2023mmvae+}. 
These models can naturally adapt to incomplete scenarios because their mean-based fusion can accommodate any number of views, as validated by \citet{MVTCAE}. 
We perform experiments on two tasks: \textbf{multi-view clustering} and \textbf{generation}, to demonstrate that our learned representation is able to extract more sufficient information from multiple views and infer missing views from incomplete observations while maintaining consistent semantics. We conduct our evaluation using two image datasets, \textcolor[rgb]{0, 0.5, 0}{PolyMNIST} and \textcolor[rgb]{0, 0.5, 0}{MVShapeNet}.
%, each comprising five views with different definitions of `view'.
% , uncover the underlying data structures, 
% Their modeling details are summarized in Section \ref{sec:2} and Table \ref{tab:a3}.

\subsubsection{PolyMNIST: Preserving Consistent Semantics Across Diverse Styles}\label{sec:4.2.1}

For the PolyMNIST dataset, the shared information across its five views is the digit ID, while view-specific details include handwriting styles and backgrounds. Although the digit is present in each view, it can be obscured or unclear in some images, making it crucial to aggregate complementary information from all views for accurate recognition. We use the original split with $60$K tuples for training and $10$K for testing, All models are trained on incomplete observations ($\eta=0.5$), with 50\% of samples having 1 to 4 views missing. We evaluate model performance on the testing data across all possible incomplete view combinations, totaling $C_5^1 + C_5^2 + C_5^3 + C_5^4 = 30$ cases.
%highlighting our approach's effectiveness in mitigating information imbalance in incomplete multi-view data.
%however, they were not specifically designed to compensate for missing information

\textbf{Evaluation protocol}~~ At test time, given the incomplete subset $\{x^{(v)}\}_{\mathcal{I}}$, we extract the representation using $|\mathcal{I}|$ encoders and evaluate its quality. We  perform K-means clustering directly and use Normalized Mutual Information (NMI) as the performance metric. Next, we generate all views $\{x^{(v)}\}_{\left[L\right]}$ using the corresponding decoders. To assess consistent semantics across views, we measure coherence accuracy by feeding the generated views into a pretrained CNN-based classifier and checking if the predictions match the labels of the given subsets. Finally, we use the Structural Similarity Index Measure (SSIM) to compare the similarity between the reconstructions and the ground truth. All results are averaged across subsets of the same size.

%, we extract the representation of every subset of views using encoders and generate views that are absent in the subset using decoders. 

\textbf{Results}~~ The left plot in Figure \ref{fig: lineplots} shows that our method consistently outperforms others in clustering across various incomplete scenarios. As the number of missing views increases, PoE- and MoE-based fusion methods experience sharp performance declines due to severe incomplete information. In contrast, only our method and MVTCAE maintain high levels of structural information. 
Our approach explicitly establishes inter-view correspondences to compensate for missing information, encoding different views into a latent space that facilitates easier transformations between them. MVTCAE penalizes latent information that cannot be inferred from other views to retain only highly correlated details. Both methods enforce a form of consistency in the representation, ensuring that the aggregated information is less affected by the presence of missing views.

%%%%%%%%%%%%%%%%%% figure: line plots %%%%%%%%%%%%%%%%%%
\begin{figure}[t]
\centering
\includegraphics[width=1\columnwidth]{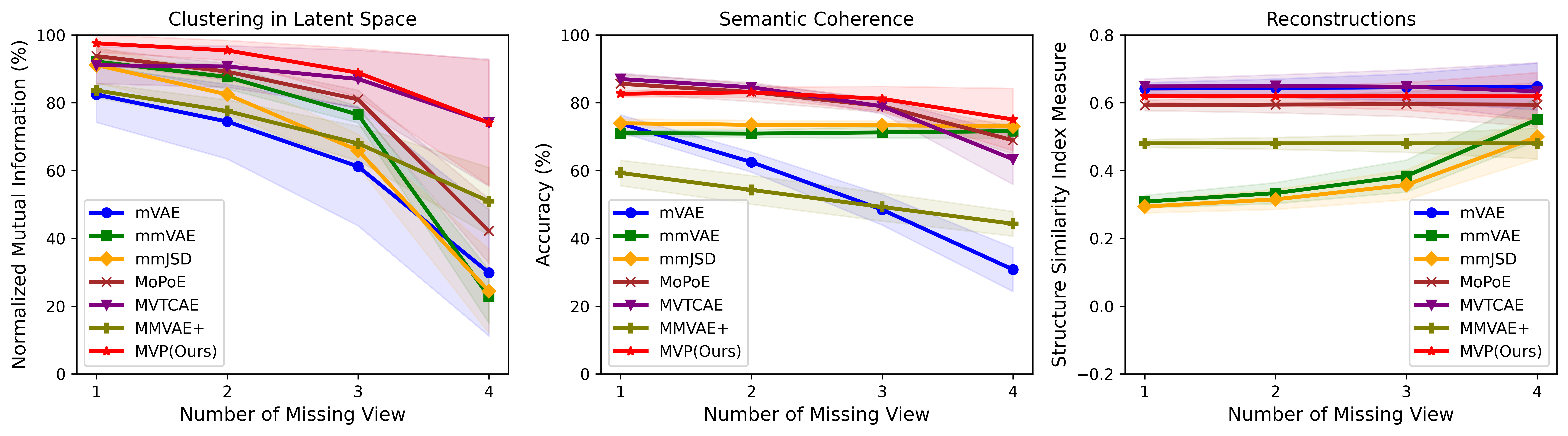} 
\caption{\textbf{Quantitative results on the PolyMNIST dataset compared to six MVAEs.} Evaluations were conducted on all incomplete subsets of the testing set, averaged across same-sized subsets.}
\label{fig: lineplots}
\end{figure}
%%%%%%%%%%%%%%%%%%%%%%%%%%%%%%%%%%%%%%%%%%%%%%%%%%%%%%%%

%%%%%%%%%%%%%%%%% figure: generation %%%%%%%%%%%%%%%%%%
\begin{figure}[t]
\centering
\includegraphics[width=1\columnwidth]{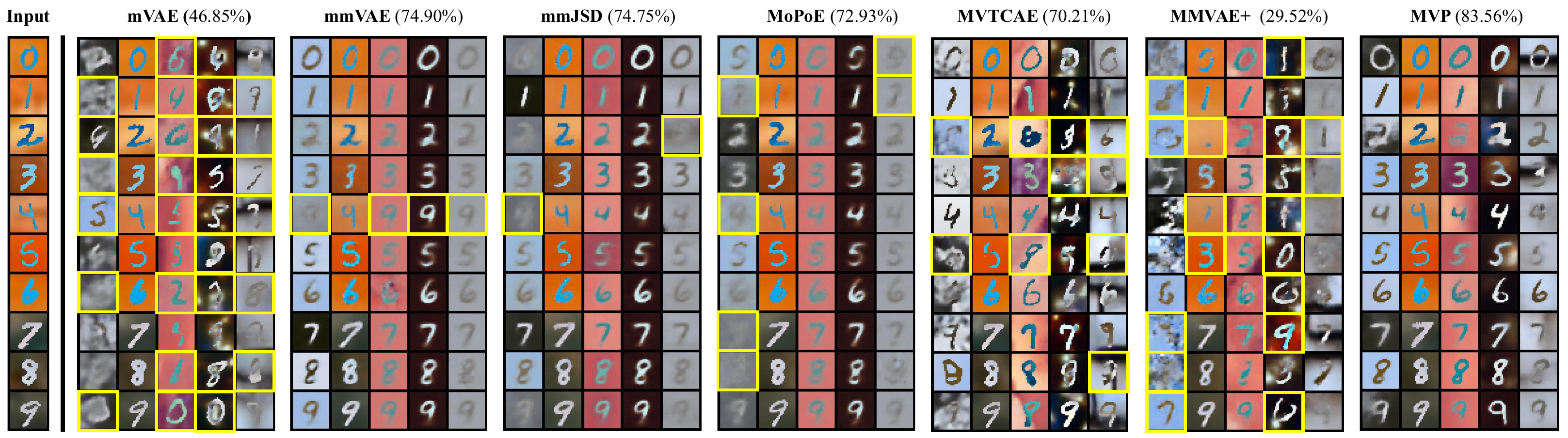} 
\caption{\textbf{Multi-view sample generation conditioned on view 2.} The leftmost column shows input images of view 2, randomly selected from digit classes 0 to 9. The following columns display multi-view samples (five views per sample) generated by various models. Ideally, the conditional generated digits should match the input digit, with \textcolor[rgb]{ 1, 0.85, 0}{\textbf{yellow boxes}} highlighting inconsistencies. Accuracy scores, shown in parentheses, are derived from pre-trained classifiers on the generated images.}
\label{fig: generation}
\end{figure}
%%%%%%%%%%%%%%%%%%%%%%%%%%%%%%%%%%%%%%%%%%%%%%%%%%%%%%

The middle plot in Figure \ref{fig: lineplots} shows the semantic coherence results. Our method exhibits a smaller performance drop and surpasses others as missing views increase. In contrast, other models show an obvious decline when up to four views are missing. Perversely, MMVAE and mmJSD slightly improve as more views are lost due to MoE’s limitations in aggregating information \citep{MVTCAE}, which favors single-view identification. Figure \ref{fig: generation} illustrates a task where only view 2 was provided, with representations extracted from incomplete observations to generate full views. Our method achieved 83.56\% accuracy in maintaining semantic consistency with the input, significantly outperforming other models. In contrast, other methods showed notable inconsistencies across the five-view tuples, highlighted by the yellow boxes.
mVAE struggled to maintain consistent semantics across views due to the precision miscalibration of each view caused by PoE fusion \citep{shi2019variational}. MoE-based models like mmVAE, mmJSD, and MoPoE maintained some semantic coherence but failed with more challenging samples, producing blurry backgrounds. MVTCAE generated clearer, more varied backgrounds but still showed semantic inconsistencies in several samples. Although designed to retain highly correlated information between views, missing views made it harder to maintain consistency, especially with only one view available. MMVAE+ produced clearer backgrounds than mmVAE by separating shared and view-specific subspaces, but sampling missing-view information from auxiliary priors caused severe category confusion.

%%%%%%%%%%%%%%%%% figure: generation %%%%%%%%%%%%%%%%%%
\begin{figure}[t]
\centering
\includegraphics[width=1\columnwidth]{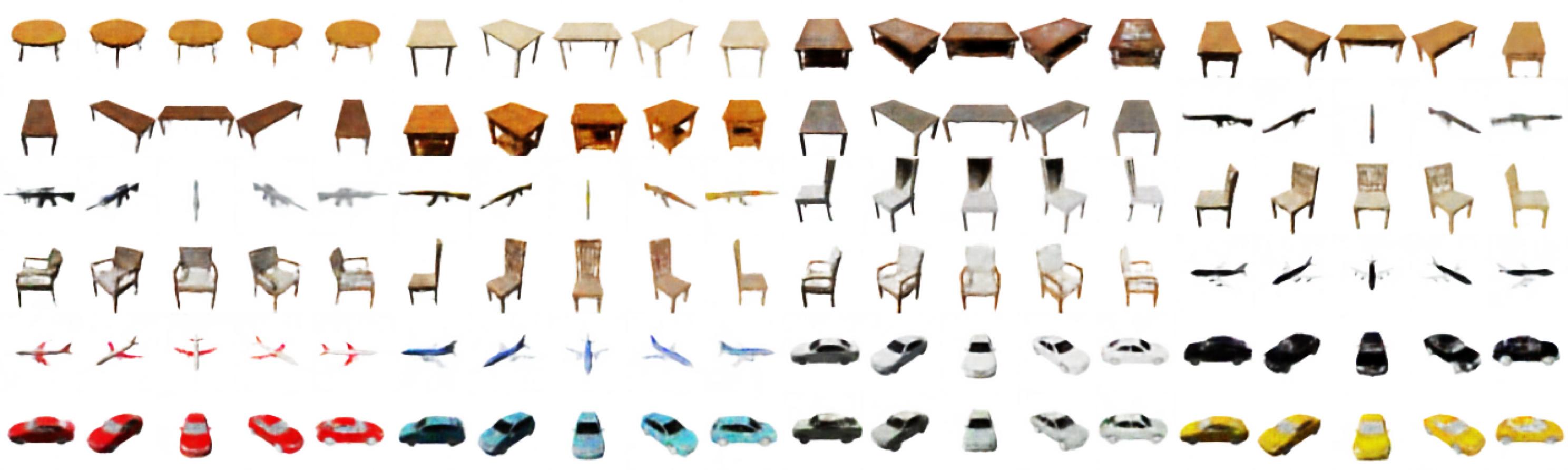} 
\caption{\textbf{Multi-view samples generated by our method on the MVShapeNet dataset.} Categories include table, chair, car, airplane, and rifle, with each sample consisting of five views from different angles. The model was trained with missing rate $\eta = 0.5$ and tested with only view 5 available.}
\label{fig: angles}
\end{figure}
%%%%%%%%%%%%%%%%%%%%%%%%%%%%%%%%%%%%%%%%%%%%%%%%%%%%%%

The right plot in Figure \ref{fig: lineplots} shows the SSIM results. Also, mmVAE and mmJSD exhibit improved reconstruction quality as the number of missing views increases---a counterintuitive trend, yet consistent with theirs in semantic coherence. MMVAE+ performs moderately, likely because we used the same network structure for all models rather than its original ResNet architecture, suggesting it may overly rely on powerful decoders for good generation. %The reconstruction quality of other models did not significantly decline with missing views. However, 
Since SSIM primarily reflects the quality of dominant backgrounds in the PolyMNIST dataset, our method is the only one that consistently balances high semantic coherence with diverse background styles. %, indicating that the learned representations contain sufficient and consistent information.

\subsubsection{MVShapeNet: Capturing Detailed Information from Various Angles}\label{sec:4.2.2}

We further evaluate our method on the MVShapeNet dataset. Unlike PolyMNIST, where views share a few common pixels depicting the same digit against various background styles, MVShapeNet presents a smaller inter-view gap and greater consistency due to its uniform white backgrounds with the same object captured from different real-world angles. We use an 80:20 train-test split and apply the same experimental settings as in Section \ref{sec:4.2.1}.

In this setting, MVAEs are less prone to semantic inconsistencies observed with PolyMNIST, meaning the generated object generally matches the input. However, we expect the learned representations to preserve more detail such as furniture hollowing, textures, and lightnings, which remains a challenge under high missing ratios. To evaluate this, we compare our method with other MVAEs at missing rates of $\eta=0.1$ and $0.5$, testing across all possible incomplete combinations. For quantitative evaluation, we use average SSIM to evaluate the basic structure of generated images. Additionally, we pretrain two CNN-based classifiers on all views to evaluate whether the decoded images accurately capture object categories and perspective angles. As demonstrated in Table \ref{tab:shape} of Appendix \ref{app:b1}, our method consistently performs well, regardless of whether it is trained at high or low missing rates and tested on any incomplete combination. In contrast, other methods either show a dramatic performance drop as the number of missing views increases or rely on memorizing incomplete samples without effectively aggregating complementary information from additional views. As shown in Figure \ref{fig: angles}, when only view 5 is provided, MVP leverages inter-view correspondences to transform latent representations and successfully infer the missing views. Further visualizations in Figure \ref{fig:shape05} illustrates that other models tend to produce blurry reconstructions with a lack of detail or unclear perspective angles. In contrast, our method clearly infers more accurate details in missing views, such as the placement of table legs at different angles, changes in light and shadow, and hollowed-out armrests on chairs. All these results suggest that our method learns representation with more sufficient and consistent information from incomplete multi-view data.

\section{Conclusion}

In this paper, we presented Multi-View Permutation of VAEs (MVP), a novel framework to address the challenges of incomplete multi-view learning. By explicitly modeling inter-view correspondences in the latent space, MVP effectively captured invariant relationships between views. We derived a valid ELBO for efficient optimization by applying permutation and partition operations to the latent variable set. Notably, these operations on multi-view representations are not limited to the VAE framework and can be extended into non-generative models. Additionally, the introduction of an informational prior using cyclic permutations of posteriors resulted in regularization terms with both practical meanings and theoretical guarantees. Extensive experiments on seven diverse datasets demonstrated the robustness and superiority of MVP over existing methods, particularly in scenarios with high missing ratios. These findings underscore its potential to reveal more informative latent spaces and fully unlock the capability of MVAEs to handle incomplete data. 

\section*{Acknowledgments}
Special thanks to Yuxin Li, Hangqi Zhou, Hanru Bai, An Sui, Fuping Wu, Yuanye Liu, Yibo Gao, and Ruofeng Mei for their invaluable feedback on the manuscript. 

\bibliography{iclr2025_conference}
\bibliographystyle{iclr2025_conference}

\newpage
\appendix
%%%%%%%%%%%%%%%%%%%%%%%%%%%%%%%%%%%%%%%%%%%%%%%%%%%%%%%%%%%%%%
%%%%%% 下面这段+titletoc package就可以给appdendix加单独的目录了
\noindent {\huge \textbf{Appendices}}
\\

\startcontents[app] % titletoc: can do any scope
{\hypersetup{linkcolor=black}\printcontents[app]{}{0}{} }
%%%%%%%%%%%%%%%%%%%%%%%%%%%%%%%%%%%%%%%%%%%%%%%%%%%%%%%%%%%%%%%
% \setcounter{table}{0}  
% \setcounter{figure}{0}
% \setcounter{section}{0}
% \setcounter{equation}{0}
%定义编号格式,在数字序号前加字符“A"
% \renewcommand{\thetable}{A\arabic{table}}
% \renewcommand{\thefigure}{A\arabic{figure}}
% \renewcommand{\theequation}{A\arabic{equation}}
\newpage
\section{Theoretical Analysis}\label{app: A1}

In this section, we present a comprehensive theoretical analysis of the proposed method, offering additional details to complement the main text. 

% First, in \ref{app: A1.1}, we elucidate two fundamental operations on sets discussed in this paper: \textbf{partition} and \textbf{permutation}. We provide a rigorous mathematical explanation to demonstrate how permutations can generate different complete-view partitions, as discussed in Section \ref{sec:3.1}.
% Next, \ref{app: A1.2} introduces the similarity measure of distributions-\textbf{Dissimilarity Coefficient} ($d.c.$), explaining how the informational prior in our method turns the regularization term into a new $d.c.$. In \ref{app: A1.3}, we present a detailed derivation of the Evidence Lower Bound (ELBO) and provide further analysis. Finally, \ref{app: A1.4} explains the role of the regularization terms in the ELBO by visualizing latent space dynamics during training.

\subsection{Partition and Permutation of a Set}\label{app: A1.1}

In mathematics, a \textbf{partition} of a set refers to a division of its elements into non-empty, mutually exclusive subsets, such that each element of the original set belongs to exactly one of these subsets. In simpler terms, a partition is a ``set of sets'', where each subset is known as a \textbf{cell}.

\begin{definition}[\textbf{Partition of A Set} \citep{brualdi2004introductory}]
    A family of sets $\mathcal{P}$ is a partition of the set $X$ if and only if all of the following conditions hold:
    \begin{itemize}
        \item[(1)] $\mathcal{P}$ does not contain the empty set (\textit{i.e.,} $\emptyset \notin P$).
        \item[(2)] The union of the sets in $\mathcal{P}$ is equal to $X$ (\textit{i.e.,} $\bigcup_{A\in \mathcal{P}}A=X$). The sets in $\mathcal{P}$ are said to exhaust or cover $X$.
        \item[(3)] The intersection of any two distinct sets in $\mathcal{P}$ is empty (\textit{i.e.,} $\forall A,B\in \mathcal{P}, A\neq B\implies A\cap B=\emptyset$). The elements of $\mathcal{P}$ are said to be pairwise disjoint or mutually exclusive.
    \end{itemize}
\end{definition}

In this work, we introduce two specialized types of partitions applied to the latent variable set $\boldsymbol{\mathcal{Z}}$: the \textbf{single-view partition} $\mathcal{P}_s(\boldsymbol{\mathcal{Z}})$ and the \textbf{complete-view partition} $\mathcal{P}_c(\boldsymbol{\mathcal{Z}})$. These partitions are tailored to the particular structure and requirements of the problem under study. 
% Specifically, $\mathcal{P}_s(\boldsymbol{\mathcal{Z}})$ groups latent variables by their corresponding views, grouping together variables $z_v^{(l)}$ with the same superscript, regardless of their origin (whether self-encoded or cross-transformed). This enables the measurement of distributional similarity within each cell and promotes consistency in establishing inter-view correspondences.

The visualization of these two partitions is facilitated by arranging the variables in a matrix and dividing them according to rows and columns. In Figure \ref{fig:a1}, the left matrix $Z_0$ contains diagonal elements directly derived from observed data, while off-diagonal elements represent transformations of the diagonal elements. Each column consists of variables $z^{(l)}_{*}$ corresponding to the $l$-th view, derived from different sources. Hence, the single-view partition of $\boldsymbol{\mathcal{Z}}$ corresponds to the set of columns in the matrix.
In contrast, the complete-view partition can be represented by dividing the matrix by rows, where each row encompasses all $L$ views. However, this partition is not unique; by reordering the elements within each column and then partitioning the matrix by rows, we obtain a new complete-view partition $\mathcal{P}_c'(\boldsymbol{\mathcal{Z}})$, as illustrated by the right matrix $Z_1$ in Figure \ref{fig:a1}. 

\begin{figure}[H]
\centering
\includegraphics[width=0.65\columnwidth]{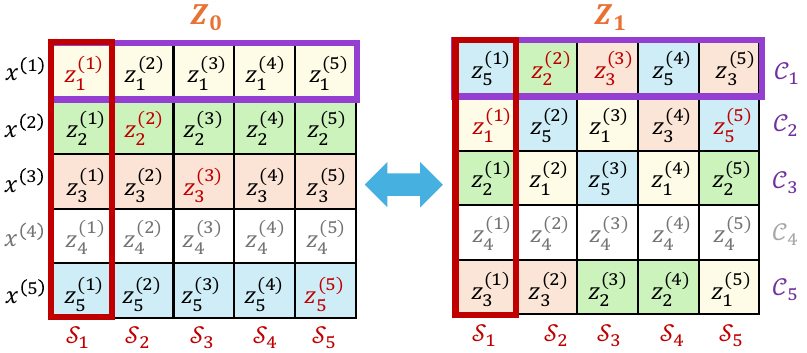} 
\caption{An illustration of column-wise permutations to generate different complete-view partitions. In the first column (red box), the permutation $\sigma_1 = (1532)(4)$ is applied, and its inverse $(\sigma_1)^{-1} = (2351)(4)$ reverses the cycle order. This results in $\sigma_1(\big[ z_1^{(1)}, z_2^{(1)}, z_3^{(1)}, z_4^{(1)}, z_5^{(1)} \big]) = \big[ z_5^{(1)}, z_1^{(1)}, z_2^{(1)}, z_4^{(1)}, z_3^{(1)} \big]$. The same procedure is applied to the other columns. Partitioning each row (purple box) yields the complete-view partition.}
\label{fig:a1}
\end{figure}

Next, we define \textbf{permutations}, which are central to generating new partitions.

\begin{definition}[\textbf{Permutation of A Set} \citep{bona2008combinatorics}]
A permutation of a set $X$ is a bijective function $\sigma:X \rightarrow X$. In other words, it is a one-to-one mapping of the set $X$ onto itself.
\end{definition}

% An important property of permutations is that every permutation has a corresponding inverse permutation, which precisely reverses the mapping of the original permutation. 

A common method for representing permutations is \textbf{cycle notation}, where a permutation is expressed as a product of disjoint cycles. Each cycle indicates how the permutation rearranges a subset of elements, moving each element to the position of the next one in the cycle. For example, consider a permutation $\sigma$ of the set $X = {1, 2, 3, 4}$, defined by $\sigma(1) = 2$, $\sigma(2) = 3$, $\sigma(3) = 1$, and $\sigma(4) = 4$. In cycle notation, this permutation is written as $\sigma = (123)(4)$. The cycle $(123)$ indicates that $1$ is mapped to $2$, $2$ to $3$, and $3$ back to $1$. The element $4$ remains fixed, represented as $(4)$, often referred to as a \textbf{fixed point}.

\subsection{Cyclic Permutation and Sattolo's Algorithm}\label{app: A1.1.5}

A cyclic permutation is a specific type of permutation that consists of exactly one cycle in its cycle notation, with the cycle length equal to the size of the set \citep{gross2016combinatorial}. For example, a cyclic permutation $\sigma$ of the set $X = \{1, 2, 3, 4\}$ can be written as $(k_1 k_2 k_3 k_4)$, where $k_i \in X$. A formal definition is given in Definition \ref{def:3}. In a cyclic permutation, each element of a set with more than one element is cyclically shifted, meaning each element is mapped to another, and after a number of mappings equal to the set size, every element returns to its original position.
Cyclic permutations are particularly useful in our setting, as they guarantee convergence of the regularization term (see Appendix \ref{app: A1.2}) and can be efficiently generated using \textbf{Sattolo's Algorithm} \citep{wilson2005overview}, which operates with linear time complexity.

\begin{center}
\begin{minipage}{0.6\linewidth} % Adjust the width here
\begin{algorithm}[H]
    \caption{Sattolo's Algorithm for Cyclic Permutation}
    \label{alg:sattolo}
    \begin{algorithmic}[1]
        \Require Array $A$ of size $n$
        \Ensure A cyclic permutation of $A$
    \end{algorithmic}
    \noindent Initialize array length $n = |A|$.
    \begin{algorithmic}[1]
        \For{$i = n-1$ to $1$}
            \State Randomly select $j$ from $0$ to $i-1$;
            \State Swap $A[i]$ and $A[j]$;
        \EndFor
        \State \Return $A$;
    \end{algorithmic}
\end{algorithm}
\end{minipage}
\end{center}

The algorithm starts with the identity permutation $\sigma^{(0)}=Id$. For each $i\in\{1,\ldots,n-1\}$, we denote by $\sigma^{(i)}$ the permutation obtained after the $i$ first steps. Step $i$ consists in choosing a random integer $k_i$ in $\{1,\ldots,n-i\}$ and swapping the values of $\sigma^{(i-1)}$ at places $k_i$ and $n-i+1$. In this way, we obtain a new permutation $\sigma^{(i)}$, which is equal to $\tau_{k_i,n-i+1}\circ\sigma^{(i-1)}$, where $\tau_{k_i,n-i+1}$ is the transposition exchanging $k_i$ and $n-i+1$.
Finally, the algorithm returns the permutation $\sigma=\sigma^{(n-1)}$. This process is captured in Algorithm \ref{alg:sattolo}. Figure \ref{fig:a2} provides an example where $n = 5$, and the sequence of random swaps is $3, 1, 2, 1$. The resulting cyclic permutation is $1\to5\to3\to2\to4\to1.$

\begin{figure}[H]
\centering
\includegraphics[width=0.22\columnwidth]{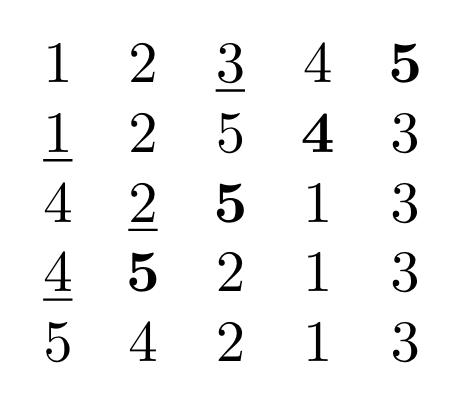} 
\caption{Execution of Sattolo’s algorithm for $n = 5$, where the sequence of random swaps is $3, 1, 2, 1$ \citep{wilson2005overview}.}
\label{fig:a2}
\end{figure}

\begin{proposition}\label{Sattolo} 
The mapping produced by Sattolo’s algorithm is a cyclic permutation, and every cyclic permutation can be obtained using Sattolo’s algorithm.
\end{proposition}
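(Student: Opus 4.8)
The plan is to view a single run of the algorithm as the composition $\sigma = \tau_{k_{n-1},2}\circ\cdots\circ\tau_{k_1,n}$ determined by a choice sequence $(k_1,\dots,k_{n-1})$ with $k_i\in\{1,\dots,n-i\}$, and to prove both halves at once by showing that the map $\Phi:(k_1,\dots,k_{n-1})\mapsto\sigma$ is a bijection from the set of choice sequences onto the set of cyclic permutations (the $n$-cycles) of $\{1,\dots,n\}$. Since there are exactly $\prod_{i=1}^{n-1}(n-i)=(n-1)!$ choice sequences and exactly $(n-1)!$ $n$-cycles, it suffices to show that $\Phi$ lands among $n$-cycles (this is the first assertion of the proposition) and that it is onto (the second assertion); I will in fact exhibit a recursive two-sided inverse, which gives injectivity for free.

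The main tool I would use is the standard fact that left-multiplying a permutation $\pi$ by a transposition $\tau_{a,b}$ changes the number of cycles by exactly one: it merges the cycles containing $a$ and $b$ when these are distinct, and splits a single cycle when $a$ and $b$ share a cycle. To prove $\sigma$ is always an $n$-cycle I would induct on $n$. The crucial observation is that the transpositions $\tau_{k_i,n-i+1}$ for $i\ge 2$ involve only the pivots $n-1,\dots,2$ and indices $k_i\le n-2$, so their product $\rho=\tau_{k_{n-1},2}\circ\cdots\circ\tau_{k_2,n-1}$ fixes $n$ and is precisely the output of Sattolo's algorithm run on $\{1,\dots,n-1\}$ (after the reindexing $i\mapsto i-1$). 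By the induction hypothesis $\rho$ is an $(n-1)$-cycle on $\{1,\dots,n-1\}$. Writing $\sigma=\rho\circ\tau_{k_1,n}$ and noting that $n$ (a fixed point of $\rho$, hence its own trivial cycle) and $k_1\in\{1,\dots,n-1\}$ lie in different cycles of $\rho$, the merge case of the lemma shows $\sigma$ is a single cycle of length $n$.

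For the second half I would invert this recursion. Given an arbitrary $n$-cycle $\sigma$, set $k_1=\sigma^{-1}(n)$ and define $\rho=\sigma\circ\tau_{k_1,n}$; then $\rho(n)=\sigma(k_1)=n$, so $n$ is fixed, and since $k_1$ and $n$ lie in the same cycle of $\sigma$, the split case forces $\rho$ to be an $(n-1)$-cycle on $\{1,\dots,n-1\}$. By induction $\rho$ is realized by some choice sequence $(k_2,\dots,k_{n-1})$ for the size-$(n-1)$ algorithm, and prepending the step $\tau_{k_1,n}$ realizes $\sigma$; thus every cyclic permutation arises from a run. The identity $k_1=\sigma^{-1}(n)$ simultaneously shows $\Phi$ is injective, so $\Phi$ is the claimed bijection. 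The base cases $n=1,2$ are immediate, the only $2$-cycle $(1\,2)=\tau_{1,2}$ coming from $k_1=1$.

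The step I expect to require the most care is the organization of the induction. A tempting but false approach is to argue that the partial product after each swap is itself a single cycle; small examples (for instance $n=4$ with $k_1=1,k_2=2$, where the intermediate product is $(1\,4)(2\,3)$) show the running permutation can genuinely carry several cycles, so a naive step-by-step invariant fails. The remedy is to peel off the transposition touching the largest element $n$ rather than tracking each swap, exploiting that $n$ is untouched by the remaining smaller and self-similar run. Beyond this, one must keep the composition order and the reindexing of the sub-run consistent with the stated convention $\sigma^{(i)}=\tau_{k_i,n-i+1}\circ\sigma^{(i-1)}$; because cycle type is invariant under conjugation, the left-versus-right bookkeeping does not affect the $n$-cycle claim and only enters the surjectivity computation, where $k_1=\sigma^{-1}(n)$ must be matched against the chosen convention.
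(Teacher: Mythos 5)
Your proof is correct, and its inductive skeleton---peeling off the single transposition $\tau_{k_1,n}$ that touches the largest element and applying the induction hypothesis to the self-similar sub-run on $\{1,\dots,n-1\}$---is the same as the paper's. Where you genuinely diverge is in the two supporting mechanisms. First, to see that composing the $(n-1)$-cycle $\rho$ with $\tau_{k_1,n}$ yields an $n$-cycle, the paper computes directly in cycle notation, writing $\rho=(n,r_1,\dots,r_{n-2})$ and $\sigma=(n,k_1,r_1,\dots,r_{n-2})$; your merge/split lemma (a transposition changes the cycle count by exactly one, merging when its two entries lie in different cycles) reaches the same conclusion more robustly. Indeed the paper's bookkeeping here is slightly off---since the sub-product $\rho$ involves only elements of $\{1,\dots,n-1\}$ it fixes $n$, so it is a cycle on $\{1,\dots,n-1\}$ rather than one containing $n$ (for $n=3$, $k_1=k_2=1$ one gets $\rho=(1\,2)$)---a slip your lemma-based route never has to confront; just note you state the lemma for left multiplication but apply it on the right, which is harmless since the merge/split criterion holds on either side. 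Second, for the converse the paper counts: it asserts the algorithm produces $(n-1)!$ \emph{distinct} cyclic permutations and matches this against the number of $n$-cycles, but the distinctness (equivalently, the claimed uniqueness of the decomposition) is asserted rather than proved. Your recursive inverse $k_1=\sigma^{-1}(n)$, $\rho=\sigma\circ\tau_{k_1,n}$ fills exactly this gap, delivering injectivity and surjectivity simultaneously and rendering the counting step optional. Your cautionary example $(1\,4)(2\,3)$, showing that intermediate partial products need not be single cycles and hence that a naive step-by-step invariant fails, is also a worthwhile observation the paper does not make.
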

\begin{proof}
The correctness of Sattolo’s algorithm follows from the fact that it generates a unique decomposition of a cyclic permutation $\sigma$ as a product of transpositions,
$\tau_{k_{n-1},2}\circ\cdots\circ\tau_{k_i,n-i+1}\circ\cdots\circ\tau_{k_1,n}$, where $k_i\in\{1,\ldots,n-i\}$ for $1\leq i\leq n-1.$ 

For $n=2$, the permutation $\sigma=\tau_{k_{1},2}$ is clearly a cyclic permutation. Assuming Sattolo’s algorithm works for sets with fewer than $n$ elements, we now demonstrate that it also holds for a set of size $n$.

Consider a set of $n$ elements. The permutation $\sigma = \tau_{k_{n-1},2}\circ\cdots\circ\tau_{k_i,n-i+1}\circ\cdots\circ \tau_{k_2,n-1} \circ \tau_{k_1,n}$ consists of $n-1$ transpositions. The first $n-2$ transpositions act on the set $\{1, \dots, n-1\}$, excluding $k_1$, where $k_1$ is swapped with $n$. By the inductive hypothesis, these form a cyclic permutation on $n-1$ elements, which can be represented as a single cycle: $\tau_{k_{n-1},2}\circ\cdots\circ\tau_{k_i,n-i+1}\circ\cdots\circ \tau_{k_2,n-1} = (n, r_1, \cdots r_{n-2})$.

Applying the final transposition $\tau_{k_1,n}$ swaps $k_1$ with $n$, yielding:
$$\sigma = (n, r_1, \cdots r_{n-2})\circ \tau_{k_1,n} = (n, k_1, r_1, \cdots r_{n-2}), $$ 
which is a cyclic permutation of $n$ elements.
By induction, Sattolo’s algorithm always produces a cyclic permutation for any set size $n$.

Since Sattolo’s algorithm generates $(n-1)!$ distinct cyclic permutations for a set of size $n$, and this is precisely the number of all possible cyclic permutations on $n$ elements, every cyclic permutation can be obtained using this method. This completes the proof. 
\end{proof}

Next, we explain how permutations generate different \textbf{complete-view partitions}. Let $v$ represent rows and $l$ columns, with the matrix form of the set $\boldsymbol{\mathcal{Z}}$ expressed as $Z_0 = \big[z_v^{(l)}\big]_{L\times L}$. We assume the missing views preserve the matrix’s square form, simplifying notation. Let the observed views be $\mathcal{I}=\{k_1, \dots, k_n\} \subseteq [L]$, with missing views given by $[L] \setminus \mathcal{I} = \{r_1, \dots, r_m\}$. A complete-view partition is obtained by dividing $Z_0$ by rows: $\mathcal{P}_c(\boldsymbol{\mathcal{Z}}; \text{Id}) = \{\boldsymbol{\mathcal{C}}_n^{0}\}_{n \in \mathcal{I}}$, where $\boldsymbol{\mathcal{C}}_n^{0} = \{z_n^{(l)}\}_{l=1}^L$, with $\text{Id}$ representing the identity mapping. We call $\mathcal{P}_c(\boldsymbol{\mathcal{Z}}; \text{Id})$ the \textbf{basic complete-view partition}.

The matrix $Z_0$ can also be expressed by columns as $\big[z_v^{(l)}\big]_{L\times L} = \big[\boldsymbol{z}^{(1)}, \cdots, \boldsymbol{z}^{(L)}\big]$. 
Consider $L$ permutations $\boldsymbol{\sigma}=\{\sigma_l\}_{l=1}^L$ defined on the index set $\left[L\right]$, each being a cyclic permutation on $\{k_1, \cdots, k_n\}$ with $m$ fixed points $\{r_1, \cdots, r_m\}$. A new matrix $Z_1$ can then be constructed by permuting the row indices within each column $\big[z_{\sigma_l(v)}^{(l)}\big]_{L\times L} = \big[\Tilde{\boldsymbol{z}}^{(1)}, \cdots, \Tilde{\boldsymbol{z}}^{(L)}\big]$, where $\Tilde{\boldsymbol{z}}^{(l)}$ contains the same elements as $\boldsymbol{z}^{(l)}$, which implies that single-view partition is unique. And this results in a new $\mathcal{P}_c(\boldsymbol{\mathcal{Z}}; \boldsymbol{\sigma}) = \{\boldsymbol{\mathcal{C}}_n\}_{n\in \mathcal{I}}$, where $\boldsymbol{\mathcal{C}}_n =\{z_{\sigma_l (n)}^{(l)}\}_{l=1}^L$.

As illustrated in Figure \ref{fig:a1}, if we disregard the missing views treated as fixed points, applying a cyclic permutation transforms matrix $Z_0$ into $Z_1$. Notably, applying the inverse of the permutation, which is also cyclic, restores $Z_1$ back to $Z_0$. This highlights a symmetric relationship between the two matrices, governed by cyclic permutations and their inverses.

\begin{proposition}\label{inverse cyclic}  
The inverse of a cyclic permutation is also a cyclic permutation. 
\end{proposition}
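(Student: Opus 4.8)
The plan is to verify directly that $\sigma^{-1}$ satisfies the two defining conditions of a cyclic permutation given in Definition \ref{def:3}. First I would observe that $\sigma^{-1}$ is automatically a bijection from $X$ to itself, since the inverse of a bijection is a bijection; hence the only real work lies in checking the two order conditions, namely that $(\sigma^{-1})^{|X|}$ acts as the identity and that $(\sigma^{-1})^k$ has no fixed point for any $1 \le k < |X|$.

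The key algebraic fact I would use throughout is that taking powers commutes with inversion, i.e. $(\sigma^{-1})^k = (\sigma^k)^{-1}$ for every $k \in \mathbb{N}_+$. For the first condition, since $\sigma$ is cyclic we have $\sigma^{|X|} = \mathrm{Id}$, and because the inverse of the identity is the identity, $(\sigma^{-1})^{|X|} = (\sigma^{|X|})^{-1} = \mathrm{Id}$, so $(\sigma^{-1})^{|X|}(x) = x$ for all $x \in X$.

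For the second condition I would argue by contraposition. Suppose $(\sigma^{-1})^k(x) = x$ for some $x \in X$ and some $1 \le k < |X|$. Applying $\sigma^k$ to both sides and using $(\sigma^{-1})^k = (\sigma^k)^{-1}$ yields $x = \sigma^k(x)$, contradicting the defining property that $\sigma^k(x) \neq x$ for all $k < |X|$. Hence no such $k$ exists, and $\sigma^{-1}$ meets both conditions, so it is a cyclic permutation.

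There is no genuinely hard step here; the statement is an almost immediate consequence of the definition. The only point requiring care is that \emph{both} conditions of Definition \ref{def:3} must be transferred, not just the ``period equals $|X|$'' condition: the minimality requirement $\sigma^k(x) \neq x$ for $k < |X|$ is not automatic and needs the short contrapositive argument above. As an alternative I could give a purely combinatorial proof by writing $\sigma = (k_1\, k_2 \cdots k_{|X|})$ in cycle notation and noting that $\sigma^{-1} = (k_{|X|} \cdots k_2\, k_1)$ is a single cycle of the same length, which is manifestly cyclic; I would nonetheless prefer the definition-based argument, since it matches the formalism of Definition \ref{def:3} used elsewhere in the paper.
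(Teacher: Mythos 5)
Your proof is correct, but it takes a genuinely different route from the paper's. The paper argues via cycle notation: writing $\sigma = (k_1\,k_2\cdots k_L)$, it observes that $\sigma^{-1} = (k_L\,k_{L-1}\cdots k_1)$ is again a single full-length cycle, and is done in two lines. You instead verify the two order-theoretic conditions of Definition~\ref{def:3} directly, using $(\sigma^{-1})^k = (\sigma^k)^{-1}$ to transfer both the periodicity condition $(\sigma^{-1})^{|X|} = \mathrm{Id}$ and, via a short contrapositive, the minimality condition $(\sigma^{-1})^k(x)\neq x$ for $k<|X|$ --- you even flag the cycle-notation argument as an alternative and set it aside. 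The trade-off: the paper's proof is shorter and more visual, but it tacitly relies on the equivalence between Definition~\ref{def:3} and the ``single cycle of length $|X|$'' characterization (stated informally in the appendix), and on the fact that reversing a written cycle really does give the inverse map; your argument needs neither, is self-contained relative to the formal definition actually used in the paper, and correctly emphasizes that the minimality clause is not automatic and requires its own step. Both are complete; yours is the more careful one relative to the paper's own formalism.
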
 
\begin{proof}  
We prove this using a convenient feature of the cycle notation. Consider a cyclic permutation $\sigma$ defined on the set $\left[L\right]$, with cycle notation $(k_1, k_2, \cdots, k_L)$. The inverse permutation  $\sigma^{-1}$ is obtained by reversing the order of the elements in the cycle, yielding $(k_L, k_{L-1}, \cdots, k_1)$.  Since this reversed sequence still forms a single cycle, $\sigma^{-1}$ is indeed a cyclic permutation.
\end{proof}

\subsection{The Similarity Measure over Distributions}\label{app: A1.2}
In this section, we introduce a similarity measure between distributions, referred to as the \textbf{Dissimilarity Coefficient} ($d.c.$), which we use to reduce the value of the KL divergence term in the ELBO. Consequently, this reduction minimizes the dissimilarity between distributions, effectively maximizing their similarity. We also explain how the informational prior in our method transforms the regularization term into a new $d.c.$.

In various statistical fields—such as hypothesis testing, cluster analysis, and pattern recognition—it is essential to distinguish between probability distributions using appropriate dissimilarity coefficients (or separation measures, denoted as $d.c.$). A $d.c.$ for a set of $N$ probability distributions quantifies their ``degree of heterogeneity''. For $N=2$, a $d.c.$ can be interpreted as a ``distance'' between two distributions, though it may not always represent a metric distance in the strict sense.

\begin{definition}[\textbf{Dissimilarity Coefficient} \citep{sgarro1981informational}] 
Let $\mathcal{P}$ denote the set of probability measures on a measurable space $(\Omega,\mathcal{F})$. Let $N \geq 2$ be a fixed natural number. A dissimilarity coefficient ($d.c.$) of order $N$ is a mapping $d: \mathcal{P}^N \rightarrow \mathbb{R} \cup \{+\infty\}$ that satisfies the following properties for any $P_1, P_2, \dots, P_N \in \mathcal{P}$:

\begin{enumerate} 
\item[(1)] Non-negativity: $d\left(P_1, P_2, \dots, P_N \right) \geq 0$;
\item[(2)] Identity of Indiscernible: $d\left(P_1, P_2, \dots, P_N \right) = 0$ if $P_1 = P_2 = \dots = P_N$; 
\item[(3)] Symmetry: $d\left(P_1, P_2, \dots, P_N \right)$$=$$d\left(P_{\phi(1)}, P_{\phi(2)}, \dots, P_{\phi(N)} \right)$ for any permutation $\phi$ of \([N]\). 
\end{enumerate}

In some cases, condition (2) can be strengthened to:

\begin{itemize} \item[(2')] $d\left(P_1, P_2, \dots, P_N \right) = 0$ if and only if $P_1 = P_2 = \dots = P_N$. \end{itemize} 
\end{definition}

The Kullback-Leibler (KL) divergence \citep{kullback1959information}, while not symmetric, is widely regarded as a fundamental statistical measure for distinguishing between two probability distributions. To extend its application to multiple distributions ($N \geq 2$), several symmetric dissimilarity coefficients based on the KL divergence have been proposed. For example, the average divergence sums the KL divergence over all $N(N-1)$ pairs of distributions \citep{kullback1959information}, while the information radius resembles the Jensen-Shannon divergence for multiple distributions \citep{sibson1969information, jardine1971mathematical}. Additionally, \citet{sgarro1981informational} introduced the minimum divergence, which measures the KL divergence for the pair with the smallest difference.

In this work, we reuse the permuted posteriors generated during the construction of complete-view partitions and set them as priors within the multimodal VAE framework. This transforms the KL divergence term in the ELBO into a new $d.c.$, which we call the \textbf{Permutation Divergence} (Definition \ref{def:4}). We prove that this is a valid $d.c.$:

\begin{theorem}
The Permutation Divergence defined in Definition \ref{def:4} is a dissimilarity coefficient.
\end{theorem}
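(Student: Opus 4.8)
The plan is to verify the three defining axioms of a dissimilarity coefficient for the map $d(P_1,\ldots,P_N;\sigma)=\sum_{i=1}^N KL[P_i \parallel P_{\sigma(i)}]$: non-negativity (1), the strengthened identity of indiscernibles (2$'$), and symmetry (3). Non-negativity is immediate, since by Gibbs' inequality each summand $KL[P_i \parallel P_{\sigma(i)}]\ge 0$, so their sum is non-negative; this step uses nothing about the cyclic structure of $\sigma$.

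The heart of the argument is (2$'$). The forward implication is trivial: if $P_1=\cdots=P_N$ then every term equals $KL[P_i \parallel P_i]=0$, so $d=0$. For the converse, suppose $d=0$. Being a sum of non-negative terms, each must vanish, so $KL[P_i \parallel P_{\sigma(i)}]=0$ for every $i$; by the definiteness of KL divergence this forces $P_i=P_{\sigma(i)}$ (as measures) for all $i$. Here I would invoke Definition \ref{def:3}: because $\sigma$ is a \emph{cyclic} permutation, the orbit $\{i,\sigma(i),\sigma^2(i),\ldots,\sigma^{N-1}(i)\}$ of any single index is all of $[N]$. Chaining the equalities $P_i=P_{\sigma(i)}=P_{\sigma^2(i)}=\cdots$ along this orbit then yields $P_1=P_2=\cdots=P_N$. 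I would emphasize that cyclicity is \emph{essential} precisely at this step: for a non-cyclic $\sigma$ (several disjoint cycles) the condition $d=0$ would only force equality within each cycle, and (2$'$) would fail.

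The main obstacle is (3), because $d$ carries the auxiliary parameter $\sigma$ and KL divergence is asymmetric, so strict invariance of $d(\cdot\,;\sigma)$ for a single fixed $\sigma$ cannot hold. I would instead compute the effect of relabelling the arguments by an arbitrary permutation $\phi$ of $[N]$ and reindex with $j=\phi(i)$:
$$d(P_{\phi(1)},\ldots,P_{\phi(N)};\sigma)=\sum_{i=1}^N KL[P_{\phi(i)} \parallel P_{\phi(\sigma(i))}]=\sum_{j=1}^N KL[P_j \parallel P_{(\phi\sigma\phi^{-1})(j)}]=d(P_1,\ldots,P_N;\phi\sigma\phi^{-1}).$$
Thus permuting the arguments is equivalent to conjugating $\sigma$, and since conjugation preserves cycle type, $\phi\sigma\phi^{-1}$ is again a single $N$-cycle, so the relabelled expression is once more a Permutation Divergence of the same form. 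I would present this as the resolution of axiom (3): taking $\phi=\sigma$ gives $\phi\sigma\phi^{-1}=\sigma$, so the value is genuinely invariant under the cyclic shifts generated by $\sigma$, while a general $\phi$ carries it to a conjugate—still cyclic—member of the same family. The delicate point to state carefully is exactly this interpretation of symmetry up to conjugation, which is where I expect the bulk of the care to go; by contrast (1) and the forward half of (2$'$) are one-liners.
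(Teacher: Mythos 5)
Your treatment of axioms (1) and (2$'$) coincides with the paper's proof: the same one-line non-negativity argument, and the same chaining argument for (2$'$), where your orbit formulation ($d=0$ forces $P_i=P_{\sigma(i)}$ termwise, and cyclicity makes the orbit of any index all of $[N]$) is exactly the paper's argument via the cycle notation $(1,r_1,\ldots,r_{N-1})$; you also make explicit, inside the proof, the point the paper defers to a remark afterwards, namely that a non-cyclic $\sigma$ would only force equality within each cycle and (2$'$) would fail.

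On axiom (3) you genuinely depart from the paper, and your version is the careful one. The paper expands the relabelled divergence as $\sum_{i=1}^N KL[P_{\phi(i)} \parallel P_{\sigma(\phi(i))}]$ and reindexes to conclude strict invariance under every $\phi$; but Definition \ref{def:4} pairs the $i$-th argument with the $\sigma(i)$-th argument, so the correct expansion carries $P_{\phi(\sigma(i))}$ in the second slot, and $\sigma(\phi(i)) = \phi(\sigma(i))$ only when $\phi$ commutes with $\sigma$. With the correct expansion one obtains precisely your identity $d(P_{\phi(1)},\ldots,P_{\phi(N)};\sigma) = d(P_1,\ldots,P_N;\phi\sigma\phi^{-1})$, and strict symmetry for a fixed $\sigma$ really does fail once $N\ge 3$: with $\sigma=(123)$, swapping $P_1$ and $P_2$ turns the forward-cycle sum $KL[P_1\parallel P_2]+KL[P_2\parallel P_3]+KL[P_3\parallel P_1]$ into the reverse-cycle sum, which differs for generic distributions by the asymmetry of KL. So your ``symmetry up to conjugation'' --- closure of the family of Permutation Divergences under relabelling, with genuine invariance only for $\phi$ in the cyclic group generated by $\sigma$ --- is the defensible statement, whereas the paper's literal verification of (3) rests on an index slip. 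The practical damage is limited: what the method actually uses downstream are (1) and (2$'$) (the zero set characterizes homogeneity of the single-view cells), which both proofs establish, and the symmetrization in Proposition \ref{prop:symmetric} concerns the KL pairs via $\sigma^{-1}$, not argument-permutation symmetry. To recover the theorem exactly as stated one would restate (3) for the conjugation-closed family indexed by cyclic permutations --- which your computation already proves.
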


\begin{proof}
Consider a cyclic permutation \(\sigma\) defined on \([N]\) with cycle notation \((1, r_1, \ldots, r_{N-1})\). The corresponding permutation divergence is given by:
\[
d(P_1, P_2, \ldots, P_N; \sigma) = \sum_{i=1}^N \text{KL}[P_i \parallel P_{\sigma(i)}].
\]
Since the divergence is a sum of KL divergences, property (1) of non-negativity is satisfied.

To verify the identity of indiscernible, note that \(d(P_1, P_2, \dots, P_N; \sigma) = 0\) if and only if each term in the sum is zero. This implies \(\text{KL}[P_i \parallel P_{\sigma(i)}] = 0\) for all \(i\), which occurs if and only if \(P_i = P_{\sigma(i)}\). Consequently, \(P_1 = P_{r_1} = \cdots = P_{r_{N-1}}\), satisfying property (2').

Finally, for any permutation \(\phi\) of \([N]\):
\[
\begin{aligned}
    d\left(P_{\phi(1)}, P_{\phi(2)}, \dots, P_{\phi(N)}; \sigma \right) 
    &= \sum_{i=1}^N \text{KL}[P_{\phi(i)} \parallel P_{\sigma(\phi(i))}] \\
    &= \sum_{j=1}^N \text{KL}[P_{j} \parallel P_{\sigma(j)}] = d(P_1, P_2, \ldots, P_N ; \sigma).
\end{aligned}
\]
Thus, the Permutation Divergence satisfies the symmetry property (3). \end{proof}

The proof of property (2') demonstrates why cyclic permutations are used instead of general permutations: the one-cycle structure ensures that the divergence reaches its minimum when all distributions are identical.

\begin{proposition}\label{prop:sum}
The sum of dissimilarity coefficients defined on the same set of distributions is itself a dissimilarity coefficient. 
\end{proposition}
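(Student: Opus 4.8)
The plan is to verify each defining property of a dissimilarity coefficient directly, working termwise across the sum. Suppose $d_1, d_2, \ldots, d_m$ are dissimilarity coefficients of order $N$ on the same set $\mathcal{P}$ of probability measures, and set $d = \sum_{k=1}^m d_k$. First I would handle non-negativity (property (1)): since each $d_k(P_1, \ldots, P_N) \geq 0$ by assumption, their sum is a sum of non-negative extended reals, hence non-negative and well-defined in $[0, +\infty]$ (no indeterminate $\infty - \infty$ can arise precisely because every summand is non-negative). Next, for the identity of indiscernibles (property (2)), I would observe that if $P_1 = P_2 = \cdots = P_N$, then each $d_k$ vanishes by its own property (2), so $d = \sum_k 0 = 0$.

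For symmetry (property (3)), I would note that for any permutation $\phi$ of $[N]$, each summand satisfies $d_k(P_{\phi(1)}, \ldots, P_{\phi(N)}) = d_k(P_1, \ldots, P_N)$, and summing these equalities over $k$ gives the symmetry of $d$. These three steps establish that $d$ is a dissimilarity coefficient in the basic sense.

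Finally, I would address the strengthened condition (2'). Because every term $d_k$ is non-negative, the sum $d$ equals zero if and only if every $d_k$ equals zero. Hence if at least one of the summands, say $d_{k_0}$, satisfies the strengthened property (2'), then $d = 0$ forces $d_{k_0} = 0$, which in turn forces $P_1 = P_2 = \cdots = P_N$; the converse direction is already covered by property (2). This is the relevant case for the application, since the Permutation Divergence just shown to satisfy (2') will appear as one of the summands.

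There is no genuine obstacle here: the argument is a routine termwise check. The only point requiring a moment of care is the bookkeeping around the extended real line and around property (2') — specifically, that (2') for the sum needs only one summand to satisfy (2'), not all of them, and that non-negativity is what makes the ``only if'' direction of the zero-sum argument valid.
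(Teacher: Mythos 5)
Your proof is correct and takes essentially the same approach the paper intends: the paper's own ``proof'' simply asserts that all properties follow termwise from the corresponding properties of the individual coefficients and omits the details, which is precisely the routine check you carry out. Your additional observation about property (2')---that non-negativity of the summands means a single summand satisfying (2') suffices for the sum to satisfy it---is a correct refinement that the paper leaves implicit but in fact relies on when applying the result in Proposition~\ref{prop:symmetric}.
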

\begin{proof}
The proofs of these properties for the sum of $d.c.$'s follow directly from the corresponding properties of the individual coefficients. For brevity, these straightforward proofs are omitted here. 
\end{proof}

As a result, the sum of two Permutation Divergences, each defined with different cyclic permutations on the same set of distributions, remains a valid $d.c.$. Importantly, this applies to the case where the cyclic permutation and its inverse are used together.

\begin{proposition}\label{prop:symmetric}
The sum of the Permutation Divergences defined with a cyclic permutation $\sigma$ and its inverse $\sigma^{-1}$ is also a dissimilarity coefficient and is composed of symmetric KL divergences.
\end{proposition}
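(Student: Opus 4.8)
The plan is to combine the facts already in hand—that $\sigma^{-1}$ is again cyclic (Proposition \ref{inverse cyclic}), that any Permutation Divergence built from a cyclic permutation is a dissimilarity coefficient (the preceding Theorem), and that sums of dissimilarity coefficients on a common set of distributions are again dissimilarity coefficients (Proposition \ref{prop:sum}). Concretely, first I would invoke Proposition \ref{inverse cyclic} to conclude that $\sigma^{-1}$ is a cyclic permutation on $[N]$, so that $d(P_1,\ldots,P_N;\sigma^{-1})$ is itself a bona fide Permutation Divergence. The Theorem then guarantees that both $d(\cdot\,;\sigma)$ and $d(\cdot\,;\sigma^{-1})$ are dissimilarity coefficients on $\mathcal{P}^N$, and Proposition \ref{prop:sum} immediately yields that their sum is a dissimilarity coefficient. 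This disposes of the first assertion with essentially no computation.

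For the second assertion—that the sum is composed of symmetric KL divergences—the key step is a reindexing of the second summand. Writing out
\[
d(P_1,\ldots,P_N;\sigma)+d(P_1,\ldots,P_N;\sigma^{-1}) = \sum_{i=1}^N \text{KL}[P_i \parallel P_{\sigma(i)}] + \sum_{i=1}^N \text{KL}[P_i \parallel P_{\sigma^{-1}(i)}],
\]
I would substitute $j=\sigma^{-1}(i)$ (equivalently $i=\sigma(j)$) in the second sum; since $\sigma$ is a bijection of $[N]$, this merely relabels the summation index and gives $\sum_{j=1}^N \text{KL}[P_{\sigma(j)} \parallel P_j]$. Pairing this with the first sum term by term produces
\[
\sum_{i=1}^N \Bigl( \text{KL}[P_i \parallel P_{\sigma(i)}] + \text{KL}[P_{\sigma(i)} \parallel P_i] \Bigr),
\]
in which each summand is the symmetrized KL divergence (the Jeffreys divergence) between $P_i$ and $P_{\sigma(i)}$, establishing the claim.

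There is no genuine obstacle here: the argument is essentially bookkeeping. The one point requiring care is the reindexing—verifying that the substitution $j=\sigma^{-1}(i)$ correctly flips the argument order of the KL term and that the bijectivity of $\sigma$ justifies treating the two sums as running over the same index set, so that the terms pair up cleanly into symmetric pairs. To close the loop I would note that this symmetric structure is exactly what makes the cyclic-plus-inverse construction natural: it cancels the asymmetry of the underlying KL divergence while, by Propositions \ref{inverse cyclic} and \ref{prop:sum} together with the Theorem, retaining all three defining properties of a dissimilarity coefficient.
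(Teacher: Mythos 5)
Your proposal is correct and follows essentially the same route as the paper's proof: both establish the dissimilarity-coefficient property by combining the inverse-cyclic fact with the Permutation Divergence theorem and the closure of d.c.'s under sums, and both obtain the symmetric form by the same reindexing $i=\sigma(j)$ justified by bijectivity. If anything, your argument is slightly cleaner—the paper's proof attributes the fact that $\sigma^{-1}$ is cyclic to Proposition~\ref{prop:sum} (evidently a citation slip, as that fact is Proposition~\ref{inverse cyclic}), whereas you cite the correct result.
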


\begin{proof}
Consider a cyclic permutation \(\sigma\) defined on \([N]\). The corresponding permutation divergence is given by:
\[
d(P_1, P_2, \ldots, P_N; \sigma) = \sum_{i=1}^N \text{KL}[P_i \parallel P_{\sigma(i)}].
\]
According to Proposition \ref{prop:sum}, \(\sigma^{-1}\) is also a cyclic permutation, thus 
\[
d(P_1, P_2, \ldots, P_N; \sigma^{-1}) = \sum_{j=1}^N \text{KL}[P_j \parallel P_{\sigma^{-1}(j)}].
\]

For any \(j \in [N]\), there exists a unique \(k \in [N]\) such that \(\sigma(k) = j\). Therefore, we have:
\[
\begin{aligned}
& \sum_{i=1}^N \text{KL}[P_i \parallel P_{\sigma(i)}] + \sum_{j=1}^N \text{KL}[P_j \parallel P_{\sigma^{-1}(j)}] \\
& = \sum_{i=1}^N \text{KL}[P_i \parallel P_{\sigma(i)}] + \sum_{k=1}^N \text{KL}[P_{\sigma(k)} \parallel P_{\sigma^{-1}(\sigma(k))}] \\
& = \sum_{i=1}^N \left( \text{KL}[P_i \parallel P_{\sigma(i)}] + \text{KL}[P_{\sigma(i)} \parallel P_i] \right) \\
& \triangleq d(P_1, P_2, \ldots, P_N; \sigma, \sigma^{-1}),
\end{aligned}
\]
where each term is symmetric with respect to the pair \(P_i\) and \(P_{\sigma(i)}\). 
\end{proof}

As illustrated in Figure \ref{fig:a1}, a cyclic permutation transforms one complete-view partition into a new one, while its inverse restores the original partition. This allows for the interchangeability of the matrices, $Z_0$ and $Z_1$, where one can be used for posterior factorization and the other for priors. Consequently, examining the sum of the Permutation Divergences defined by a cyclic permutation and its inverse highlights this symmetry, which will be further explored in the next section.

\subsection{Derivation and Analysis of the Evidence Lower Bound (ELBO)}\label{app: A1.3}

With the complete-view partition factorizing the joint approximate posterior and the permuted unimodal posteriors serving as informational priors, we are now ready to derive the ELBO for incomplete multi-view data $\{x^{(v)}\}_{\mathcal{I}}$. To facilitate this derivation, which involves two types of latent variables, we first present a useful lemma that establishes the chain rule for KL divergence.

\begin{lemma}[Chain Rule of KL divergence]
$$
\text{KL}(q(x,y)\|p(x,y)) = \text{KL}(q(x)\|p(x)) + \text{KL}(q(y|x)\|p(y|x)) 
$$
\end{lemma}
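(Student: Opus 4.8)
The plan is to expand the left-hand side directly from the integral definition of KL divergence and then exploit the product-rule factorizations of both densities, namely $q(x,y)=q(x)\,q(y|x)$ and $p(x,y)=p(x)\,p(y|x)$. Substituting these into the log-ratio gives $\log\frac{q(x,y)}{p(x,y)}=\log\frac{q(x)}{p(x)}+\log\frac{q(y|x)}{p(y|x)}$, so by linearity of the integral the joint divergence $\text{KL}(q(x,y)\|p(x,y))$ splits into two additive terms that I would handle separately.

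For the first term, I would integrate out $y$ using $\int q(x,y)\,dy=q(x)$, which reduces it exactly to $\int q(x)\log\frac{q(x)}{p(x)}\,dx=\text{KL}(q(x)\|p(x))$. For the second term, I would instead factor $q(x,y)=q(x)\,q(y|x)$ and carry out the inner $y$-integral first, recognizing $\int q(y|x)\log\frac{q(y|x)}{p(y|x)}\,dy$ as the conditional divergence; the remaining outer integral against $q(x)$ then yields the expectation $\mathbb{E}_{q(x)}\big[\text{KL}(q(y|x)\|p(y|x))\big]$, which is precisely the quantity abbreviated as $\text{KL}(q(y|x)\|p(y|x))$ in the statement. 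Adding the two terms recovers the claimed identity.

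There is no genuine obstacle here: the result follows immediately once the logarithm converts the product of densities into a sum. The only minor points to keep in mind are the Fubini justification for exchanging the order of integration in each term and the notational convention that the conditional-divergence term carries an implicit expectation over the marginal $q(x)$, so that $\text{KL}(q(y|x)\|p(y|x))$ is read as an averaged conditional divergence rather than a pointwise one.
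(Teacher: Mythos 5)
Your proposal matches the paper's proof essentially verbatim: both expand the joint KL divergence from the integral definition, use the factorizations $q(x,y)=q(x)\,q(y|x)$ and $p(x,y)=p(x)\,p(y|x)$ to split the log-ratio, and then identify the two resulting terms as the marginal divergence and the $q(x)$-averaged conditional divergence. Your explicit remarks on Fubini and on the implicit expectation in the conditional term are minor clarifications the paper leaves tacit, but the argument is the same.
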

\begin{proof}
The proof follows from the definition of the KL divergence and the factorization of joint distributions:
\begin{align*}
\text{KL}(q(x,y)\|p(x,y)) &= \int \int q(x,y) \log \frac{q(x,y)}{p(x,y)} \, dy \, dx \\
&= \int \int q(x,y) \log \frac{q(x)q(y|x)}{p(x)p(y|x)} \, dy \, dx \\
&= \int \int q(x,y) \log \frac{q(x)}{p(x)} \, dy \, dx + \int \int q(x,y) \log \frac{q(y|x)}{p(y|x)} \, dy \, dx \\
&= \int q(x) \log \frac{q(x)}{p(x)} \, dx + \int q(x) \int q(y|x) \log \frac{q(y|x)}{p(y|x)} \, dy \, dx \\
&= \text{KL}(q(x)\|p(x)) +  \text{KL}(q(y|x)\|p(y|x)).
\end{align*}
\end{proof}

\begin{theorem}
    Equation (\ref{eq1}) is the evidence lower bound (ELBO) for incomplete multi-view data $\{x^{(v)}\}_{\mathcal{I}}$.
\end{theorem}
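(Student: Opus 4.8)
The plan is to run the standard variational argument, specialized to the two factorizations already fixed in Section~\ref{sec:3.2}, and then match the resulting three groups of terms with the three summands of Eq.~(\ref{eq1}). The starting point is the exact identity, valid for any variational distribution,
\[
\log p(\{x^{(v)}\}_{\mathcal{I}}) = \mathbb{E}_{q(\boldsymbol{\mathcal{Z}},\boldsymbol{\Omega}\mid\{x^{(v)}\}_{\mathcal{I}})}\!\left[\log\frac{p(\{x^{(v)}\}_{\mathcal{I}},\boldsymbol{\mathcal{Z}},\boldsymbol{\Omega})}{q(\boldsymbol{\mathcal{Z}},\boldsymbol{\Omega}\mid\{x^{(v)}\}_{\mathcal{I}})}\right] + KL\!\left[q(\boldsymbol{\mathcal{Z}},\boldsymbol{\Omega}\mid\{x^{(v)}\}_{\mathcal{I}})\,\middle\|\,p(\boldsymbol{\mathcal{Z}},\boldsymbol{\Omega}\mid\{x^{(v)}\}_{\mathcal{I}})\right],
\]
which is precisely the decomposition displayed just above Eq.~(\ref{eq1}). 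Since the trailing KL term is non-negative, the first expectation is a genuine lower bound on the log-likelihood, so it suffices to show that this expectation equals the right-hand side of Eq.~(\ref{eq1}).

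First I would substitute the generative factorization $p(\{x^{(v)}\}_{\mathcal{I}},\boldsymbol{\mathcal{Z}},\boldsymbol{\Omega}) = \big[\prod_{n\in\mathcal{I}} p(x^{(n)}\mid \boldsymbol{\mathcal{C}}_n\cap\boldsymbol{\mathcal{S}}_n,\omega_n)\big]\,p(\boldsymbol{\mathcal{Z}},\boldsymbol{\Omega})$ together with the mean-field factorization of $q$ displayed in Section~\ref{sec:3.2}. Taking logarithms and using linearity of expectation splits the bound into an expected log-likelihood and an expected log-ratio of prior to posterior. The log-likelihood contribution collapses to the reconstruction sum $\sum_{n\in\mathcal{I}}\mathbb{E}_q[\log p(x^{(n)}\mid\boldsymbol{\mathcal{C}}_n\cap\boldsymbol{\mathcal{S}}_n,\omega_n)]$ once we note that the $n$-th likelihood depends only on the diagonal element $z_{*}^{(n)}=\boldsymbol{\mathcal{C}}_n\cap\boldsymbol{\mathcal{S}}_n$ and on $\omega_n$, so integrating $q$ over the remaining latent coordinates returns one and drops them. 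The log-ratio contribution is by definition $-KL[q(\boldsymbol{\mathcal{Z}},\boldsymbol{\Omega}\mid\{x^{(v)}\}_{\mathcal{I}})\,\|\,p(\boldsymbol{\mathcal{Z}},\boldsymbol{\Omega})]$, the joint regularization against the factorized prior.

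Next I would decompose this joint regularization into its $\boldsymbol{\mathcal{Z}}$ and $\boldsymbol{\Omega}$ parts using the Chain Rule Lemma, identifying $\boldsymbol{\mathcal{Z}}$ with $x$ and $\boldsymbol{\Omega}$ with $y$. This gives $KL[q(\boldsymbol{\mathcal{Z}}\mid\{x^{(v)}\})\,\|\,p(\boldsymbol{\mathcal{Z}})] + KL[q(\boldsymbol{\Omega}\mid\boldsymbol{\mathcal{Z}},\{x^{(v)}\})\,\|\,p(\boldsymbol{\Omega}\mid\boldsymbol{\mathcal{Z}})]$. Taking the priors independent across latents, $p(\boldsymbol{\mathcal{Z}})=\prod_{l,v}p(z_v^{(l)})$ and $p(\boldsymbol{\Omega}\mid\boldsymbol{\mathcal{Z}})=\prod_{n}p(\omega_n)$, together with the product form of $q$, each joint KL breaks into a sum of per-variable divergences, and the mean-field structure lets the outer expectation collapse onto the single factor inside each term. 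This produces exactly $\sum_{l=1}^{L}\sum_{v\in\mathcal{I}}KL[q(z_v^{(l)}\mid x^{(v)})\,\|\,p(z_v^{(l)})]$ for the first regularizer and $\sum_{n\in\mathcal{I}}KL[q(\omega_n\mid\boldsymbol{\mathcal{C}}_n,\{x^{(v)}\}_{\mathcal{J}_n})\,\|\,p(\omega_n)]$ for the second, completing the match with Eq.~(\ref{eq1}).

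The main obstacle I anticipate is bookkeeping the dependency of $\boldsymbol{\Omega}$ on $\boldsymbol{\mathcal{Z}}$: because each $\omega_n$ is assembled from the marginals of the variables in $\boldsymbol{\mathcal{C}}_n$, the variational factor $q(\omega_n\mid\boldsymbol{\mathcal{C}}_n,\{x^{(v)}\}_{\mathcal{J}_n})$ is genuinely conditional, so writing an unconditional $\omega$-regularizer is not justified without first invoking the Chain Rule Lemma in the correct order ($\boldsymbol{\mathcal{Z}}$ first, then $\boldsymbol{\Omega}$ given $\boldsymbol{\mathcal{Z}}$) and then using the independence of the $\omega$-prior to reduce the conditional term $KL[q(\omega_n\mid\boldsymbol{\mathcal{C}}_n,\{x^{(v)}\}_{\mathcal{J}_n})\,\|\,p(\omega_n\mid\boldsymbol{\mathcal{Z}})]$ to $KL[q(\omega_n\mid\boldsymbol{\mathcal{C}}_n,\{x^{(v)}\}_{\mathcal{J}_n})\,\|\,p(\omega_n)]$. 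A secondary point I would state explicitly is that the reconstruction term is insensitive to the off-diagonal coordinates of $\boldsymbol{\mathcal{C}}_n$, which is what lets those expectations integrate out cleanly.
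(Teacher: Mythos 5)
Your proposal is correct and follows essentially the same route as the paper's own proof in Appendix~\ref{app: A1.3}: the exact decomposition $\log p(\{x^{(v)}\}_{\mathcal{I}}) = \mathcal{L}_{\mathrm{ELBO}} + KL[q \parallel p(\cdot \mid \{x^{(v)}\}_{\mathcal{I}})]$, substitution of the same factorizations of the variational posterior and the generative model, and the Chain Rule Lemma to split the joint regularizer into the per-variable $z$- and $\omega$-divergences of Eq.~(\ref{eq1}). Your explicit bookkeeping of the conditional $\omega$-factor (applying the chain rule with $\boldsymbol{\mathcal{Z}}$ first, then $\boldsymbol{\Omega}$ given $\boldsymbol{\mathcal{Z}}$, before reducing $p(\omega_n \mid \boldsymbol{\mathcal{Z}})$ to $p(\omega_n)$) is exactly the role the paper's lemma plays, so no gap remains.
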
 
\begin{proof}
We begin with the log-likelihood of the incomplete multi-view data $\{x^{(v)}\}_{\mathcal{I}}$, assuming two sets of latent variables, $\boldsymbol{\mathcal{Z}}$ and $\boldsymbol{\Omega}$. For any joint distribution $q(\boldsymbol{\mathcal{Z}},\boldsymbol{\Omega})$, the following equation holds:
$$
\begin{aligned}
\log p(\{x^{(v)}\}_{\mathcal{I}})& =\int q(\boldsymbol{\mathcal{Z}},\boldsymbol{\Omega})\log p(\{x^{(v)}\}_{\mathcal{I}})d\boldsymbol{\mathcal{Z}}d\boldsymbol{\Omega}  \\
&=\int q(\boldsymbol{\mathcal{Z}},\boldsymbol{\Omega})\log\frac{p(\{x^{(v)}\}_{\mathcal{I}}\mid\boldsymbol{\mathcal{Z}},\boldsymbol{\Omega})p(\boldsymbol{\mathcal{Z}},\boldsymbol{\Omega})}{p(\boldsymbol{\mathcal{Z}},\boldsymbol{\Omega}\mid \{x^{(v)}\}_{\mathcal{I}})}\frac{q(\boldsymbol{\mathcal{Z}},\boldsymbol{\Omega})}{q(\boldsymbol{\mathcal{Z}},\boldsymbol{\Omega})}d\boldsymbol{\mathcal{Z}}d\boldsymbol{\Omega} \\
&=\underbrace{\int q(\boldsymbol{\mathcal{Z}},\boldsymbol{\Omega})\log\frac{p(\{x^{(v)}\}_{\mathcal{I}},\boldsymbol{\mathcal{Z}},\boldsymbol{\Omega})}{q(\boldsymbol{\mathcal{Z}},\boldsymbol{\Omega}))} d\boldsymbol{\mathcal{Z}}d\boldsymbol{\Omega}}_{\text{Evidence Lower Bound (ELBO)}} +\text{KL}(q(\boldsymbol{\mathcal{Z}},\boldsymbol{\Omega})\|\underbrace{p(\boldsymbol{\mathcal{Z}},\boldsymbol{\Omega} \mid \{x^{(v)}\}_{\mathcal{I}})}_{\text{True posterior}})
\end{aligned}
$$
We use encoders to model the distribution $q(\boldsymbol{\mathcal{Z}},\boldsymbol{\Omega})$ given the observed data, denoted as $q(\boldsymbol{\mathcal{Z}},\boldsymbol{\Omega} \mid \{x^{(v)}\}_{\mathcal{I}})$. The first term in this equation represents the ELBO, which serves as a lower bound on the log-likelihood of the data. By maximizing the ELBO, the KL term becomes smaller, meaning that the learned distribution approximates the true posterior $p(\boldsymbol{\mathcal{Z}},\boldsymbol{\Omega} \mid \{x^{(v)}\}_{\mathcal{I}})$. For a given complete-view partition $\mathcal{P}_c(\boldsymbol{\mathcal{Z}}; \boldsymbol{\sigma}) = \{\boldsymbol{\mathcal{C}}_n\}_{n\in \mathcal{I}}$, where $\boldsymbol{\mathcal{C}}_n =\{z_{\sigma_l (n)}^{(l)}\}_{l=1}^L$, we can factorize the joint posterior as:
$$\begin{aligned}
q(\boldsymbol{\mathcal{Z}},\boldsymbol{\Omega} \mid\{\boldsymbol{x}^{(v)}\}_\mathcal{I}) &\triangleq \prod_{n \in \mathcal{I}}q(\boldsymbol{\omega}_n\mid \boldsymbol{\mathcal{C}}_n,\{\boldsymbol{x}^{(v)}\}_{\mathcal{J}_n})q(\boldsymbol{\mathcal{C}}_n\mid\{\boldsymbol{x}^{(v)}\}_{\mathcal{J}_n}) \\
&=\prod_{n \in \mathcal{I}}q(\boldsymbol{\omega}_n\mid \boldsymbol{\mathcal{C}}_n, \{\boldsymbol{x}^{(v)}\}_{\mathcal{J}_n})\prod\nolimits_{l=1, v \in\mathcal{J}_n}^L q(\boldsymbol{z}_v^{(l)}\mid \boldsymbol{x}^{(v)}).
\end{aligned}$$
Next, we assume the generative process as:
$$\begin{aligned}
p(\{x^{(v)}\}_{\mathcal{I}},\boldsymbol{\mathcal{Z}},\boldsymbol{\Omega})& \triangleq\prod_{n\in\mathcal{I}} p(x^{(n)}|\boldsymbol{\mathcal{C}}_n,\omega_n)p(\boldsymbol{\mathcal{C}}_n,\omega_n)  \\
&=\prod_{n \in\mathcal{I}} p\left(x^{(n)}|\boldsymbol{\mathcal{C}}_n \cap \boldsymbol{\mathcal{S}}_n,\omega_n\right)p(\omega_n) \prod\nolimits_{l=1, v \in\mathcal{J}_n}^L p\left(z_v^{(l)}\right).
\end{aligned}$$
Here, we again explain why we use $\boldsymbol{\mathcal{C}}_n \cap \boldsymbol{\mathcal{S}}_n$, the diagonal of the matrix of $\boldsymbol{\mathcal{Z}}$, for reconstruction. From the derivation, we partition according to $\{\boldsymbol{\mathcal{C}}_n\}_{n\in \mathcal{I}}$, where each $\boldsymbol{\mathcal{C}}_n$ contains $L$ latent variables representing different views. Among them, only $z_{\sigma_l(n)}^{(n)} = \boldsymbol{\mathcal{C}}_n \cap \boldsymbol{\mathcal{S}}_n$ is related to $x^{(n)}$ (with the same superscript). This approach also simplifies practical implementation, as we can directly use the diagonal of the matrix of $\boldsymbol{\mathcal{Z}}$ to extract all the $\boldsymbol{\mathcal{C}}_n \cap \boldsymbol{\mathcal{S}}_n$.

$$
\begin{aligned}
& \int q(\boldsymbol{\mathcal{Z}},\boldsymbol{\Omega}\mid \{x^{(v)}\}_{\mathcal{I}}) \log \frac{p(\{x^{(v)}\}_{\mathcal{I}},\boldsymbol{\mathcal{Z}},\boldsymbol{\Omega})}{q(\boldsymbol{\mathcal{Z}},\boldsymbol{\Omega}\mid \{x^{(v)}\}_{\mathcal{I}})} d\boldsymbol{\mathcal{Z}} d\boldsymbol{\Omega} \\
&= \sum_{n \in \mathcal{I}} \bigg\{ \int q({\boldsymbol{\mathcal{C}}_n, \omega}_{n}   \mid\{x^{(v)}\}_{\mathcal{J}_{n}}) \log \frac{p(x^{(n)}\mid \boldsymbol{\mathcal{C}}_n, \omega_n) p(\omega_n) \prod_{l=1, v\in \mathcal{J}_n}^L p(z_{v}^{(l)})}{q({\omega}_{n} \mid  \boldsymbol{\mathcal{C}}_n, \{x^{(v)}\}_{\mathcal{J}_{n}}) \prod_{l=1, v\in \mathcal{J}_n}^L q(z_{v}^{(l)} \mid x^{(v)})} d\boldsymbol{\mathcal{C}}_n d{\omega}_n  \bigg\}\\
&= \sum_{n \in \mathcal{I}} \bigg\{
\mathbb{E}_{q(\boldsymbol{\mathcal{C}}_n \cap \boldsymbol{\mathcal{S}}_n, \omega_n \mid \{x^{(v)}\}_{\mathcal{J}_{n}})}\left[\log p(x^{(n)}\mid \boldsymbol{\mathcal{C}}_n \cap \boldsymbol{\mathcal{S}}_n, \omega_n)\right] \\
& 
\quad\quad\quad\quad\quad\quad\quad\quad\quad\quad\quad\quad
+ \int q({\omega}_{n} \mid \boldsymbol{\mathcal{C}}_n,  \{x^{(v)}\}_{\mathcal{J}_{n}}) \log \frac{p(\omega_n)}{q({\omega}_{n} \mid  \boldsymbol{\mathcal{C}}_n, \{x^{(v)}\}_{\mathcal{J}_{n}})}
 d{\omega}_n          \\
& \quad\quad\quad\quad\quad\quad\quad\quad\quad\quad\quad\quad
+ \int \prod\nolimits_{l=1, v\in \mathcal{J}_n}^L q(z_{v}^{(l)} \mid x^{(v)}) \log \frac{\prod\nolimits_{l=1, v\in \mathcal{J}_n}^L p(z_{v}^{(l)})}{\prod\nolimits_{l=1, v\in \mathcal{J}_n}^L q(z_{v}^{(l)}  \mid x^{(v)})}d\boldsymbol{\mathcal{C}}_n  \bigg\}  \\
&= \sum_{n \in \mathcal{I}} \mathbb{E}_{q(\boldsymbol{\mathcal{C}}_n , \omega_n \mid \{x^{(v)}\}_{\mathcal{J}_{n}})}\left[\log p(x^{(n)}\mid \boldsymbol{\mathcal{C}}_n \cap \boldsymbol{\mathcal{S}}_n, \omega_n)\right]  \\
&  \quad\quad\quad\quad-  \sum_{l=1}^{L}\sum_{v\in \mathcal{I}}  KL \left[   q(z_{v}^{(l)} \mid x^{(v)}) \parallel p(z_{v}^{(l)})\right] - \sum_{n \in \mathcal{I}} 
 KL\left(q(\omega_n \mid \boldsymbol{\mathcal{C}}_n, \{x^{(v)}\}_{\mathcal{I}_{n}}) \parallel p(\omega_n) \right).\\
\end{aligned}
$$
The split of the final two KL terms follows directly from the chain rule provided in the lemma. At this point, we have derived Eq. (\ref{eq1}), but we can further simplify by removing redundant variables (which are essential for modeling and derivation but not necessary for exposition) and explicitly defining the prior settings, as outlined in Section \ref{sec:3.3}. Specifically, we denote$q(z_{v}^{(l)} \mid x^{(v)})$, which encodes the $l$-th view's information using the $v$-th view as the source, as $q_{v}^{(l)}(z)$. his notation represents the encoding and transformation of the distribution, where $q_{v}^{(l)}(z) \sim \mathcal{N}(z; f_{lv} \circ \mu({x}^{(v)}), f_{lv} \circ \Sigma({x}^{(v)}))$, where $f_{lv} = \text{id}$ when $l = v$. Similarly, we denote the prior $p(z_{v}^{(l)})$ as $p_{v}^{(l)}(z)$. We set this prior to $p_{v}^{(l)}(z) = q_{\sigma_l^{-1}(v)}^{(l)}(z)$, where $\sigma_l^{-1}$ is the inverse of the permutation used to obtain the complete-view partition and acts as a cyclic permutation on the incomplete index set $\mathcal{I}$. This transforms the first KL term into:
$$
\sum_{l=1}^{L}\sum_{v\in \mathcal{I}}  KL \left[q_{v}^{(l)}(z) \parallel q_{\sigma_l^{-1}(v)}^{(l)}(z)\right]= \sum_{l=1}^{L} d(q_{k_1}^{(l)}, \dots, q_{k_n}^{(l)}; \sigma_l^{-1}),
$$
where $\{k_1,\dots,k_n\}$ represents the observed views, and $d$ is the permutation divergence, ensuring that distributions encoding the same view from different sources remain as close as possible. For practical implementation, we only need to compute the KL divergence between the corresponding positions of the distributions in the left and right matrices shown in Figure \ref{fig:a1}.

The latent variable $\omega$ is derived from the fusion of the marginal Gaussian distributions in $\boldsymbol{\mathcal{C}}_n$. In other words, $\omega$ can be deterministically determined by the variable $z$, making the regularization of $\omega$ effectively a regularization of $z$. We simplify the notation for the posterior distribution $q(\omega \mid \boldsymbol{\mathcal{C}}_n)$, which represents the distribution obtained by fusing the marginal $k$-dimensional distributions. Specifically, $q(\omega \mid \boldsymbol{\mathcal{C}}_n) \sim \mathcal{N}(\omega; \alpha_n, \Lambda_n)$, where $$\Lambda_n=\Big[\sum\nolimits_{l=1}^L {\Sigma_v^{(l)}(1:k,1:k)}^{-1}\Big]^{-1},  \alpha_n=\Lambda_n\sum\nolimits_{l=1}^L \Big[ {\Sigma_v^{(l)}(1:k,1:k)}^{-1}\mu_v^{(l)}(1:k)\Big],~ v\in\mathcal{J}_n.$$
Here, we rely on two well-known results: first, the marginal distribution of a multivariate Gaussian remains Gaussian, and second, the geometric mean of several Gaussian random variables also follows a Gaussian distribution, with parameters that are straightforward to compute.

For the prior setting of $\omega$, since its posterior is derived from the combination $\boldsymbol{\mathcal{C}}_n =\{z_{\sigma_l (n)}^{(l)}\}_{l=1}^L$, we can similarly fuse the priors of $z$, which have already been defined. It is easy to see that $\{z_{\sigma_{l}^{-1}(\sigma_l (n)})^{(l)}\}_{l=1}^L = \{z_{n}^{(l)}\}_{l=1}^L = \boldsymbol{\mathcal{C}}_n^0$,representing the cell of the basic complete-view partition, which corresponds to the pre-permutation position in the matrix. Therefore, we set the prior of $q(\omega \mid \boldsymbol{\mathcal{C}}_n)$ to $q(\omega \mid \boldsymbol{\mathcal{C}}_n^0)$. In practice, this simply requires calculating the KL divergence between the $\omega$ variables obtained from the two matrices shown in Figure \ref{fig:a1}.

Finally, for given permutations and their resulting complete-view partition, we can express the ELBO as follows:
\begin{equation}\label{app:eq1}
\begin{aligned}
&\mathcal{L}_{\mathrm{ELBO}} (\{\boldsymbol{x}^{(v)} \}_{\mathcal{I}}) 
= \sum_{n \in \mathcal{I}} \mathbb{E}_{q(\boldsymbol{\mathcal{C}}_n, \omega \mid \{x^{(v)}\}_{\mathcal{J}_{n}})}\left[\log p(x^{(n)}\mid z_{\sigma_l(n)}^{(n)}, \omega)\right]  \\
&  -  \sum_{l=1}^{L}\sum_{v\in \mathcal{I}}  KL \left[q_{v}^{(l)}(z) \parallel q_{\sigma_l^{-1}(v)}^{(l)}(z)\right] - \sum_{n \in \mathcal{I}} 
 KL\left(q(\omega \mid \boldsymbol{\mathcal{C}}_n) \parallel q(\omega \mid \boldsymbol{\mathcal{C}}_n^{0}) \right).\\
\end{aligned}
\end{equation}
\end{proof}
% What if we use the basic complete-view partition? 
Since we can factorize the joint posterior using any complete-view partition, we can alternatively use the basic partition $\mathcal{P}_c^{0}(\boldsymbol{\mathcal{Z}}) = \{\boldsymbol{\mathcal{C}}_n^{0}\}_{n\in \mathcal{I}}$, where $\boldsymbol{\mathcal{C}}_n^{0} = \{z_{n}^{(l)}\}_{l=1}^L$. In this case, there are no cyclic permuted posteriors to set the prior. Thus, assuming arbitrary cyclic permutations $\boldsymbol{\sigma} = \{\sigma_l\}_{l=1}^L$ on the incomplete index set $\mathcal{I}$, we can derive the basic ELBO as follows:
\begin{equation}\label{app:eq2}
\begin{aligned}
&\mathcal{L}_{\mathrm{ELBO}}^0 (\{\boldsymbol{x}^{(v)} \}_{\mathcal{I}}) 
= \sum_{n \in \mathcal{I}} \mathbb{E}_{q(\boldsymbol{\mathcal{C}}_n^0, \omega \mid x^{(n)})}\left[\log p(x^{(n)}\mid z_{n}^{(n)}, \omega)\right]  \\
&  -  \sum_{l=1}^{L}\sum_{v\in \mathcal{I}}  KL \left[q_{v}^{(l)}(z) \parallel q_{\sigma_l(v)}^{(l)}(z)\right] - \sum_{n \in \mathcal{I}} 
 KL\left(q(\omega \mid \boldsymbol{\mathcal{C}}_n^0) \parallel q(\omega \mid \boldsymbol{\mathcal{C}}_n) \right).\\
\end{aligned}
\end{equation}
% different in the reconstruction loss
There are subtle differences between the basic ELBO in Eq.(\ref{app:eq2}) and Eq.(\ref{app:eq1}). The first term, representing the reconstruction loss, shows that in the basic ELBO, the variable $z_n^{(n)}$ generates $x^{(n)}$ through \textbf{self-view reconstruction}, meaning $z_n^{(n)}$ is directly encoded from $x^{(n)}$ without any transformations. In contrast, in Eq.(\ref{app:eq1}), $z_{\sigma_l(n)}^{(n)}$ comes from a cyclically permuted complete-view partition, where $\sigma_l(n)$ is not equal to $n$ but instead corresponds to another observed view obtained via inter-view correspondences. This can be interpreted as a \textbf{cross-view generation}. If Eq.(\ref{app:eq1}) iterates over all possible permutations, it implies that within the loss function, all observed views generate other views via cross-generation.

\begin{figure}[t]
\centering
\includegraphics[width=\columnwidth]{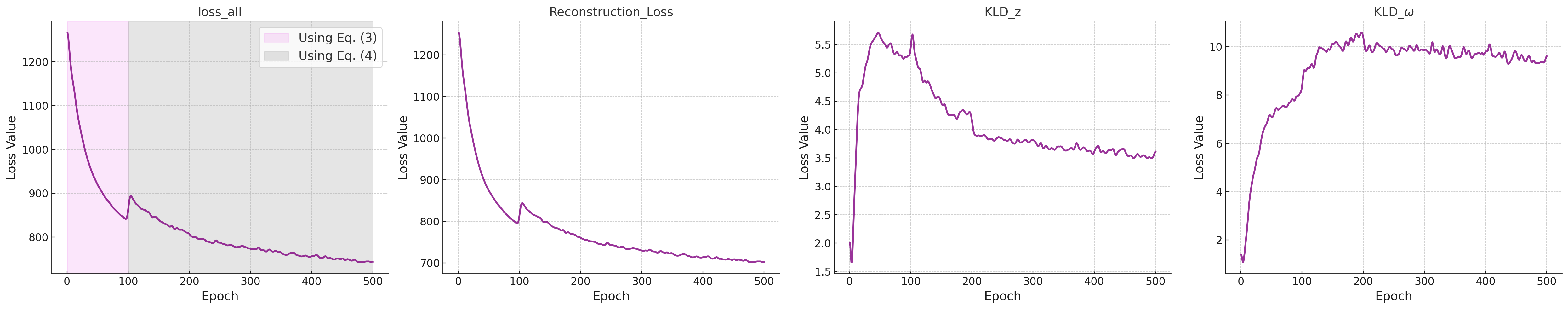} 
\caption{Loss evolution curves during the training process on the Handwritten under missing rate $\eta=0.1$, illustrating the trends of different loss components. The first subplot (Loss all) highlights the warm-up phase using Eq. (\ref{app:eq2}) during the initial 100 epochs (shaded yellow). Afterward, training transitions to Eq. (\ref{app:eq3}), which is defined as $0.5 \times (\text{Eq. (\ref{app:eq1})} + \text{Eq. (\ref{app:eq2})})$. Subplots for Reconstruction Loss, KLD\_z, and KLD\_$\omega$ show the progressive change in their respective values over 500 epochs.}
\label{fig:loss}
\end{figure}

The form of the remaining two terms suggests that we can form a convex combination of these two types of ELBOs, assuming they use the same set of $\boldsymbol{\sigma}$ and their inverses. This leads to the following expression:
\begin{equation}\label{app:eq3}
\begin{aligned}
& \mathcal{L}_{\mathrm{ELBO}} (\{\boldsymbol{x}^{(v)} \}_{\mathcal{I}}; \boldsymbol{\sigma}) = \frac{1}{2} ( \mathcal{L}_{\mathrm{ELBO}}^{0}(\{\boldsymbol{x}^{(v)} \}_{\mathcal{I}})+ \mathcal{L}_{\mathrm{ELBO}} (\{\boldsymbol{x}^{(v)} \}_{\mathcal{I}})) \\
& = \frac{1}{2} \sum_{n \in \mathcal{I}} \left\{  \mathbb{E}_{q(\boldsymbol{\mathcal{C}}_n^0, \omega \mid x^{(n)})}\left[\log p(x^{(n)}\mid z_{n}^{(n)}, \omega_n)\right] + \mathbb{E}_{q(\boldsymbol{\mathcal{C}}_n, \omega \mid \{x^{(v)}\}_{\mathcal{J}_{n}})}\left[\log p(x^{(n)}\mid z_{\sigma_l(n)}^{(n)}, \omega)\right] \right\} \\
&  \quad\quad\quad \quad\quad\quad \quad\quad\quad  - \frac{1}{2} \sum_{l=1}^{L}\sum_{v\in \mathcal{I}} \left\{  KL \left[q_{v}^{(l)}(z) \parallel q_{\sigma_l^{-1}(v)}^{(l)}(z)\right] + KL \left[q_{v}^{(l)}(z) \parallel q_{\sigma_l(v)}^{(l)}(z)\right]  \right\}\\
& \quad\quad\quad \quad\quad\quad \quad\quad\quad - \frac{1}{2} \sum_{n \in \mathcal{I}} \left\{
 KL\left(q(\omega \mid \boldsymbol{\mathcal{C}}_n) \parallel q(\omega \mid \boldsymbol{\mathcal{C}}_n^{0}) \right) +  KL\left(q(\omega \mid \boldsymbol{\mathcal{C}}_n^0) \parallel q(\omega \mid \boldsymbol{\mathcal{C}}_n) \right) \right\} \\
\end{aligned}
\end{equation}
In practical optimization, we use Eq.(\ref{app:eq2}) for warm up and Eq.(\ref{app:eq3}) as the final learning objective, selecting a different $\boldsymbol{\sigma}$ permutation at each iteration.This approach offers three main advantages.
\textbf{First}, combining the two ELBOs simultaneously promotes both self-view reconstruction and cross-view generation, which leads to a more comprehensive learning process.
\textbf{Second}, by using cyclic permutations $\boldsymbol{\sigma}$ and exploiting the fact that their inverses are also cyclic, we can efficiently obtain two complete-view partitions. This allows the distributions at corresponding positions, both pre- and post-permutation, to supervise each other without the need for additional computations, making the optimization process more computationally efficient.
\textbf{Finally}, the convex combination of the two regularization terms introduces a higher degree of symmetry. For the term involving $z$, based on Proposition \ref{prop:symmetric}, we can rewrite it as a more symmetric dissimilarity coefficient:
$$
 \sum_{l=1}^{L}\sum_{v\in \mathcal{I}} \left\{  KL \left[q_{v}^{(l)}(z) \parallel q_{\sigma_l^{-1}(v)}^{(l)}(z)\right] + KL \left[q_{v}^{(l)}(z) \parallel q_{\sigma_l(v)}^{(l)}(z)\right]  \right\} = \sum_{l=1}^{L} d(q_{k_1}^{(l)}, \dots, q_{k_n}^{(l)}; \sigma_l, \sigma_l^{-1}).
$$
Similarly, the term involving $\omega$ is also expressed as a sum of symmetric KL divergences. In prior work, such as \citet{mmJSD} and \citet{MVTCAE}, the unimodal posteriors are fused to obtain a joint posterior. Subsequent generations rely on sampling from this joint posterior, which becomes the primary optimization target. As a result, \citet{MVTCAE} suggests that the asymmetry of the KL divergence makes the forward KL more suitable than the reverse KL for this context.

In contrast, our approach maintains the unimodal posteriors throughout the factorization process, with subsequent reconstructions dependent on these individual subspaces. Thus, optimizing all unimodal posteriors becomes essential. The symmetric KL divergence ensures that both distributions are encouraged to move toward each other's high-probability regions, fostering more stable convergence during training.

\subsection{Latent Space Dynamics During Training}\label{app: A1.4}

% The two regularization terms exert different effects on the training process, corresponding to \textbf{Inter-View Translatability} and \textbf{Consensus Concentration}, as discussed in the main text. 
% The Inter-View Translatability term operates over single-view cells, where each cell contains all latent variables corresponding to the same view. The Permutation Divergence is then calculated over these distributions within each set to measure their similarity. 

The two regularization terms, \textbf{Inter-View Translatability} and \textbf{Consensus Concentration}, play distinct roles in the training process. These terms impact the arrangement and interaction of latent variables in the learned space, as illustrated in the following visualizations.

\begin{figure}[t]
\centering
\includegraphics[width=0.5\columnwidth]{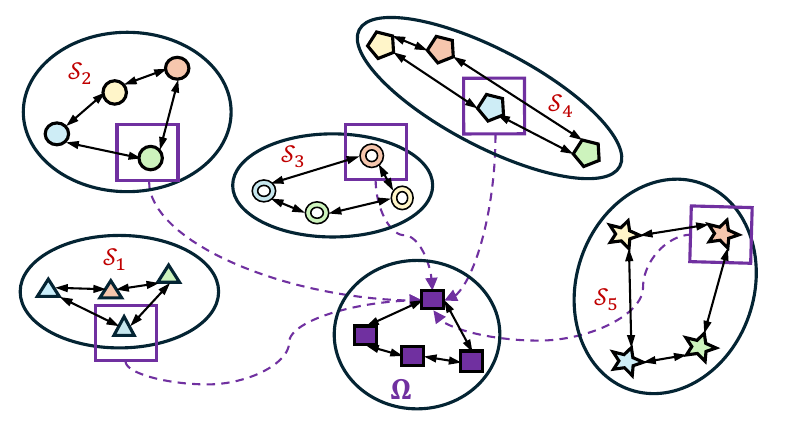} 
\caption{\textbf{Visualization of the effects of the two regularization terms on a five-view sample with one missing view.} Each circle represents a set of homogeneous latent variables. In $\boldsymbol{\mathcal{S}}_l$, markers of the same shape indicate variables from the $l$-th view, while colors represent their source views. In $\boldsymbol{\Omega}$, consensus variables are fused from all five views, as shown by the dashed purple arrows. Bidirectional black arrows illustrate the cyclic convergence of variables within each set.}
\label{fig:set}
\end{figure}

\begin{figure}[t]
\centering
\includegraphics[width=\columnwidth]{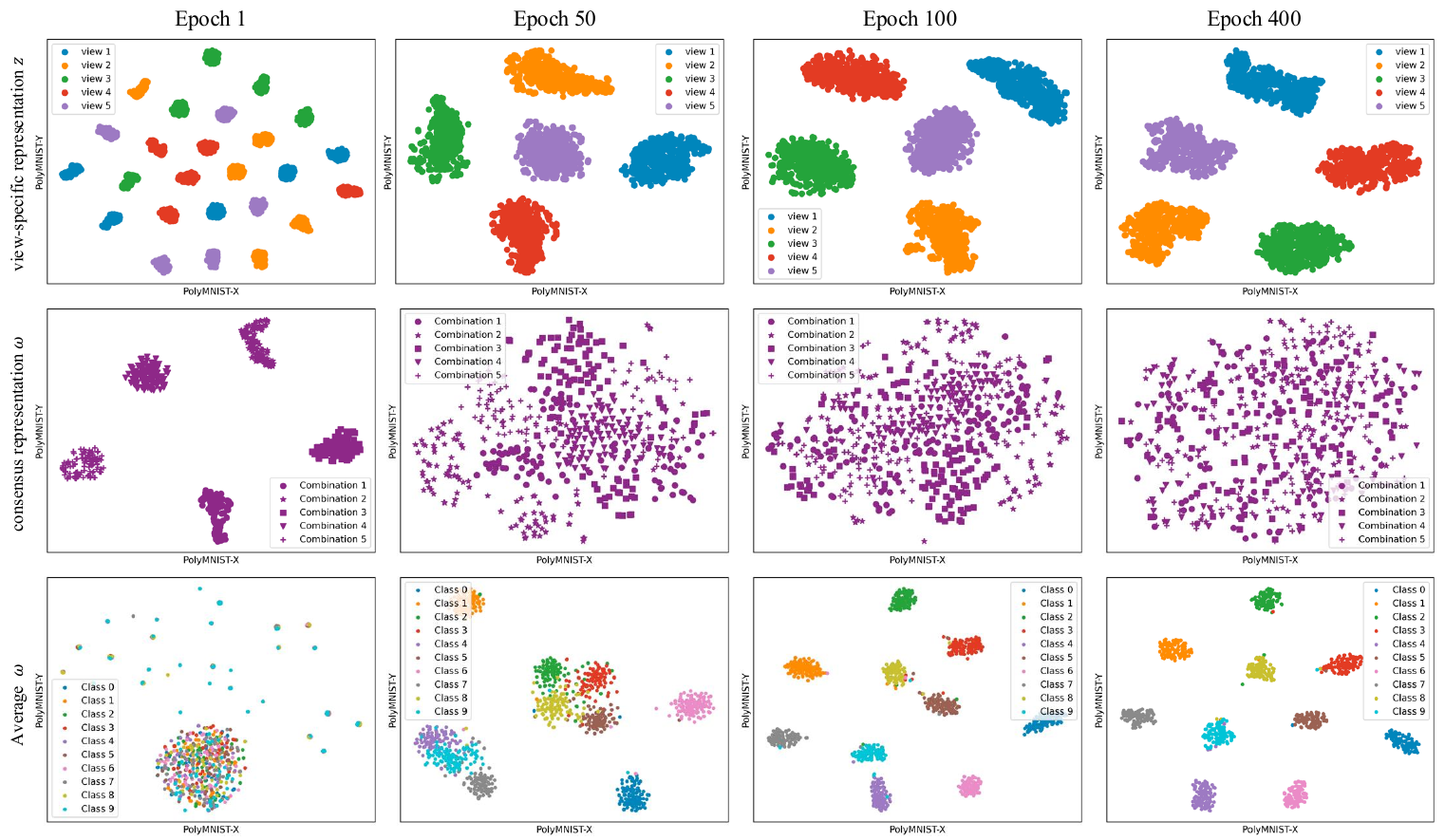}
\caption{\textbf{T-SNE visualization of latent space dynamics during training on the PolyMNIST dataset.} \textbf{Top row}: Latent variables $z$ for five views at different training stages, with colors representing each view. As training progresses, the initially scattered representations gradually cluster, indicating the establishment of inter-view correspondences.  \textbf{Middle row}: Consensus variables $\omega$, derived from different combinations of the five views, are shown with different marker shapes. These scattered representations gradually align and become more consistent. \textbf{Bottom row}: Average consensus representations, colored by digit class, become more distinct over time, which reflects enhanced clustering and effective information sharing across views.}
\label{fig:tsne}
\end{figure}

In Figure \ref{fig:set}, we depict the impact of the regularization terms on a five-view sample with one missing view. In the single-view cell $\boldsymbol{\mathcal{S}}_l$, markers of the same shape represent the latent variables $z$ corresponding to the $l$-th view, while the different colors indicate their source views, whether self-encoded or cross-transformed. Note that each set contains as many variables as there are observed views (four in this case), as they can only be encoded or transformed from available views. As this term diminishes, the variables within each $\boldsymbol{\mathcal{S}}_l$ cyclically converge, indicating that variables from different views can effectively transform into each other, thereby establishing inter-view correspondences. This process also enforce a form of soft consistency, as representations from different views are encouraged to approach each other after being transformed, rather than aligning directly. The learning of inter-view correspondences avoids collapsing into identity mappings because the reconstruction loss ensures that variables retain unique information specific to each view. 

The Consensus Concentration term aims to ensure that consensus variables derived from different combinations remain consistent, as shown in Figure \ref{fig:set}. Each $\omega$ in $\boldsymbol{\Omega}$ is obtained from a complete combination of all views. Over time, the regularization promotes closer alignment of these consensus variables, facilitating the aggregation of shared information across the views.

Figure \ref{fig:tsne} illustrates the evolution of the latent space during training. The top row depicts the latent variables $z$ for five views across different training epochs, with each color representing a different view. Each colored cluster contains all variables in $\boldsymbol{\mathcal{S}}_l$, whether self-encoded or cross-transformed. Initially, these variables are scattered, but over time, they coalesce into five distinct clusters, indicating the emergence of inter-view correspondences. In the middle row, the consensus variables $\omega$, derived from various complete combinations of the five views, are shown. At the start, these variables are widely dispersed, as the views are not yet able to transform into each other effectively, leading to inconsistencies in the captured information. As training progresses, inter-view correspondences are established, and the first $k$ dimensions of $z$ reliably encode shared information across views. As a result, all combinations of the five views produce similar $\omega$'s, with their representations converging into indistinguishable, uniformly distributed clusters.

\section{Additional Experimental Results}\label{app: A2}

In this section, we present additional experimental results to complement those in the main text. A complete version of the clustering results on five datasets in Section \ref{sec:4.1}, including standard deviations from five experimental runs, is provided in Table \ref{tab:clustering complete}. 

% In \ref{app:b1}, we provide both quantitative and qualitative comparisons on the MVShapeNet dataset against other MVAE methods discussed in Section \ref{sec:4.2.2}. In \ref{app:b2}, we conduct an ablation study on the permutations used for reconstruction and regularization, demonstrating that our cyclic approach yields the best results. In \ref{app:b3}, we evaluate the performance of the two regularization terms in our ELBO across various coefficient combinations. Finally, in \ref{app:b4}, we explore the choice of latent variable dimensions across different datasets, comparing the selection of the first $k$ dimensions for encoding shared information in the PolyMNIST and MVShapeNet datasets.

% Table generated by Excel2LaTeX from sheet 'Clustering reuslts'
\begin{sidewaystable}[htbp]
  \centering
  \caption{Complete clustering results of nine methods on five multi-view datasets with missing rates of $\eta = 0.1, 0.3, 0.5$, and $0.7$. The first and second best results are indicated in \textcolor[rgb]{ .753,  0,  0}{\textbf{bold red}} and \textcolor[rgb]{ 0,  .439,  .753}{\textbf{blue}}, respectively. Each experiment was run five times using different random seeds.}
  \resizebox{0.95\linewidth}{!}{
    \begin{tabular}{c|c|ccc|ccc|ccc|ccc}
    \toprule
          & \textbf{Missing rate} & \multicolumn{3}{c|}{\textbf{0.1}} & \multicolumn{3}{c|}{\textbf{0.3}} & \multicolumn{3}{c|}{\textbf{0.5}} & \multicolumn{3}{c}{\textbf{0.7}} \\
    \midrule
          & \textbf{Metrics} & \textbf{ACC}$\uparrow$ & \textbf{NMI}$\uparrow$ & \textbf{ARI}$\uparrow$ & \textbf{ACC}$\uparrow$ & \textbf{NMI}$\uparrow$ & \textbf{ARI}$\uparrow$ & \textbf{ACC}$\uparrow$ & \textbf{NMI}$\uparrow$ & \textbf{ARI}$\uparrow$ & \textbf{ACC}$\uparrow$ & \textbf{NMI}$\uparrow$ & \textbf{ARI}$\uparrow$ \\
    \midrule
    \multirow{9}[2]{*}{\begin{sideways}\textbf{Handwritten}\end{sideways}} & DCCA  & 73.83±0.66 & 71.17±1.92 & 55.13±2.01 & 68.02±5.87 & 64.93±6.40 & 44.76±11.45 & 63.25±8.30 & 60.27±8.42 & 38.93±12.49 & 59.07±10.23 & 56.44±9.86 & 33.58±14.25 \\
          & DCCAE & 73.61±0.65 & 72.22±1.62 & 54.61±2.49 & 68.12±5.53 & 65.76±6.62 & 43.42±12.07 & 63.36±8.13 & 60.72±8.97 & 37.32±13.48 & 59.31±9.94 & 56.89±10.22 & 32.55±14.33 \\
          & DIMVC & \textcolor[rgb]{ 0,  .439,  .753}{\textbf{89.13±1.11}} & 80.06±1.36 & 77.96±2.06 & 85.24±1.20 & 74.67±1.47 & 70.83±2.10 & 82.76±0.71 & 71.17±1.06 & 66.81±1.19 & 79.66±4.28 & 68.94±2.20 & 63.12±4.17 \\
          & DSIMVC & 81.27±1.48 & 79.47±1.30 & 71.59±1.92 & 81.82±3.27 & 80.27±2.95 & 73.36±3.40 & 81.39±2.83 & 79.23±2.47 & 71.88±3.94 & 77.38±4.35 & 74.80±3.12 & 66.84±4.35 \\
          & Completer & 82.18±2.06 & 77.73±1.03 & 68.92±1.98 & 78.70±4.22 & 69.08±4.17 & 58.14±4.50 & 74.73±4.51 & 67.49±4.05 & 50.21±6.37 & 68.86±2.41 & 62.41±1.61 & 41.06±1.88 \\
          & CPSPAN & 80.30±7.13 & 78.43±3.90 & 71.84±6.71 & 79.80±6.76 & 79.08±3.61 & 72.28±6.09 & 84.64±5.47 & 80.23±3.10 & 75.34±5.62 & \textcolor[rgb]{ 0,  .439,  .753}{\textbf{83.90±5.26}} & 80.60±1.98 & \textcolor[rgb]{ 0,  .439,  .753}{\textbf{75.25±4.26}} \\
          & ICMVC & 88.86±5.39 & 82.19±3.94 & 79.51±6.58 & 80.95±2.75 & 74.53±1.15 & 68.15±1.81 & 73.83±0.46 & 67.95±0.23 & 58.72±0.44 & 67.48±3.83 & 61.99±2.97 & 48.66±3.19 \\
          & DVIMC & 87.89±3.28 & \textcolor[rgb]{ 0,  .439,  .753}{\textbf{83.51±1.98}} & \textcolor[rgb]{ 0,  .439,  .753}{\textbf{79.66±2.81}} & \textcolor[rgb]{ 0,  .439,  .753}{\textbf{85.36±5.55}} & \textcolor[rgb]{ 0,  .439,  .753}{\textbf{82.82±3.82}} & \textcolor[rgb]{ 0,  .439,  .753}{\textbf{78.50±5.83}} & \textcolor[rgb]{ 0,  .439,  .753}{\textbf{85.01±4.49}} & \textcolor[rgb]{ 0,  .439,  .753}{\textbf{82.96±2.41}} & \textcolor[rgb]{ 0,  .439,  .753}{\textbf{78.50±4.06}} & 81.66±5.92 & \textcolor[rgb]{ 0,  .439,  .753}{\textbf{80.78±1.84}} & 74.85±4.66 \\
          & \textbf{MVP(Ours)} & \textcolor[rgb]{ .753,  0,  0}{\textbf{90.55±5.29}} & \textcolor[rgb]{ .753,  0,  0}{\textbf{87.08±2.48}} & \textcolor[rgb]{ .753,  0,  0}{\textbf{84.46±5.16}} & \textcolor[rgb]{ .753,  0,  0}{\textbf{88.69±5.89}} & \textcolor[rgb]{ .753,  0,  0}{\textbf{84.86±2.73}} & \textcolor[rgb]{ .753,  0,  0}{\textbf{81.59±5.77}} & \textcolor[rgb]{ .753,  0,  0}{\textbf{90.76±4.69}} & \textcolor[rgb]{ .753,  0,  0}{\textbf{85.44±1.55}} & \textcolor[rgb]{ .753,  0,  0}{\textbf{83.53±3.69}} & \textcolor[rgb]{ .753,  0,  0}{\textbf{86.74±5.15}} & \textcolor[rgb]{ .753,  0,  0}{\textbf{82.56±2.31}} & \textcolor[rgb]{ .753,  0,  0}{\textbf{78.75±4.77}} \\
    \midrule
    \multirow{9}[1]{*}{\begin{sideways}\textbf{CUB}\end{sideways}} & DCCA  & 58.93±3.21 & 59.59±1.08 & 40.87±1.23 & 55.65±4.49 & 54.23±5.71 & 35.63±5.86 & 48.60±10.85 & 46.71±11.92 & 27.88±12.17 & 42.35±14.35 & 39.42±16.37 & 21.77±14.94 \\
          & DCCAE & 57.27±3.22 & 61.04±2.59 & 43.09±3.32 & 53.43±4.53 & 54.59±6.82 & 36.48±7.11 & 47.21±9.68 & 47.05±12.10 & 28.37±12.93 & 41.52±13.12 & 40.50±15.76 & 22.66±15.03 \\
          & DIMVC & 66.03±5.04 & 61.70±4.06 & 48.96±5.55 & 57.20±3.98 & 56.00±2.89 & 41.25±3.74 & 60.65±3.62 & 55.75±3.62 & 42.81±4.77 & 56.08±1.16 & 51.07±1.54 & 36.65±2.23 \\
          & DSIMVC & \textcolor[rgb]{ 0,  .439,  .753}{\textbf{72.93±5.28}} & \textcolor[rgb]{ 0,  .439,  .753}{\textbf{67.82±3.07}} & \textcolor[rgb]{ 0,  .439,  .753}{\textbf{55.89±4.56}} & \textcolor[rgb]{ 0,  .439,  .753}{\textbf{66.83±4.74}} & 61.78±2.21 & 47.71±3.84 & \textcolor[rgb]{ 0,  .439,  .753}{\textbf{68.37±5.67}} & 61.55±3.00 & \textcolor[rgb]{ 0,  .439,  .753}{\textbf{48.21±4.70}} & \textcolor[rgb]{ .753,  0,  0}{\textbf{67.33±2.29}} & 59.89±0.95 & \textcolor[rgb]{ 0,  .439,  .753}{\textbf{46.31±1.97}} \\
          & Completer & 52.97±4.56 & 65.47±2.41 & 45.98±4.45 & 60.73±2.49 & \textcolor[rgb]{ 0,  .439,  .753}{\textbf{68.88±1.94}} & \textcolor[rgb]{ 0,  .439,  .753}{\textbf{52.78±2.71}} & 51.90±2.47 & 61.84±0.89 & 45.11±1.28 & 19.43±1.15 & 17.37±1.23 & 0.73±0.31 \\
          & CPSPAN & 58.77±3.12 & 62.27±2.18 & 45.35±2.40 & 61.30±1.41 & 64.21±2.32 & 48.93±2.94 & 60.07±4.51 & \textcolor[rgb]{ 0,  .439,  .753}{\textbf{64.18±2.50}} & 46.42±3.41 & 58.60±1.41 & \textcolor[rgb]{ 0,  .439,  .753}{\textbf{62.16±1.43}} & 45.23±1.64 \\
          & ICMVC & 29.23±0.37 & 38.31±1.87 & 21.55±0.88 & 24.33±4.17 & 25.74±5.40 & 14.17±4.28 & 22.90±2.83 & 19.87±1.93 & 10.52±1.76 & 19.43±2.62 & 16.10±1.80 & 7.79±1.67 \\
          & DVIMVC & 44.53±4.00 & 41.83±3.49 & 23.78±2.59 & 43.37±2.57 & 45.18±1.85 & 28.29±1.80 & 39.57±5.54 & 34.39±4.68 & 20.52±3.40 & 39.47±2.61 & 36.71±3.60 & 22.41±2.82 \\
          & \textbf{MVP(Ours)} & \textcolor[rgb]{ .753,  0,  0}{\textbf{78.67±2.39}} & \textcolor[rgb]{ .753,  0,  0}{\textbf{77.67±0.66}} & \textcolor[rgb]{ .753,  0,  0}{\textbf{66.73±0.71}} & \textcolor[rgb]{ .753,  0,  0}{\textbf{74.97±4.58}} & \textcolor[rgb]{ .753,  0,  0}{\textbf{73.09±2.55}} & \textcolor[rgb]{ .753,  0,  0}{\textbf{61.35±3.60}} & \textcolor[rgb]{ .753,  0,  0}{\textbf{74.20±3.21}} & \textcolor[rgb]{ .753,  0,  0}{\textbf{71.40±1.30}} & \textcolor[rgb]{ .753,  0,  0}{\textbf{59.11±2.53}} & \textcolor[rgb]{ 0,  .439,  .753}{\textbf{66.53±3.73}} & \textcolor[rgb]{ .753,  0,  0}{\textbf{63.11±1.21}} & \textcolor[rgb]{ .753,  0,  0}{\textbf{50.59±1.84}} \\
    \midrule
    \multirow{9}[1]{*}{\begin{sideways}\textbf{Scene15}\end{sideways}} & DCCA  & 38.22±1.00 & 41.20±0.52 & 19.89±0.42 & 36.16±2.17 & 39.46±1.79 & 17.12±2.80 & 34.05±3.50 & 37.26±3.49 & 14.48±4.45 & 30.84±6.35 & 33.93±6.55 & 12.60±5.08 \\
          & DCCAE & 39.46±0.84 & 42.08±0.55 & 20.36±0.27 & 36.73±2.86 & 39.80±2.35 & 17.06±3.32 & 34.49±3.96 & 37.66±3.61 & 14.57±4.44 & 31.16±6.75 & 34.21±6.84 & 12.64±5.13 \\
          & DIMVC & 42.51±2.42 & 41.53±1.60 & 24.45±2.23 & 40.37±1.85 & 38.57±1.30 & 20.84±1.54 & 40.17±2.14 & 35.95±2.50 & 20.59±2.68 & 36.01±1.27 & 32.57±1.08 & 16.29±2.24 \\
          & DSIMVC & 29.43±1.21 & 30.38±0.95 & 14.86±0.59 & 31.38±1.02 & 32.54±1.33 & 16.29±0.90 & 27.24±1.21 & 28.68±0.85 & 13.38±0.64 & 28.42±0.78 & 29.09±0.81 & 13.85±0.21 \\
          & Completer & 37.00±1.82 & 41.89±0.41 & 23.60±0.84 & 40.04±0.67 & 42.41±0.32 & 24.22±0.24 & 36.64±1.91 & 38.99±1.05 & 19.70±1.35 & 35.37±0.87 & 37.05±1.03 & 17.58±0.99 \\
          & CPSPAN & 42.69±1.77 & 38.79±2.49 & 24.56±2.15 & 43.21±1.51 & 39.42±0.56 & 24.94±0.85 & 43.44±1.43 & 39.19±1.85 & 24.96±1.46 & \textcolor[rgb]{ 0,  .439,  .753}{\textbf{42.53±2.56}} & \textcolor[rgb]{ 0,  .439,  .753}{\textbf{38.41±2.43}} & \textcolor[rgb]{ 0,  .439,  .753}{\textbf{24.38±1.97}} \\
          & ICMVC & 43.88±2.35 & 40.03±1.14 & 25.80±1.54 & 43.14±1.02 & 38.06±0.51 & 24.74±0.89 & 37.96±1.87 & 33.45±0.93 & 20.34±0.78 & 36.70±2.22 & 35.80±1.30 & 18.35±1.32 \\
          & DVIMC & \textcolor[rgb]{ 0,  .439,  .753}{\textbf{45.16±2.40}} & \textcolor[rgb]{ .753,  0,  0}{\textbf{45.06±1.33}} & \textcolor[rgb]{ .753,  0,  0}{\textbf{28.64±1.52}} & \textcolor[rgb]{ 0,  .439,  .753}{\textbf{43.68±1.54}} & \textcolor[rgb]{ 0,  .439,  .753}{\textbf{42.32±2.11}} & \textcolor[rgb]{ 0,  .439,  .753}{\textbf{26.68±1.05}} & \textcolor[rgb]{ 0,  .439,  .753}{\textbf{41.13±3.32}} & \textcolor[rgb]{ 0,  .439,  .753}{\textbf{39.58±2.25}} & \textcolor[rgb]{ 0,  .439,  .753}{\textbf{25.03±2.42}} & 39.59±4.41 & 36.66±4.62 & 21.56±4.04 \\
          & \textbf{MVP(Ours)} & \textcolor[rgb]{ .753,  0,  0}{\textbf{45.70±1.63}} & \textcolor[rgb]{ 0,  .439,  .753}{\textbf{43.77±0.93}} & \textcolor[rgb]{ 0,  .439,  .753}{\textbf{27.90±1.38}} & \textcolor[rgb]{ .753,  0,  0}{\textbf{45.81±2.75}} & \textcolor[rgb]{ .753,  0,  0}{\textbf{42.54±1.02}} & \textcolor[rgb]{ .753,  0,  0}{\textbf{27.53±1.86}} & \textcolor[rgb]{ .753,  0,  0}{\textbf{45.28±1.44}} & \textcolor[rgb]{ .753,  0,  0}{\textbf{41.84±0.79}} & \textcolor[rgb]{ .753,  0,  0}{\textbf{26.80±1.20}} & \textcolor[rgb]{ .753,  0,  0}{\textbf{43.14±2.20}} & \textcolor[rgb]{ .753,  0,  0}{\textbf{39.53±0.67}} & \textcolor[rgb]{ .753,  0,  0}{\textbf{24.68±1.58}} \\
    \midrule
    \multirow{9}[2]{*}{\begin{sideways}\textbf{Reuters}\end{sideways}} & DCCA  & 47.66±1.83 & 23.93±4.52 & 15.46±1.55 & 46.28±1.95 & 20.62±4.64 & 12.71±3.05 & 44.10±4.44 & 22.63±4.89 & 11.04±3.63 & 43.36±4.35 & 22.90±5.13 & 10.03±3.69 \\
          & DCCAE & 42.70±1.20 & 23.84±6.27 & 7.59±1.61 & 43.71±2.93 & 26.07±5.37 & 8.15±2.47 & 42.32±3.13 & 24.30±6.08 & 6.80±2.78 & 41.32±3.25 & 23.11±6.34 & 5.90±2.88 \\
          & DIMVC & 48.83±2.38 & 28.94±2.39 & 25.78±2.01 & 50.54±2.91 & 29.86±2.58 & \textcolor[rgb]{ 0,  .439,  .753}{\textbf{26.89±1.90}} & 48.51±2.54 & 27.29±2.25 & 24.74±1.67 & 46.94±3.41 & 25.79±2.77 & 23.24±2.49 \\
          & DSIMVC & 51.26±3.45 & 35.56±1.95 & 28.21±2.05 & \textcolor[rgb]{ 0,  .439,  .753}{\textbf{51.33±2.28}} & \textcolor[rgb]{ 0,  .439,  .753}{\textbf{34.88±1.00}} & 26.61±1.81 & \textcolor[rgb]{ 0,  .439,  .753}{\textbf{50.78±2.16}} & \textcolor[rgb]{ .753,  0,  0}{\textbf{36.85±1.32}} & \textcolor[rgb]{ 0,  .439,  .753}{\textbf{28.27±0.81}} & \textcolor[rgb]{ 0,  .439,  .753}{\textbf{47.12±2.08}} & \textcolor[rgb]{ 0,  .439,  .753}{\textbf{33.57±3.00}} & \textcolor[rgb]{ 0,  .439,  .753}{\textbf{25.51±1.96}} \\
          & Completer & 41.08±0.97 & 21.38±4.42 & 7.92±2.18 & 40.56±2.08 & 22.48±1.97 & 10.32±2.34 & 41.77±2.28 & 20.41±2.78 & 9.80±3.53 & 42.27±2.73 & 22.47±1.17 & 11.51±1.84 \\
          & CPSPAN & 38.35±5.07 & 14.35±3.10 & 10.94±2.70 & 38.51±2.30 & 13.11±4.75 & 10.47±2.09 & 38.21±3.44 & 11.80±3.58 & 11.30±3.35 & 37.86±4.66 & 12.03±4.50 & 10.16±3.84 \\
          & ICMVC & \textcolor[rgb]{ 0,  .439,  .753}{\textbf{54.01±1.67}} & \textcolor[rgb]{ 0,  .439,  .753}{\textbf{36.52±1.37}} & \textcolor[rgb]{ 0,  .439,  .753}{\textbf{29.44±1.13}} & 51.09±2.33 & 30.71±2.09 & 25.66±2.38 & 47.59±1.68 & 28.43±1.00 & 23.56±2.07 & 47.67±0.47 & 26.83±1.54 & 22.14±1.44 \\
          & DVIMC & 44.06±1.87 & 16.08±5.59 & 15.21±4.70 & 43.06±1.02 & 10.84±0.61 & 11.77±1.06 & 35.37±3.99 & 5.14±2.33 & 4.98±2.51 & 32.18±4.07 & 3.02±2.48 & 3.15±2.72 \\
          & \textbf{MVP(Ours)} & \textcolor[rgb]{ .753,  0,  0}{\textbf{57.83±3.66}} & \textcolor[rgb]{ .753,  0,  0}{\textbf{37.25±1.17}} & \textcolor[rgb]{ .753,  0,  0}{\textbf{32.20±2.02}} & \textcolor[rgb]{ .753,  0,  0}{\textbf{55.70±5.10}} & \textcolor[rgb]{ .753,  0,  0}{\textbf{37.02±2.08}} & \textcolor[rgb]{ .753,  0,  0}{\textbf{31.35±3.30}} & \textcolor[rgb]{ .753,  0,  0}{\textbf{53.67±2.88}} & \textcolor[rgb]{ 0,  .439,  .753}{\textbf{35.43±1.16}} & \textcolor[rgb]{ .753,  0,  0}{\textbf{30.24±0.93}} & \textcolor[rgb]{ .753,  0,  0}{\textbf{55.16±2.93}} & \textcolor[rgb]{ .753,  0,  0}{\textbf{36.00±0.51}} & \textcolor[rgb]{ .753,  0,  0}{\textbf{30.66±1.07}} \\
    \midrule
    \multirow{9}[2]{*}{\begin{sideways}\textbf{SensIT Vehicle}\end{sideways}} & DCCA  & 57.11±5.77 & 11.60±8.78 & 14.26±11.24 & 57.76±5.86 & 14.46±11.04 & 16.62±13.17 & 53.89±8.00 & 11.01±10.83 & 12.79±12.80 & 50.69±9.05 & 8.47±10.37 & 9.75±12.28 \\
          & DCCAE & 57.93±5.13 & 12.84±7.71 & 15.28±10.61 & 60.32±4.77 & 19.42±9.94 & 22.46±12.31 & 54.08±9.96 & 13.32±11.89 & 15.40±14.21 & 51.33±9.86 & 10.31±11.55 & 11.81±13.79 \\
          & DIMVC & 59.72±8.27 & 17.31±13.96 & 21.82±8.45 & 62.38±5.96 & 23.18±10.38 & 27.93±12.98 & 61.09±6.02 & 22.08±11.01 & 26.21±13.13 & 60.57±4.44 & 21.36±9.41 & 25.44±11.21 \\
          & DSIMVC & 69.82±1.60 & 33.40±0.62 & 34.88±3.00 & 69.24±0.98 & \textcolor[rgb]{ 0,  .439,  .753}{\textbf{32.95±0.41}} & 33.50±1.66 & \textcolor[rgb]{ 0,  .439,  .753}{\textbf{68.05±0.85}} & \textcolor[rgb]{ 0,  .439,  .753}{\textbf{31.49±0.23}} & 31.56±1.61 & \textcolor[rgb]{ 0,  .439,  .753}{\textbf{66.54±0.22}} & \textcolor[rgb]{ 0,  .439,  .753}{\textbf{30.08±0.22}} & 29.73±0.72 \\
          & Completer & 52.63±2.56 & 5.33±1.96 & 3.72±3.01 & 55.59±5.66 & 12.09±11.52 & 11.29±12.69 & 55.09±6.18 & 13.96±11.70 & 12.52±12.63 & 56.37±6.36 & 14.77±11.10 & 14.66±12.71 \\
          & CPSPAN & 63.48±1.65 & 28.43±0.41 & 32.32±0.59 & 64.03±1.24 & 28.10±0.32 & 32.33±0.84 & 65.47±0.89 & 28.62±60.78 & 32.25±0.55 & 64.16±1.76 & 28.60±0.79 & \textcolor[rgb]{ 0,  .439,  .753}{\textbf{31.28±0.79}} \\
          & ICMVC & \textcolor[rgb]{ 0,  .439,  .753}{\textbf{71.50±0.46}} & \textcolor[rgb]{ 0,  .439,  .753}{\textbf{34.53±0.38}} & \textcolor[rgb]{ 0,  .439,  .753}{\textbf{36.41±0.42}} & \textcolor[rgb]{ 0,  .439,  .753}{\textbf{70.79±0.50}} & 32.95±0.59 & 33.63±0.43 & 67.80±2.36 & 29.11±2.47 & 29.36±2.48 & 54.11±5.51 & 19.39±3.44 & 18.93±3.51 \\
          & DVIMC & 69.48±0.46 & 30.41±0.52 & 34.98±0.88 & 69.58±0.21 & 30.31±0.23 & \textcolor[rgb]{ 0,  .439,  .753}{\textbf{35.26±0.38}} & 67.89±0.46 & 29.27±0.26 & \textcolor[rgb]{ 0,  .439,  .753}{\textbf{34.00±0.43}} & 61.91±2.77 & 25.84±0.90 & 28.59±2.35 \\
          & \textbf{MVP(Ours)} & \textcolor[rgb]{ .753,  0,  0}{\textbf{72.08±0.10}} & \textcolor[rgb]{ .753,  0,  0}{\textbf{34.81±0.21}} & \textcolor[rgb]{ .753,  0,  0}{\textbf{41.10±0.15}} & \textcolor[rgb]{ .753,  0,  0}{\textbf{71.28±0.23}} & \textcolor[rgb]{ .753,  0,  0}{\textbf{33.57±0.09}} & \textcolor[rgb]{ .753,  0,  0}{\textbf{39.76±0.26}} & \textcolor[rgb]{ .753,  0,  0}{\textbf{70.21±0.09}} & \textcolor[rgb]{ .753,  0,  0}{\textbf{32.05±0.20}} & \textcolor[rgb]{ .753,  0,  0}{\textbf{38.06±0.12}} & \textcolor[rgb]{ .753,  0,  0}{\textbf{68.87±0.32}} & \textcolor[rgb]{ .753,  0,  0}{\textbf{30.08±0.09}} & \textcolor[rgb]{ .753,  0,  0}{\textbf{36.23±0.39}} \\
    \bottomrule
    \end{tabular}%
    }
  \label{tab:clustering complete}%
\end{sidewaystable}%

\subsection{Quantitative and Qualitative Results on MVShapeNet}\label{app:b1}

In this section, we present both quantitative and qualitative comparisons on the MVShapeNet dataset to assess the performance of our method, MVP, alongside several prominent MVAE-based approaches. These include the models discussed in Section \ref{sec:4.2.2}, such as mVAE, mmVAE, mmJSD, MoPoE, and MVTCAE. Although MMVAE+ is a strong method, it did not perform well on this dataset using the same CNN-based architecture, irrespective of whether the Normal or Laplace distribution was applied. For this reason, we chose not to include it in our direct comparisons.

It’s important to note that MMVAE+ excels in handling complete datasets by leveraging auxiliary priors to facilitate cross-modal reconstructions, making it highly effective in real-world generation tasks. However, its design, which prioritizes robustness to hyperparameters that control the capacity of modality-specific subspaces, seems less suited for scenarios where data is missing. In such cases, its auxiliary prior may struggle to compensate for missing information, affecting its ability to perform well under these conditions. This key distinction highlights the difference in focus between MMVAE+ and our approach, with each serving different objectives.

\begin{figure}[h]
\centering
\includegraphics[width=\columnwidth]{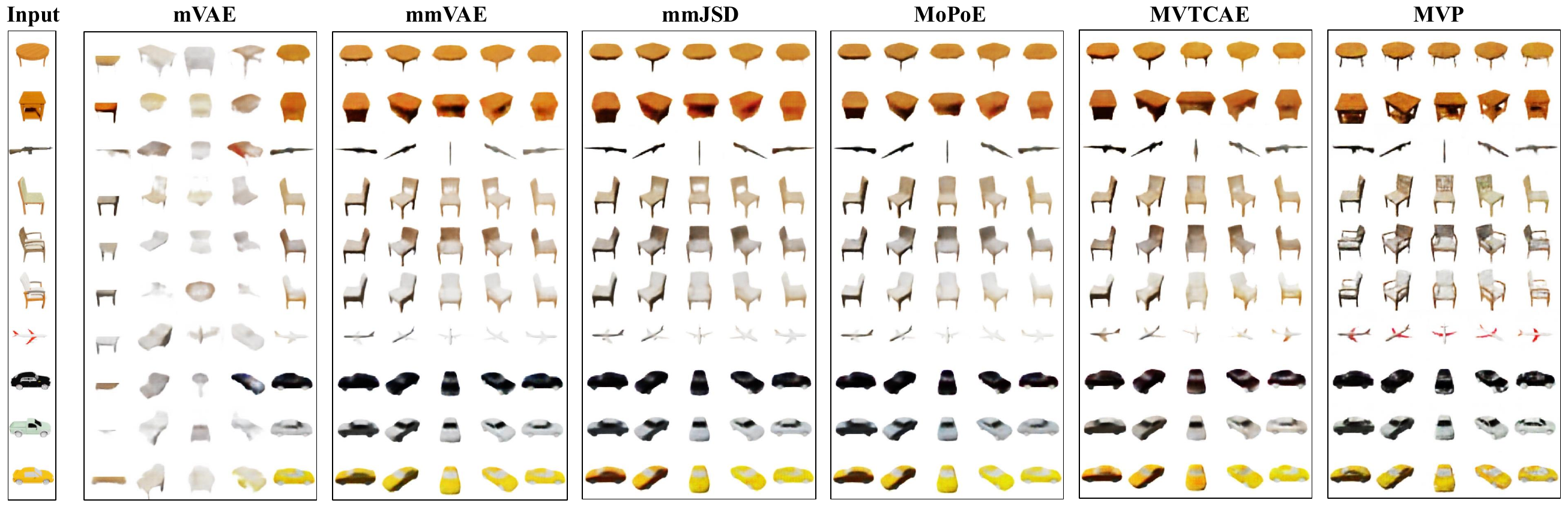}
\caption{\textbf{Multi-view sample generation conditioned on view 5.} The leftmost column shows input images from view 5, randomly selected from five categories: table, rifle, chair, airplane, and car. The following columns display five-view samples generated by different models.}
\label{fig:shape05}
\end{figure}

% Table generated by Excel2LaTeX from sheet '生成'
\begin{table}[t]
  \centering
  \caption{\textbf{Quantitative results on the MVShapeNet dataset.} The metrics `ACC' and `View' represent the classification accuracy of generated images for object categories and perspective angles, respectively, using two pretrained classifiers. The metric `SSIM' measures the structural similarity of generated images compared to the ground truth. Models were trained with incomplete data at missing rates $\eta=0.1$ and $\eta=0.5$, and evaluated across different combinations of missing views. Results are averaged over same-sized subsets such as `missing 1 view', which includes five cases: \{2,3,4,5\}, \{1,3,4,5\}, \{1,2,4,5\}, and \{1,2,3,4\}.}
  \resizebox{\linewidth}{!}{
    \begin{tabular}{c|c|ccc|ccc|ccc|ccc}
    \toprule
    \multicolumn{2}{c|}{\textbf{Testing Subset}} & \multicolumn{3}{c|}{\textbf{Missing 1 view}} & \multicolumn{3}{c|}{\textbf{Missing 2 views}} & \multicolumn{3}{c|}{\textbf{Missing 3 views}} & \multicolumn{3}{c}{\textbf{Missing 4 views}} \\
    \midrule
          & \textbf{Method } & \textbf{ACC}$\uparrow$ & \textbf{View}$\uparrow$ & \textbf{SSIM}$\uparrow$ & \textbf{ACC}$\uparrow$ & \textbf{View}$\uparrow$ & \textbf{SSIM}$\uparrow$ & \textbf{ACC}$\uparrow$ & \textbf{View}$\uparrow$ & \textbf{SSIM}$\uparrow$ & \textbf{ACC}$\uparrow$ & \textbf{View}$\uparrow$ & \textbf{SSIM}$\uparrow$ \\
           \midrule
    \multirow{6}[0]{*}{0.1} & mVAE  & 90.56  & 81.34  & 0.8406  & 86.60  & 78.56  & 0.8139  & 79.11  & 73.97  & 0.7737  & 62.27  & 66.70  & 0.7045  \\
          & mmVAE & 91.56  & 81.28  & 0.8141  & 91.55  & 81.39  & 0.8132  & 91.56  & 81.42  & 0.8126  & 91.78  & 81.51  & 0.8141  \\
          & mmJSD & 41.16  & 53.03  & 0.7217  & 40.91  & 53.11  & 0.7210  & 40.77  & 53.24  & 0.7207  & 40.55  & 53.16  & 0.7215  \\
          & MoPoE & 86.45  & 83.18  & 0.8252  & 86.34  & 83.04  & 0.8223  & 85.94  & 82.66  & 0.8159  & 84.40  & 81.68  & 0.7961  \\
          & MVTCAE & 93.65  & 83.45  & \textbf{0.8570 } & 93.39  & 83.41  & \textbf{0.8440 } & 92.71  & 83.07  & \textbf{0.8308 } & 89.90  & 81.70  & 0.7858  \\
          & \textbf{MVP(Ours)} & \textbf{94.66 } & \textbf{84.34 } & 0.8349  & \textbf{94.76 } & \textbf{84.35 } & 0.8336  & \textbf{94.77 } & \textbf{84.58 } & 0.8274  & \textbf{94.47 } & \textbf{84.12 } & \textbf{0.8259 } \\
          \midrule
    \multirow{6}[1]{*}{0.5} & mVAE  & 79.04  & 76.14  & 0.8111  & 73.02  & 71.88  & 0.7802  & 63.76  & 66.88  & 0.7386  & 49.28  & 61.26  & 0.6800  \\
          & mmVAE & 91.22  & 81.67  & 0.8088  & 91.26  & 81.63  & 0.8081  & 91.30  & 81.51  & 0.8076  & 91.38  & 81.40  & 0.8092  \\
          & mmJSD & 92.40  & 82.49  & 0.8118  & 92.41  & 82.46  & 0.8110  & 92.40  & 82.52  & 0.8105  & 92.37  & 82.47  & \textbf{0.8192 } \\
          & MoPoE & 92.84  & 82.46  & 0.8314  & 92.71  & 82.31  & 0.8292  & 92.26  & 82.00  & 0.8247  & 90.55  & 80.91  & 0.8129  \\
          & MVTCAE & 93.38  & 83.09  & \textbf{0.8482 } & 93.07  & 82.91  & \textbf{0.8396 } & 92.22  & 82.53  & \textbf{0.8540 } & 89.03  & 80.86  & 0.7865  \\
          & \textbf{MVP(Ours)} & \textbf{94.57 } & \textbf{84.60 } & 0.8239  & \textbf{94.72 } & \textbf{84.64 } & 0.8276  & \textbf{94.75 } & \textbf{84.66 } & 0.8268  & \textbf{94.45 } & \textbf{84.50 } & 0.8126  \\
    \bottomrule
    \end{tabular}%
}
  \label{tab:shape}%
\end{table}%

Figure \ref{fig:shape05} illustrates multi-view sample generation conditioned on view 5 across different object categories. All models were trained with a missing rate of $\eta=0.5$, representing a complex scenario where maintaining geometric consistency and capturing fine details across views is particularly challenging. MVP stands out by producing sharper and more consistent multi-view samples across categories compared to the other models. % , especially in complex missing scenarios. %categories like cars and airplanes, where maintaining geometric consistency across views is challenging.

Table \ref{tab:shape} presents the quantitative evaluation on the MVShapeNet dataset. The `ACC' and `View' metrics represent classification accuracy for object categories and perspective angles, respectively, while `SSIM' measures the structural similarity between generated and ground truth images. As seen from the visualization, SSIM primarily captures the basic structure of the generated images, resulting in relatively minor differences across most methods.
Our method, MVP, consistently achieves competitive SSIM scores and surpasses all other models in both classification accuracy metrics. This robustness is particularly evident compared to mVAE, which suffers a notable drop in performance as the number of missing views increases. Figure \ref{fig:shape05} illustrates this, showing how mVAE, relying on PoE fusion, loses the structural integrity of the input object when fewer views are available.

Interestingly, mVAE, mmJSD, and MoPoE perform better in scenarios with higher missing rates. This can be attributed to their reliance on fusing all available views in their posterior or informational priors, making them dependent on missing data to memorize different incomplete combinations. Thus, they struggle to fully utilize the complementary information provided by additional views. On the other hand, mmVAE, which uses MoE fusion, shows improved performance with more available views during both training and testing. However, its fusion strategy struggles to effectively aggregate information from multiple views, resulting in smoother generated objects that may lack the distinct angles necessary to capture fine details.

MVTCAE demonstrates stable performance across all conditions by enforcing strict consistency between views, making it resilient to missing views. However, this strict consistency can result in the loss of unique, view-specific details.
Our MVP method, by enforcing consistency after transformations, not only maintains robust performance but also effectively integrates information from additional views while preserving the sharp, distinctive details of each view.

% In summary, both the qualitative and quantitative results confirm that MVP provides superior multi-view sample generation, ensuring better consistency across views and higher fidelity in image reconstruction, under challenging conditions with substantial missing data.

\subsection{Additional Experiment Results on CUB Dataset}

We conducted additional experiments on the raw-text version of the CUB dataset to evaluate the generative capabilities of our method on datasets with non-RGB views \citep{netzer2011reading, shi2019variational}. This version contains 88,550 training and 29,330 testing samples, comprising paired bird images and textual descriptions. Following MMVAE+ \citep{palumbo2023mmvae+}, we adopted the same network architecture and latent dimension (64 dimensions).  

\begin{figure}[t]
\centering
\includegraphics[width=0.99\columnwidth]{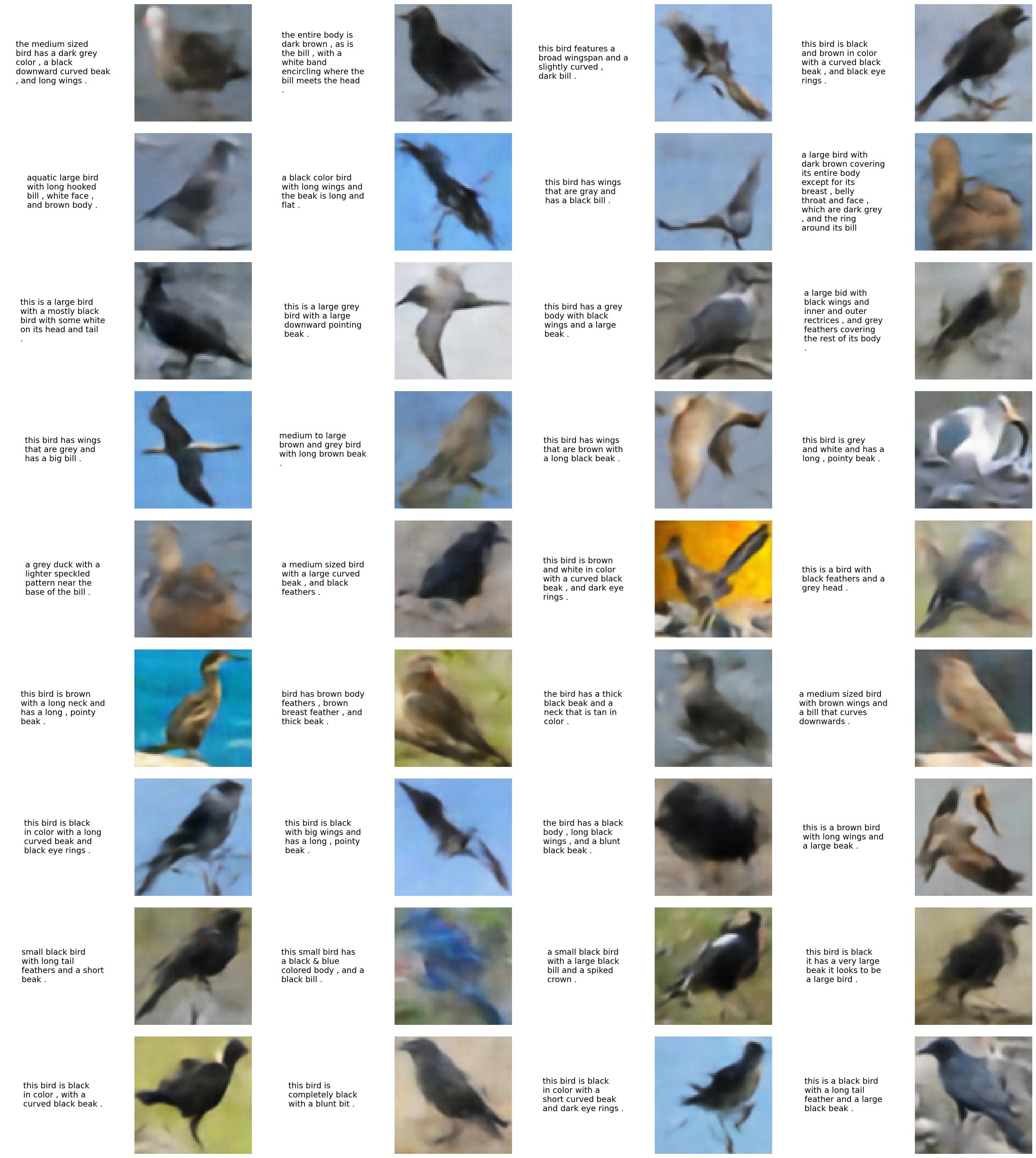} 
\caption{Conditional generation by our method on the CUB dataset given only text.}
\label{fig:cub}
\end{figure}

\begin{figure}[t]
\centering
\includegraphics[width=0.99\columnwidth]{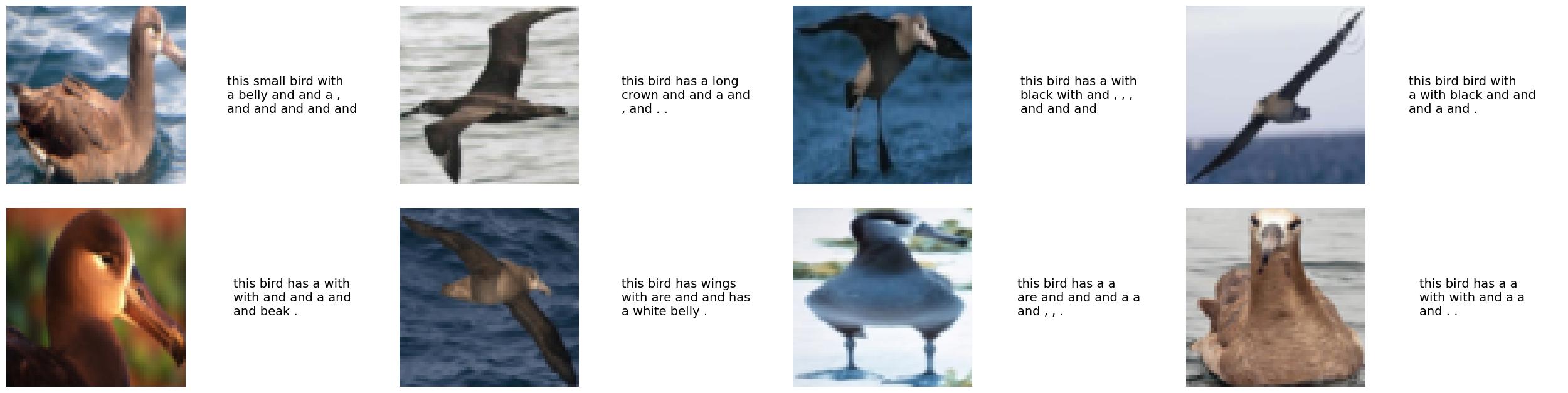} 
\caption{Conditional generation by our method on the CUB dataset given only images.}
\label{fig:cub2}
\end{figure}

The results, shown in Figure 10, demonstrate that our method effectively generates images aligned with textual descriptions in the incomplete setting. Clear semantic alignments are observed in attributes such as colors (e.g., black, white, brown) and structures (e.g., belly, beak, wings). However, the generated images exhibit blurry backgrounds and contours, consistent with observations in MMVAE+, which stem from the limitations of single-step VAE-based generation. Advanced approaches, such as D-CMVAE \citep{palumbo2024deep}, address this issue using diffusion models, though such refinements are beyond the scope of this study.  

Quantitative evaluation remains challenging, as traditional metrics like diversity or reconstruction scores fail to capture cross-modal consistency. While MMVAE+ introduced a metric based on HSV color alignment with textual descriptions, it does not measure higher-level consistencies, such as specific bird features (e.g., wings, beak) or environmental contexts (e.g., sky, water). Future directions could explore more precise metrics or employ large vision-language models for automated evaluation \citep{lin2025evaluating}.

\subsection{Ablation Study of Permutation and Priors Setting}\label{app:b2}

\begin{table}[t]
  \centering
  \caption{\textbf{Ablation study of permutation and different priors in MVAEs with inter-view correspondences.} Clustering results on the Handwritten dataset with a missing rate of 0.5, averaged over five training runs. ``Perm.'' refers to the use of permutation, which reorders the variables of the same views (indicated by the same superscript) for reconstruction. The ``Regularization'' column indicates the use of different prior settings. The last row represents our proposed model.}
    \begin{tabular}{c|cc|ccc}
    \toprule
    \textbf{Model} & \textbf{Reconstruction} & \textbf{Regularization} & \textbf{ACC}$\uparrow$   & \textbf{NMI}$\uparrow$   & \textbf{ARI}$\uparrow$ \\
    \midrule
    1     & Random Perm. & Diagonal & 31.58±10.75 & 30.98±12.11 & 16.94±8.81 \\
    2     & Random Perm. & Random & 59.37±4.30 & 58.70±2.73 & 44.69±4.33 \\
    3     & w/o Perm. & Cyclic & 79.80±1.26 & 81.35±1.00 & 73.98±1.46 \\
    4     & Cyclic Perm. & $\mathcal{N}(0,1)$ & 86.96±6.44 & 82.49±2.80 & 79.23±5.65 \\
    5     & Cyclic Perm. & Fusion & 87.71±6.09 & 84.47±1.57 & 80.77±4.61 \\
    6     & Cyclic Perm. & Diagonal & 88.53±6.43 & 85.00±2.86 & 81.77±6.14 \\
    Ours  & Cyclic Perm. & Cyclic & \textbf{90.76±4.69} & \textbf{85.44±1.55} & \textbf{83.53±3.69} \\
    \bottomrule
    \end{tabular}%
  \label{tab:ablation}%
\end{table}%

We conducted an ablation study to validate the key design decisions in our method, and as shown in Table \ref{tab:ablation}, our proposed approach achieves the best performance. For MVAEs that explicitly model inter-view correspondences, given incomplete input data $\{x^{(v)}\}_{v\in \mathcal{I}}$, we derive a set of latent variables $\{z_{v}^{(l)}\}$ corresponding to different views, organized into matrix $Z_0$ for clarity. In this notation, the superscript in $z_{v}^{(l)}$ indicates that it represents the $l$-th view, while the subscript denotes the source view. Thus, $z_{v}^{(l)}$ is used to reconstruct the $l$-th view's observation and can be regularized by other latent variables associated with the $l$-th view, all of which share the same superscript in the $l$-th single-view cell $\boldsymbol{\mathcal{S}}_l = \{z_{v}^{(l)}\}_{v\in \mathcal{I}}$.

\textbf{Model 1} serves as a simple baseline. In this model, we randomly select a latent variable from the $l$-th view to reconstruct the observation $x^{(v)}$ for that view, using $z_l^{(l)}$ (with matching superscripts and subscripts) to regularize all variables for the $l$-th view. However, this approach performs poorly because the random selection can pick either the self-view encoded $z$ ($v=l$) or the cross-view transformed $z$ ($v \neq l$), leading to difficulties in reconstruction and confusing the learning process. Additionally, the regularization term for $z_l^{(l)}$ simply vanishes entirely.

\textbf{Model 2} relaxes the prior by replacing the strict $z_l^{(l)}$ with a posterior derived from a random permutation. This improves performance due to the added randomness. However, since random permutations do not guarantee that elements are moved from their original positions, some regularization terms for $z_v^{(l)}$ may still vanish.

In \textbf{Model 3}, we exclude permutation during reconstruction. For the $l$-th view $x^{(l)}$, we encode it into $z_l^{(l)}$ and use it directly for decoding. Other latent variables, derived through inter-view correspondences, are aligned with the target view’s variable solely through regularization terms. This approach slightly improves performance, as it effectively adds information from other views during decoding. However, it cannot establish and learn correspondences between views effectively.

In \textbf{Models 4-6} and our proposed model, we apply cyclic permutation to the matrix $Z_0$, using both the diagonals of matrices $Z_0$ and $Z_1$ for reconstruction, effectively combining self-view reconstruction with cross-view generation. The use of different informational priors (`Fusion', `Diagonal', and `Cyclic') demonstrates clear advantages over the standard normal distribution $\mathcal{N}(0,1)$, which lacks the flexibility to adapt to varying samples.
\begin{itemize}
    \item \textbf{`Fusion'} involves using the geometric mean of posteriors within a homogeneous set as the prior for each posterior in that set, it is kind of like reducing the variance of all unimodal posteriors. For example, the geometric mean of $\{z_v^{(l)}\}_{v\in \mathcal{I}}$ is used to regularize each $z_v^{(l)}$. However, when views are missing, the varying set sizes $|\mathcal{I}|$ for different samples result in imbalanced fusion and increased computational complexity. 
    \item \textbf{`Diagonal'} regularizes all $\{z_{v}^{(l)}\}_{v\in \mathcal{I}}$ using the distribution of $z_l^{(l)}$, and the geometric mean of $\{z_l^{(l)}\}_{l\in \mathcal{I}}$ to regularize all $\omega$. However, when $v=l$, the regularization term fails, leading to suboptimal performance. 
    \item \textbf{`Cyclic'}, as used in our method, efficiently reuses the permuted posteriors as priors and fully leverages the relationships between views. Since no element remains in its original position, this method maximizes inter-view correspondences and achieves the best performance. Additionally, it avoids extra fusion computation for informational priors by simply calculating the KL divergence of $Z_0$ and $Z_1$.
\end{itemize}

In all, our model outperforms all others, demonstrating the effectiveness of our learning strategy.

\subsection{Coefficients $\beta$ of Regularization Terms}\label{app:b3}

As is common in the VAE literature, the objective function in Eq. (\ref{app:eq3}) can be rewritten as the sum of a reconstruction term and two KL-divergence terms, each weighted by the coefficients $\beta_1$ and $\beta_2$ \citep{higgins2017beta}. 
As shown in Figure \ref{fig:surface}, we conduct a sensitivity analysis on the coefficients to examine how different combinations of $\beta_1$ (for $z$ regularization) and $\beta_2$ (for $\omega$ regularization) impact clustering accuracy. Since $\omega$ is deterministically computed from fusion of $z$, both terms essentially act as regularization on $z$, but with different objectives. 
The figure shows that the best performance is achieved when $\beta_1 = 5.0$ and $\beta_2 = 2.5$, resulting in a clustering accuracy of 90.76. This indicates that a moderate weighting of both regularization terms strikes an optimal balance between promoting smooth latent space transitions (\textbf{Inter-View Translatability}) and ensuring latent space consistency (\textbf{Consensus Concentration}).

Interestingly, the performance degrades when either $\beta_1$ or $\beta_2$ is set too high, as seen when $\beta_1 = 10$ or $\beta_2 = 10$. This likely results from overly constraining the latent space, which could reduce the flexibility needed for effective inter-view transformations. Conversely, lower values of $\beta_1$ and $\beta_2$ (e.g., $\beta_1 = 1.0$, $\beta_2 = 1.0$) show suboptimal performance, indicating insufficient regularization and thus poorer structure in the latent space.

\begin{figure}[t]
\centering
\includegraphics[width=0.5\columnwidth]{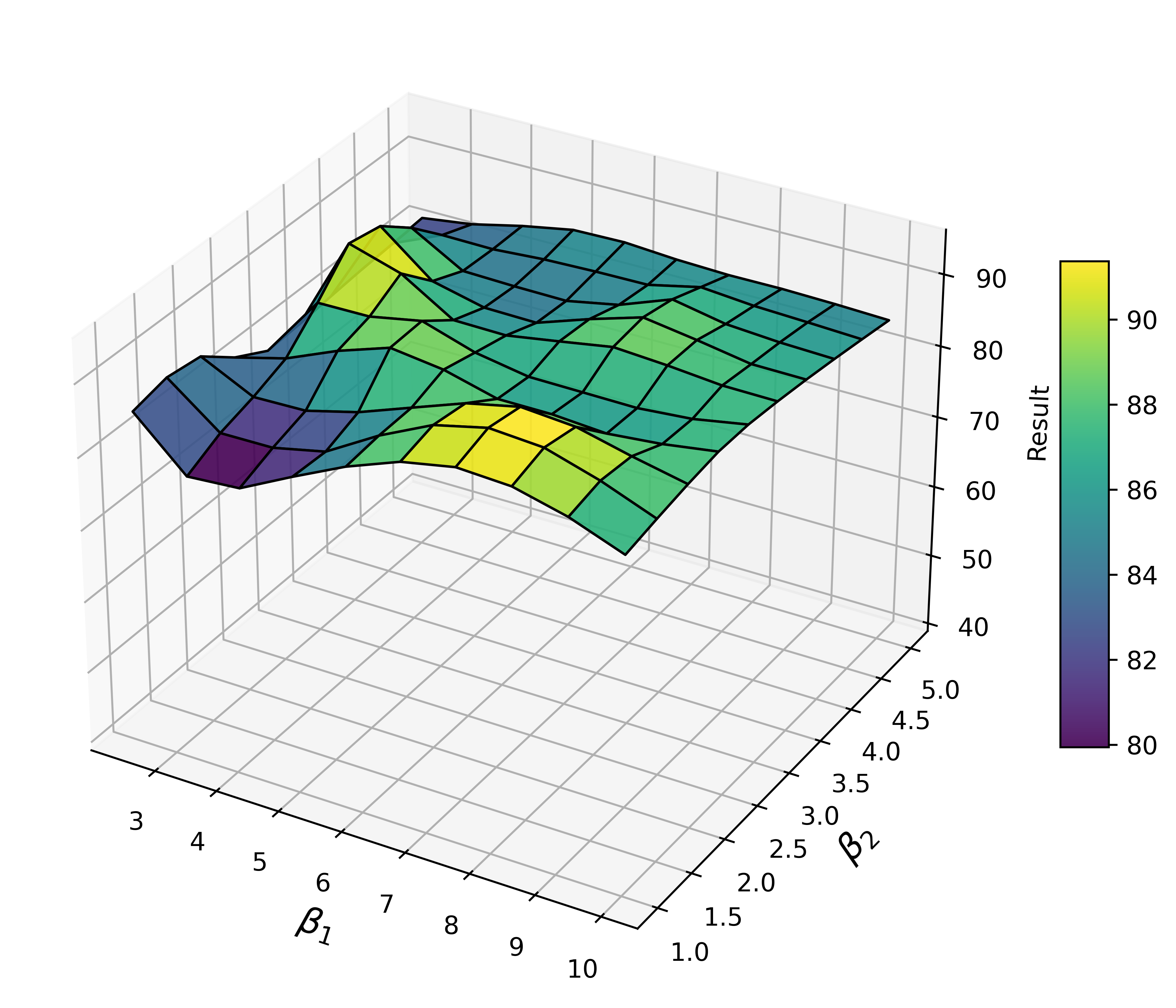} 
\caption{\textbf{Sensitivity analysis of the coefficients for the two regularization terms in the ELBO.} $\beta_1$ controls the regularization for $z$ and $\beta_2$ controls the regularization for $\omega$. Clustering accuracy (ACC) is reported on the Handwritten dataset with missing rate 0.5, averaged over five training runs.}
\label{fig:surface}
\end{figure}

% In summary, the results demonstrate that a balanced trade-off between the two regularization terms is crucial for achieving optimal clustering performance in incomplete multi-view learning.

% \begin{table}[t]
%   \centering
%   \caption{\textbf{Sensitivity analysis of the coefficients for the two regularization terms in the ELBO.} $\beta_1$ controls the regularization for $z$ and $\beta_2$ controls the regularization for $\omega$. Clustering accuracy (ACC) is reported on the Handwritten dataset with a missing rate of 0.5, averaged over five training runs.}
%   \resizebox{0.45\linewidth}{!}{
%     \begin{tabular}{c|cccc}
%     \toprule
%     \textbf{$\beta_2$\textbackslash{}$\beta_1$} & \textbf{1.0} & \textbf{2.5} & \textbf{5.0} & \textbf{10} \\
%     \midrule
%     \textbf{1.0} & 81.30 & 85.39 & 82.85 & 85.45 \\
%     \textbf{2.5} & 79.58 & 80.79 & \textbf{90.76} & 88.03 \\
%     \textbf{5.0} & 77.99 & 81.64 & 85.62 & 84.59 \\
%     \textbf{10} & 79.76 & 79.98 & 81.42 & 84.47 \\
%     \bottomrule
%     \end{tabular}%
%     }
%   \label{tab:beta}%
% \end{table}%

\subsection{The Choice of Latent Representation Dimensions}\label{app:b4}

\begin{table}[t]
  \centering
  \caption{\textbf{Clustering results across different representation dimensions ($z$).} Clustering accuracy (mean ± standard deviation) is shown for various latent dimensions with a missing rate of $\eta=0.5$, averaged over five runs. The green cell marks the selected dimension.}
  \resizebox{0.68\linewidth}{!}{
    \begin{tabular}{c|cccc}
    \toprule
    \textbf{Dataset} & \multicolumn{4}{c}{\textbf{Dimension of latent representation}} \\
    \midrule
    \multirow{2}[4]{*}{\textbf{Handwritten}} & \textbf{8} & \cellcolor[rgb]{ .757,  .941,  .784}\textbf{16} & \textbf{24} & \textbf{32} \\
\cmidrule{2-5}          & 85.88±6.15 & \cellcolor[rgb]{ .757,  .941,  .784}90.76±4.69 & 88.32±5.99 & 86.41±5.63 \\
    \midrule
    \multirow{2}[4]{*}{\textbf{CUB}} & \textbf{8} & \cellcolor[rgb]{ .757,  .941,  .784}\textbf{16} & \textbf{32} & \textbf{64} \\
\cmidrule{2-5}          & 67.30±5.36 & \cellcolor[rgb]{ .757,  .941,  .784}74.20±3.21 & 73.03±3.97 & 69.40±2.95 \\
    \midrule
    \multirow{2}[4]{*}{\textbf{Scene15}} & \textbf{8} & \cellcolor[rgb]{ .757,  .941,  .784}\textbf{16} & \textbf{32} & \textbf{64} \\
\cmidrule{2-5}          & 42.58±1.31 & \cellcolor[rgb]{ .757,  .941,  .784}45.28±1.44 & 44.44±0.69 & 43.74±1.79 \\
    \midrule
    \multirow{2}[4]{*}{\textbf{Reuters10}} & \textbf{8} & \cellcolor[rgb]{ .757,  .941,  .784}\textbf{16} & \textbf{32} & \textbf{48} \\
\cmidrule{2-5}          & 52.60±2.66 & \cellcolor[rgb]{ .757,  .941,  .784}53.67±2.88 & 53.89±5.32 & 53.01±4.26 \\
    \midrule
    \multirow{2}[4]{*}{\textbf{SensIT Vehicle}} & \textbf{8} & \textbf{16} & \cellcolor[rgb]{ .757,  .941,  .784}\textbf{32} & \textbf{64} \\
\cmidrule{2-5}          & 70.33±0.48 & 69.95±0.28 & \cellcolor[rgb]{ .757,  .941,  .784}70.21±0.09 & 70.00±0.21 \\
    \bottomrule
    \end{tabular}%
    }
  \label{tab:d1}%
\end{table}%

In this section, we examine the choice of latent variable dimensions across different datasets and compare the selection of the first $k$ dimensions for encoding shared information in the PolyMNIST and MVShapeNet datasets. For the clustering task, since it requires high consistency between views to preserve shared information and reveal the underlying category structure, we use 100\% of the dimensions of $z$ to encode shared information, meaning $k=d$, with $\omega$ having the same dimensionality as $z$. Table \ref{tab:d1} presents the clustering results for various latent representation dimensions across different datasets. Rather than always selecting the best-performing dimension, we strike a balance between model complexity and the inherent structure of the dataset for simplicity. % For example, on the Handwritten and CUB datasets, we observe that a 16-dimensional latent space yields the best clustering performance, whereas for the SensIT Vehicle dataset, a 32-dimensional latent space offers slightly better results. The differences in optimal dimensionality reflect the intrinsic structure of the data: datasets with higher complexity and more diverse features generally benefit from a higher latent dimensionality.

Table \ref{tab:d2} reports the reconstruction performance on the PolyMNIST and MVShapeNet datasets as we vary the proportion of dimensions ($k$) used to encode shared information. The results show that performance remains robust across different ratios, allowing us to select the ratio based on each dataset's characteristics. For PolyMNIST, which exhibits greater variability between views due to diverse background styles and colors, a smaller $k/d$ ratio (50\%) for shared information encoding is more effective. This is because only a small portion of the pixel data (representing the digit) is consistent across views, while the rest varies significantly. In contrast, MVShapeNet, with more consistent views (mainly different angles of the same object against a plain background), is better suited to a higher $k/d$ ratio (75\%).

\begin{table}[t]
  \centering
  \caption{\textbf{Reconstruction results across different shared feature dimensions ($k$) in the $d$-dimensional $z$.} The models are trained on the PolyMNIST and MVShapeNet datasets with a missing rate of $\eta=0.5$. SSIM is reported on the complete test set. The green cell highlights the selected dimension.}
  \resizebox{0.65\linewidth}{!}{
    \begin{tabular}{c|c|cccc}
    \toprule
    \textbf{k/d}  & \textbf{d}     & \textbf{25\%}  & \textbf{50\%}  & \textbf{75\%}  & \textbf{100\%} \\
    \midrule
    PolyMNIST & 96    & 0.6023 & \cellcolor[rgb]{ .855,  .949,  .816}0.6188 & 0.6137 & 0.6134 \\
    MVShapeNet & 256   & 0.8383 & 0.8384 & \cellcolor[rgb]{ .855,  .949,  .816}0.8381 & 0.8374 \\
    \bottomrule
    \end{tabular}%
    }
  \label{tab:d2}%
\end{table}%

\section{Implementation Details of Our Method}

\subsection{Network Architectures}

For the experiments in Section \ref{sec:4.1}, we employed fully connected neural networks similar to those used in previous studies. The choice of architecture depends on the dataset's characteristics, such as input dimension and number of samples. We used one of the following network configurations:
\begin{itemize} 
\item $d_v$ - 256 - 256 - 1024 - $d$ 
\item $d_v$ - 512 - 512 - 1024 - $d$ 
\item $d_v$ - 512 - 512 - 2048 - $d$ 
\item $d_v$ - 1024 - 1024 - 2048 - $d$ 
\end{itemize}
Here, $d_v$ is the input dimension, and $d$ is the latent dimension. Each fully connected layer is followed by a ReLU activation function to introduce non-linearity, except for the final output layer, which uses a Tanh activation function to normalize the latent representation within a bounded range.

For the experiments in Section \ref{sec:4.2}, we utilized simple convolutional neural networks (CNNs) combined with fully connected layers, ReLU activations, and Tanh activations. In both the PolyMNIST and MVShapeNet datasets, each view is encoded and decoded using separate Variational Autoencoders (VAEs). The architecture of a single VAE's encoder and decoder is summarized in Table \ref{tab:network}. The same architecture is applied to each of the five views. The filter sizes (\$filter\_tuple\$) for PolyMNIST (input shape [3, 28, 28]) and MVShapeNet (input shape [3, 64, 64]) are $(64, 128, 256, 512)$ and $(64, 128, 256, 512, 512)$, respectively.

% Table generated by Excel2LaTeX from sheet 'Sheet1'
\begin{table}[htbp]
  \centering
\caption{CNN architecture used for each VAE on the PolyMNIST and MVShapeNet datasets.}
    \begin{tabular}{c|p{8.915em}|p{25.915em}}
    \toprule
    \multicolumn{1}{l|}{\textbf{Module}} & \textbf{Layer Type} & \textbf{Description} \\
    \midrule
    \multirow{4}[2]{*}{Encoder} & Conv2D Layers & 4 layers with increasing filter sizes \$filter\_tuple\$ and stride of 2. Each followed by ReLU activation. \\
          & Flatten & Flatten output to a vector of size 2048. \\
          & Linear & Fully connected layer to latent dimension ($d$), followed by ReLU. \\
          & Linear (Mu, LogVar) & Separate layers for mean and log variance of latent distribution. \\
    \midrule
    \multirow{3}[2]{*}{Decoder} & Linear & Fully connected layer from concatenated latent vector ($d+k$) to output size of 2048. \\
          & ConvTranspose2D Layers & 4 layers with decreasing filter sizes \$Reverse\_filter\_tuple\$, followed by ReLU activations. \\
          & ConvTranspose2D (Output) & Final layer with 3 filters, followed by Tanh activation to output RGB image. \\
    \bottomrule
    \end{tabular}%
  \label{tab:network}%
\end{table}%

The correspondence between each pair of views is modeled using a simple fully connected network with linear layers and LeakyReLU activations. For all datasets in Section \ref{sec:4.1}, we use the architecture $d$ - 128 - 256 - 128 - $d$. For MVShapeNet, we use $d$ - 256 - 512 - 256 - $d$, and for PolyMNIST, we use $d$ - 256 - 1024 - 256 - $d$. The larger network for PolyMNIST reflects the greater differences between its views, requiring more parameters to effectively model the correspondence.

For more implementation details, please refer to the code provided in the supplementary material.

%In this section, we provide a more detailed explanation of the technical implementation of our method, including how we generate masks for the entire dataset under different missing rates (\ref{app:C1}), how we apply cyclic permutations to the indices of available views, and how we perform permutations on a batch of incomplete samples with irregular missing views (\ref{app:C2}). \ref{app:C3} provides the overall pipeline of our method, along with a algorithm box..

\subsection{Missing Patterns and Mask Generation}\label{app:C1}

We follow standard practices in incomplete multi-view learning, where missing-view masks are generated before training to inform the model of the missing patterns. Following the approach outlined in the public repository by \citet{DSIMVC}, for a dataset with $L$ views and a missing rate $\eta$, we randomly select $\eta \times \text{len(dataset)}$ samples to be incomplete. For each of these samples, we randomly remove between 1 and $L-1$ views, ensuring that every incomplete sample retains at least one view while missing at least one. The missing-view masks (e.g., `00101') are generated for the entire dataset before training and stored in a ``fingerprint'' file for each missing rate $\eta$. When the dataset is loaded, the corresponding masks are applied as long as $\eta$ is specified. This ensures consistent missing patterns across different models, enabling fair and reproducible evaluations.

\subsection{Pre-computation of Cyclic Permutation Indices and Batch Processing}\label{app:C2}

Given an index set $A=\{1, 2, \dots, n\}$, Sattolo's Algorithm can efficiently generate a cyclic permutation, as outlined in Algorithm \ref{alg:sattolo}. By leveraging stack operations, all possible cyclic permutations of the index set can be precomputed.

For incomplete multi-view datasets with fixed missing patterns, each sample has a distinct mask (e.g., `01101'), indicating the available views $\{2,3,5\}$. Precomputing the cyclic permutations for these masks reduces computational overhead during training. Treating missing views as fixed points (e.g., views 1 and 4 in this case) ensures that the permuted index sets maintain the same length (e.g., $\left[1,\boldsymbol{2},\boldsymbol{3},4,\boldsymbol{5}\right]\Rightarrow\left[1,\boldsymbol{3},\boldsymbol{5},4,\boldsymbol{2}\right]$), simplifying batch operations. These precomputed permutations are also stored in the ``fingerprint'' file for easy retrieval.

During training, these precomputed permutation indices can be directly applied to arrays containing values for each view, enabling efficient rearrangement of data. For a dataset with $L$ views, cyclic permutations are applied to the latent variables of each sample by permuting elements within each column of an $L \times L$ matrix $Z_0$. As shown in Figure \ref{fig:a1}, the precomputed cyclic permutations are chosen, structured as an $L \times L$ array and are used to reorder $Z_0$ into $Z_1$. Since these permutations are stored with the dataset, the transformation process can be efficiently executed in batches using indexing, allowing for fast and seamless operations during training.

\subsection{Overall Training Pipeline of Our Method}\label{app:C3}

The general pipeline of our method is outlined in Algorithm \ref{alg:multiViewVAE}. The goal is to learn multiple encoders, decoders, and inter-view correspondences. To implement more easily, we organize the variables into a matrix $Z_0$. The pipeline consists of the following four steps for an $L$-view dataset:

\begin{enumerate}
    \item \textbf{Self-view Encoding and Cross-view Transformation for $Z_0$}: We employ $L$ encoders to obtain $z_v^{(v)} \sim \mathcal{N}(\mu(x_v^{(v)}), \Sigma(x_v^{(v)}))$, $v \in \{1, \ldots, L\}$. Then, using all inter-view correspondences $\{f_{lv}\}$, we compute $z_v^{(l)} \sim \mathcal{N}(f_{lv} \circ \mu(x_v^{(v)}), f_{lv} \circ \Sigma(x_v^{(v)}))$, for $l \neq v$, where $l, v \in \{1, \ldots, L\}$. These values are organized into a matrix $Z_0 = \big[z_v^{(l)}\big]_{L \times L}$, where the row index denotes the subscript and the column index denotes the superscript.
    \item \textbf{Cyclic Permutation for $Z_1$}: We apply $L$ pre-generated cyclic permutations $\{\sigma_l\}_{l=1}^L$, provided with the dataset, to form an permuted index array of size $L \times L$. This array is used to transform $Z_0$ into $Z_1 = \big[z_{\sigma_l(v)}^{(l)}\big]_{L \times L}$. 
    \item \textbf{Fusion for $\omega$ with Complete-view Partition}: For both $Z_0$ and $Z_1$, we fuse each row by computing the geometric mean of the $k$-dimensional marginal distributions, resulting in $\Omega_0 = \{\omega_l^{0}\}_{l=1}^L$ and $\Omega_1 = \{\omega_l\}_{l=1}^L$. We also extract the diagonal elements from $Z_0$ and $Z_1$, corresponding to $\{z_l^{(l)}\}_{l=1}^L$ and $\{z_{\sigma_l(l)}^{(l)}\}_{l=1}^L$, respectively, where $\sigma_l(l)\neq l$.
    \item \textbf{Decoding with $\omega$ and $z$}: We use $z$ to reconstruct the view indicated by its superscript, combining it with a consensus variable $\omega$. Specifically, we apply $L$ decoders to reconstruct the views as follows: $\hat{x}^{(l)} = \text{Decoder}([\omega_l^{0}, z_l^{(l)}])$ and $\Tilde{x}^{(l)} = \text{Decoder}([\omega_l, z_{\sigma_l(l)}^{(l)}])$.
    \item \textbf{Masked ELBO Update}: We apply masks to ignore terms related to missing views and update the model using the ELBO objective (Eq. \ref{app:eq3}).

\end{enumerate}

%%%%%%%%%%%%%%%%%%%%%algorithm box%%%%%%%%%%%%%%%%%%%%%%
\begin{algorithm}[h]
    \caption{Multi-View Permutation of VAEs for Incomplete Data}
    \label{alg:multiViewVAE}
    \begin{algorithmic}[1]
        \Require Incomplete Multi-View dataset $\{ \mathbb{X}_i \}_{i=1}^n$, latent variable dimension $d$, and shared feature dimension $k$ ($k \leq d$).
        \Ensure MVAEs with inter-view correspondences.
    \end{algorithmic}
    \noindent Initialize parameters $\{\phi_v\}$, $\{\theta_v\}$, and $\{\alpha_{lv}\}$, where $v, l \in \left[L\right]$.
    \begin{algorithmic}[1]
        \While{maximum epochs not reached}
            \For{$v$ in available views}
                \State Generate $(\mu_v^{(v)}, \Sigma_v^{(v)}) = \text{Encoder}(x^{(v)})$ through $v$-th encoder;
                \State Compute $(\mu_v^{(l)}, \Sigma_v^{(l)})$ using $f_{lv}$ for $l \neq v$, $l \in \{1, 2, \dots, L\}$;
            \EndFor
            \State For derived $\mathbf{Z}_0$, apply cyclic permutations to obtain $\mathbf{Z}_1$;
            \For{$n$ in available views}
                \State Calculate geometric mean $(\alpha_n, \Lambda_n)$ from $\mathcal{C}_n$ and sample $\omega_n$;
                \State Sample $z_*^{(n)}$ from single-view cell $\mathcal{S}_n$;
                \State Generate $\hat{x}^{(n)} = \text{Decoder}(\omega_n, z_*^{(n)})$ through $n$-th decoder;
            \EndFor
            \State Update $\{\phi_v\}$, $\{\theta_v\}$, and $\{\alpha_{lv}\}$ by maximizing the ELBO (Eq. \ref{app:eq3});
        \EndWhile
    \end{algorithmic}
\end{algorithm}
%%%%%%%%%%%%%%%%%%%%%%%%%%%%%%%%%%%%%%%%%%%%%%%%%%%%%%%%

\subsection{Inference of Missing Views with Available Views}\label{app:C4}

During inference, given an incomplete input sample $\{ x^{(v)} \}_{\mathcal{I}}$, we first derive the latent variable set $\{z_v^{(l)}\}_{(v,l) \in \mathcal{I} \times [L]}$. For each view $l \in \{1,\ldots,L\}$, we compute an averaged latent variable $\bar{z}^{(l)}$ by taking the geometric mean of the distributions from the available views, $\bar{z}^{(l)} = \text{Geometric Average}(\{z_v^{(l)}\}_{v \in \mathcal{I}})$. This process provides a complete set of $L$ latent variables $\{\bar{z}^{(l)}\}_{l=1}^L$.

Next, we compute a consensus variable $\omega$ from this complete set by taking the geometric mean of their marginal distributions over the first $k$ dimensions. Finally, we concatenate $\omega$ with each $\bar{z}^{(l)}$ and use the $l$-th decoder to reconstruct $x^{(l)}$, allowing us to infer all views.

\section{Comparative Methods}

\subsection{Incomplete Multi-View Learning Methods in Section \ref{sec:4.1}}\label{app:D1}

For the methods listed in Table \ref{tab:a2}, we follow their original implementations as provided in the respective public repositories. For methods that are limited to handling only two views, we train them on all possible combinations of two views and report the best performance for each dataset. % DCCA and DCCAE\footnote{\url{https://github.com/Michaelvll/DeepCCA}}, DIMVC\footnote{\url{https://github.com/SubmissionsIn/DIMVC}}, DSIMVC\footnote{\url{https://github.com/Gasteinh/DSIMVC}}, Completer\footnote{\url{https://github.com/XLearning-SCU/2021-CVPR-Completer}}, CPSPAN\footnote{\url{https://github.com/jinjiaqi1998/CPSPAN}}, ICMVC\footnote{\url{https://github.com/liunian-Jay/ICMVC}}, and DVIMC\footnote{\url{https://github.com/tkkxgh/DVIMC-mindspore}}, we adhere to their original implementations, with the specific modeling details provided in Table \ref{tab:a2}. 

\begin{table}[t]
  \centering
  \caption{\textbf{Overview of comparative methods for incomplete multi-view learning.} This table summarizes the key modeling details of the methods used for comparison.}
   \resizebox{\linewidth}{!}{
    \begin{tabular}{ccccp{18.335em}}
    \toprule
    \textbf{Method} & \textbf{Year} & \textbf{Venue} & \textbf{\#View} & \textbf{Description} \\
    \midrule
DCCA  & 2013  & ICML  & Two   & Baseline, CCA-based method \\
    DCCAE & 2015  & ICML  & Two   & Baseline, CCA-based method \\
    DIMVC & 2022  & AAAI  & Mutiple & High-dimensional nonlinear mapping, EM \\
    DSIMVC & 2022  & ICML  & Mutiple & Bi-level opimization \\
    Completer & 2023  & CVPR/TPAMI & Two   & Information theory, contrastive learning \\
    CPSPAN & 2023  & CVPR  & Mutiple & Prototype alignment \\
    ICMVC & 2024  & AAAI  & Two   & GNN, high-confidence guiding \\
    DVIMC & 2024  & AAAI  & Mutiple & GMM+VAEs \\
    \bottomrule
    \end{tabular}%
     }
  \label{tab:a2}%
\end{table}%

\begin{table}[t]
  \centering
  \caption{\textbf{Comparative methods for different multi-modal VAEs.} This table outlines key details of various multi-modal VAE approaches used for comparison, including the latent variable dimension, joint posterior fusion strategy, and prior setting.}
   \resizebox{\linewidth}{!}{
    \begin{tabular}{cccccp{19em}}
    \toprule
    \textbf{Method} & \textbf{Year} & \textbf{Venue} & \textbf{Dimension} & \textbf{Joint Posterior} & \textbf{Prior and KL divergence term} \\
    \midrule
    mVAE  & 2018  & NeurIPS & 512   & PoE   & Spherical Gaussian: $\mathcal{N}(0, 1)$ \\
    mmVAE & 2019  & NeurIPS & 512   & MoE   & Laplace distritbution \\
    mmJSD & 2021  & NeurIPS & 512   & PoE/MoE & JS divergence with Geometric Mixture \\
    MoPoE & 2021  & ICLR  & 512   & MoPoE & Spherical Gaussian: $\mathcal{N}(0, 1)$ \\
    MVTCAE & 2021  & NeurIPS & 512   & PoE   & Spherical Gaussian: $\mathcal{N}(0, 1)$, Total Correlation objective resulting in forward KL \\
    mmVAE+ & 2023  & ICLR  & 96/256    & MoE   & Laplace distritbution \\
    \bottomrule
    \end{tabular}%
    }
  \label{tab:a3}%
\end{table}%

\subsection{Multimodal VAEs in Section \ref{sec:4.2}}\label{app:D2}

Table \ref{tab:a3} summarizes the key modeling details of the various multi-modal VAE approaches used in our comparisons, including latent space dimensions, joint posterior fusion strategies, and prior settings. We adhere to the implementations provided in their respective public repositories. For the first five models, we set the latent dimension to 512 for both the PolyMNIST and MVShapeNet datasets. For mmVAE+, we use latent dimensions of 96 for PolyMNIST and 256 for MVShapeNet, matching the settings used in our method. Following its original implementation, 50\% of the latent dimension is allocated to the shared subspace, with the remaining 50\% designated for view-specific subspaces. All methods were trained for 300 epochs across all datasets. % However, mmVAE+ did not perform well on MVShapeNet, regardless of whether the ``Normal'' or ``Laplace'' distribution was used. This may be attributed to mmVAE+ being less suited to multi-view data with varying perspective angles, and it might perform better when paired with a ResNet-based architecture.
% mVAE\footnote{\url{https://github.com/lili-0805/MVAE}}, mmVAE\footnote{\url{https://github.com/iffsid/mmvae/tree/public}}, mmJSD\footnote{\url{https://github.com/thomassutter/mmjsd/tree/main}}, MoPoE\footnote{\url{https://github.com/neurospin-deepinsight/mopoe?tab=readme-ov-file}}, MVTCAE\footnote{\url{https://github.com/gr8joo/MVTCAE/tree/master}}, and mmVAE+\footnote{\url{https://github.com/epalu/mmvaeplus/tree/new_release}}, the modeling details are provided in Table \ref{tab:a3}. 

\section{Dataset Information and Construction Details}\label{app: A5}

%including Handwritten\footnote{\url{https://github.com/hanmenghan/CPM_Nets}}, CUB\footnote{\url{https://github.com/hanmenghan/CPM_Nets}}, Scene15\footnote{\url{https://github.com/XLearning-SCU/2021-CVPR-Completer/tree/main/data}}, Reuters\footnote{\url{https://github.com/XLearning-SCU/2021-CVPR-MvCLN/tree/main/datasets}}, and SensIT Vehicle\footnote{\url{https://www.csie.ntu.edu.tw/~cjlin/libsvmtools/datasets/multiclass.html}}. 

% \footnote{\url{https://github.com/thomassutter/MoPoE}}
%\footnote{\url{https://huggingface.co/datasets/ShapeNet/ShapeNetCore}}

Table \ref{tab:dataset} provides an overview of the statistics for all datasets used in this study, all of which are publicly available through their respective repositories and on the Huggingface website. For PolyMNIST \citep{suttergeneralized}, we present sample images in Figure \ref{fig:minist}.

The MVShapeNet dataset was constructed from the ShapeNet dataset \citep{chang2015shapenet} using the rendering tool developed by Stanford\footnote{\url{https://github.com/panmari/stanford-shapenet-renderer}}. We selected five categories: table, chair, car, airplane, and rifle, with a total of 25155 samples. The number of samples for each category is 8445, 6778, 3514, 4045, and 2373, respectively.

To convert the 3D point cloud data into multiple 2D views, we employed a directional lighting setup. The primary light source was configured as a sun light with shadows disabled, a specular factor of 1.0, and an energy level of 10.0. To ensure consistent lighting across surfaces not directly illuminated by the primary source, a secondary sun light was added with a low energy level of 0.015, positioned 180° relative to the primary light. Both light sources had shadows disabled to maintain uniform illumination. 
The camera was placed at coordinates (0, 1, 0.6), with a focal length of 35mm and a sensor width of 32mm. To capture multiple views, the camera was constrained to track an empty object at the origin, which was rotated in 8° increments (360°/45 steps) around the Z-axis. We selected five views, each taken from viewpoints spaced 45 degrees apart around the front of the object. Example images from the MVShapeNet dataset are shown in Figure \ref{fig:mvshapenet}.

\begin{figure}[t]
\centering
\includegraphics[width=0.9\columnwidth]{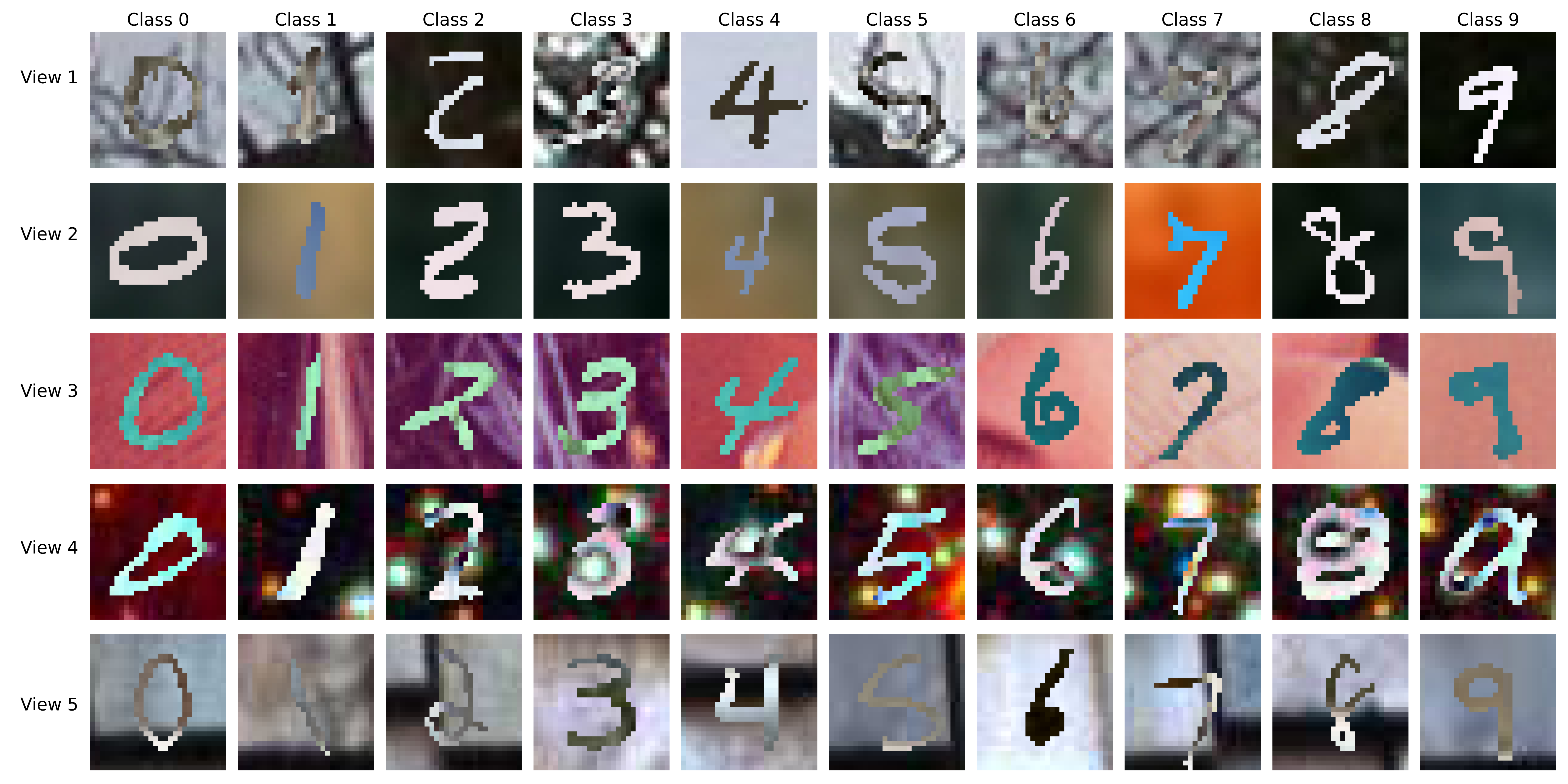}
\caption{Visualization of randomly selected MNIST samples across 10 digit classes (0-9) and 5 modalities (m0-m4). Each column represents a digit class, and each row shows a different modality.}
\label{fig:minist}
\end{figure}

\begin{figure}[t]
\centering
\includegraphics[width=\columnwidth]{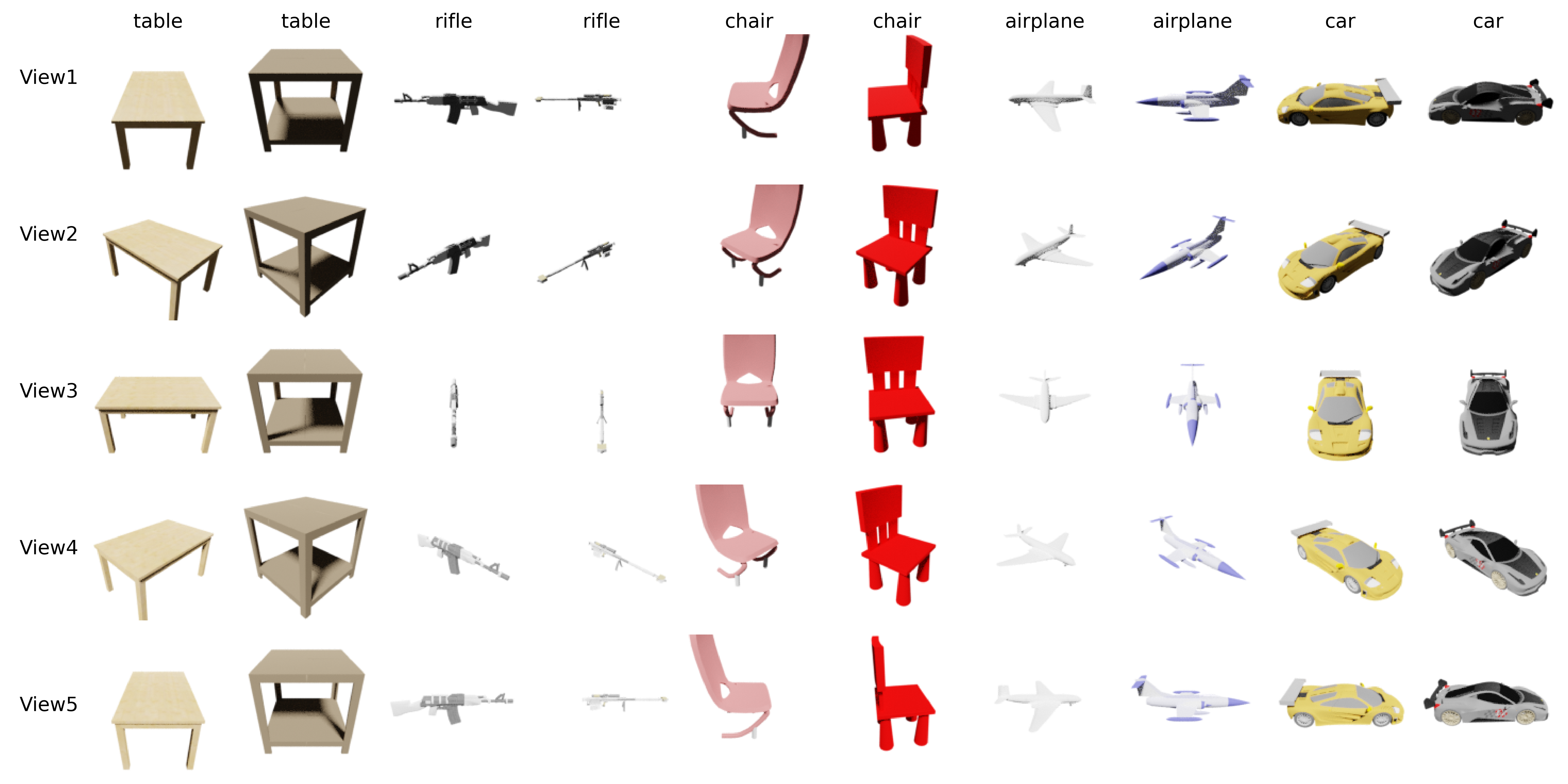}
\caption{Visualization of randomly selected MVShapeNet samples from 5 object categories. Each row shows 5 different views (View1-View5) of two randomly selected samples from each category.}
\label{fig:mvshapenet}
\end{figure}

\end{document}